\documentclass[letterpaper]{article}



\usepackage[preprint,nonatbib]{neurips_2020}



\newif\ifviz
\usepackage[useregional]{datetime2}
\usepackage{amsmath}
\usepackage{amsthm}
\usepackage{amssymb}
\usepackage{algorithm, algorithmic}
\usepackage[utf8]{inputenc} 
\usepackage[T1]{fontenc}    
\usepackage{hyperref}       
\usepackage{booktabs}       
\usepackage{amsfonts}       
\usepackage{nicefrac}       
\usepackage{microtype}      
\usepackage{bm}
\usepackage{graphicx}
\usepackage{wrapfig}
\usepackage{tikz}
\usepackage{multirow}

\usepackage{mdframed}

\usepackage{xcolor}
\usepackage{color}
\usepackage{stmaryrd}
\usepackage{etoolbox}

\newenvironment{nstabbing}
  {\setlength{\topsep}{0pt}%
   \setlength{\partopsep}{0pt}%
   \tabbing}
  {\endtabbing}
\newtheorem{thm}{Theorem}
\newtheorem{theorem}[thm]{Theorem}
\newtheorem{lemma}[thm]{Lemma}
\newtheorem{corollary}[thm]{Corollary}
\newtheorem{proposition}[thm]{Proposition}
\newtheorem{definition}[thm]{Definition}

\title{Online Multitask Learning with Long-Term Memory}

%

\author{%
	Mark Herbster, Stephen Pasteris, Lisa Tse \\
	Department of Computer Science\\
	University College London\\
	London \\
	United Kingdom\\
	\texttt{(m.herbster|s.pasteris|l.tse)@cs.ucl.ac.uk} \\
}



\newcommand{\norm}[1]{\left\lVert#1\right\rVert}

\newcommand{\ceil}[1]{{\lceil{#1}\rceil}}

\newcommand{\sign}{\operatorname{sign}}

\newcommand{\argmin}{\operatornamewithlimits{argmin}}
\newcommand{\into}{\rightarrow}
\newcommand{\bs}[1]{\boldsymbol{#1}}

\newcommand{\con}[2]{\left[#1;#2\right]}

\newcommand{\cl}[1]{\bs{q}^{#1}}

\bmdefine{\mat}{\tilde{U}}

\newcommand{\la}{\bs{L}}

\newcommand{\one}{\bs{1}}

\newcommand{\wem}[1]{\bm{\tilde{W}}^{#1}}
\newcommand{\id}{\bs{I}}

\newcommand{\di}[1]{\eta^t}
\newcommand{\pdi}[1]{\bar{\eta}^t}

\newcommand{\q}[1]{#1}


\newcommand{\Pm}{m}
\newcommand{\m}{\q{\Pm}}

\newcommand{\mar}{\gamma}
\newcommand{\marh}{\gamma}

\newcommand{\Exp}{\mathbb{E}}

\newcommand{\nmset}{\mathbb{M}}
\newcommand{\uset}{\mathbb{U}}

\newcommand{\trace}[1]{\operatorname{tr}({#1})}
\newcommand{\trc}{\operatorname{tr}}
\newcommand{\tr}[1]{\operatorname{tr}\left(#1\right)}

\newcommand{\lr}{\eta}

\newcommand{\y}[1]{y_{#1}}

\newcommand{\dotp}[1]{{\langle{#1}\rangle}}
\newcommand{\vn}[1]{\norm{#1}}
\newcommand{\tnorm}[1]{\|{#1}\|_1}
\newcommand{\maxnorm}[1]{{\|{#1}\|_{\text{max}}}}
\newcommand{\maxnormsqr}[1]{{\|{#1}\|^2_{\text{max}}}}

\newcommand{\gm}{\Re^{m\times n}}

\newcommand{\gmd}{\Re^{m \times d}}
\newcommand{\gnd}{\Re^{n \times d}}
\newcommand{\sm}{\{-1,1\}^{m\times n}}

\newcommand{\ld}{p}
\newcommand{\bikl}{\mathbb{B}_{k,\ell}^{m,n}}
\newcommand{\biml}{\mathbb{B}_{\ld,\dis}^{\ld,T}}

\newcommand{\trans}{{\scriptscriptstyle \top}}
\newcommand{\N}{\mathbb{N}}

\newcommand{\bone}{{\bm 1}}
\newcommand{\bzero}{{\bm 0}}

\newcommand{\bw}{{\bm w}}

\newcommand{\ba}{\bm{a}}

\newcommand{\bc}{\bm{c}}
\newcommand{\be}{\bm{e}}

\newcommand{\bh}{\bm{h}}

\newcommand{\bp}{\bm{p}}
\newcommand{\bq}{\bm{q}}

\newcommand{\bu}{\bm{u}}
\newcommand{\bvv}{\bm{v}}
\newcommand{\bmv}{\bm{v}}
\newcommand{\bxx}{\bm{x}}

\newcommand{\bz}{\bm{z}}

\newcommand{\bA}{\bm{A}}

\newcommand{\bC}{\bm{C}}
\newcommand{\bD}{\bm{D}}

\newcommand{\bH}{\bm{H}}
\newcommand{\bI}{\bm{I}}
\newcommand{\bK}{\bm{K}}
\newcommand{\bLb}{\bm{L^{\circ}}}
\newcommand{\bL}{\bm{L}}
\newcommand{\bM}{\bm{M}}
\newcommand{\bN}{\bm{N}}
\newcommand{\bP}{\bm{P}}
\newcommand{\bQ}{\bm{Q}}
\newcommand{\bR}{\bm{R}}
\newcommand{\bS}{\bm{S}}
\newcommand{\bU}{\bm{U}}

\newcommand{\bW}{\bm{W}}

\newcommand{\bX}{\bm{X}}
\newcommand{\bY}{\bm{Y}}

\newcommand{\cA}{\mathcal{A}}

\newcommand{\cE}{\mathcal{E}}

\newcommand{\cG}{\mathcal{G}}
\newcommand{\cH}{\mathcal{H}}
\newcommand{\cI}{\mathcal{I}}
\newcommand{\cJ}{\mathcal{J}}

\newcommand{\cM}{\mathcal{M}}
\newcommand{\cN}{\mathcal{N}}
\newcommand{\cO}{\mathcal{O}}
\newcommand{\cOT}{\mathcal{\tilde{O}}}

\newcommand{\cT}{\mathcal{T}}

\newcommand{\cV}{\mathcal{V}}
\newcommand{\cX}{\mathcal{X}}

\newcommand{\onehalf}{\frac{1}{2}}

\newcommand{\PP}{\bP}
\newcommand{\QQ}{\bQ}
\newcommand{\nP}{\hat{\bP}} 
\newcommand{\nQ}{\hat{\bQ}} 

\newcommand{\R}{\bR} 
\newcommand{\RN}{\bM} 
\newcommand{\CN}{\bN} 
\newcommand{\RNfunc}{\mathcal{\bM^+}} 
\newcommand{\CNfunc}{\mathcal{\bN^+}} 
\newcommand{\RRN}{\mathcal{R}_{\RN}} 
\newcommand{\RCN}{\mathcal{R}_{\CN}} 
\newcommand{\IRRN}{\mathcal{R}_{\cM}} 
\newcommand{\IRCN}{\mathcal{R}_{\cN}} 
\newcommand{\RAD}{\mathcal{R}} 
\newcommand{\U}{\bU}
\newcommand{\Uset}{{\{-1,1\}^{m \times n}}}

\newcommand{\Ustar}{\U^*}
\newcommand{\Hstar}{\bH^*}

\newcommand{\RNM}{\mathcal{N}}
\newcommand{\BEM}{\mathcal{B}}
\newcommand{\SPDM}{\bS_{++}}

\bmdefine{\mat}{\tilde{U}}
\newcommand{\XT}{\bm{\tilde{X}}^t}

\newcommand{\scp}{\mathcal{\widehat{\mathcal{D}}}}
\newcommand{\upD}{\widehat{\mathcal{D}}}

\newcommand{\RKD}{\bar{\bK}}
\newcommand{\CKD}{\bar{\bP}}
\newcommand{\qD}{\mathcal{D}^{\marh}_{\RN,\CN}}
\newcommand{\qDA}{\mathcal{D}^{\marh'}_{\RN,\CN}}
\newcommand{\qDH}{\mathcal{D}^{1/\sqrt{m}}_{\RKD,\CKD}}
\newcommand{\DITF}{\mathcal{D}^{\star}_{\RN,\CN}}


\newcommand{\lzo}{\mathcal{L}_{01}}
\newcommand{\lhi}{\mathcal{L}_{\text{hi}}}

\newcommand{\swi}{k}

\newcommand{\dis}{m} 
\newcommand{\disj}{m} 
\newcommand{\nL}{s}
\newcommand{\li}{\ell}
\newcommand{\liV}{\bm{\ell}}
\newcommand{\gti}{\tau}
\newcommand{\gltF}{\bm{\sigma}}

\newcommand{\E}{\mathbb{E}}
\newcommand{\bset}{\{-1,1\}}
\newcommand{\hfin}{\cH_{\operatorname{fin}}}
\newcommand{\hK}{\cH_{K}}
\newcommand{\hKbin}{\cH^{\!(\bxx)}_{K}}
\newcommand{\gbh}{\bh^*}
\newcommand{\MW}{{\textsc{MW}}}
\newcommand{\OGD}{$\text{\textsc{OGD}}_K$}
\newcommand{\IMCSI}{{\textsc{IMCSI}}}
\newcommand{\CI}{C}
\newcommand{\CIH}{\hat{C}}

\newcommand{\KRH}{\hat{X}_K^2}
\newcommand{\PTK}{P}
\newcommand{\PTKT}{\tilde{P}}
\newcommand{\PTKTT}{\tilde{P}^{\tv,T^1,\ldots,T^{\nL}}}
\newcommand{\KPH}{{\hat{X}_P^2}}
\newcommand{\bPTK}{{\bm P}^{\gti}}
\newcommand{\Htau}{H^{\tau}}
\newcommand{\htau}{h^{\tau}}
\newcommand{\Hit}{H^{i}_t}
\newcommand{\Hits}{H^{i}_{t+1}}
\newcommand{\hit}{h^{i}_t}
\newcommand{\hits}{h^{i}_{t+1}}
\newcommand{\xtau}{x^{\gti}}
\newcommand{\xit}{x_t^i}
\newcommand{\xtup}{x^{\upsilon}}
\newcommand{\tmit}{{\smash{{}^i_t}}}
\newcommand{\tmits}{{\smash{{}^{\ \ i \ }_{t+1}}}}
\newcommand{\XTau}{\bm{\tilde{X}}^\tau}
\newcommand{\lv}{{\bm \ell}}
\newcommand{\ltau}{{\ell^{\tau}}}
\newcommand{\lups}{{\ell^{\upsilon}}}
\newcommand{\tasks}{{\nL}}
\newcommand{\tv}{{\bm \ell}}
\newcommand{\cXfin}{\cX^{\text{fin}}}
\newcommand{\yhl}{\yh^i_t}
\newcommand{\yl}{y^i_t}
\newcommand{\yit}{\yl}
\newcommand{\ytau}{y^{\tau}}
\newcommand{\yhtau}{{\hat{y}^{\tau}}}
\newcommand{\ybtau}{\bar{y}^{\tau}}

\newcommand{\yb}{\bar{y}}
\newcommand{\ybt}{\bar{y}_t}
\newcommand{\yht}{\hat{y}_t}

\newcommand{\yh}{\hat{y}}
\newcommand{\Yrv}{Y_t}
\newcommand{\Yrvtau}{Y^\tau}
\newcommand{\bawit}{{\bm w}^i_t}
\newcommand{\bawits}{{\bm w}^i_{t+1}}

\newcommand{\ldim}{\text{Ldim}}

\newcommand{\msize}{\gamma}

\newcommand{\sip}{\mu} 
\newcommand{\ww}{\theta}
\newcommand{\nsw}[1]{S_{#1}}
\newcommand{\gtu}{T}
\newcommand{\sw}{\phi}
\newcommand{\nex}{n}




\newcommand{\com}{\gbh}

\newcommand{\siz}[1]{|\dis(#1)|}

\newcommand{\ntr}{T}
\newcommand{\nta}{\nL}

\newcommand{\bal}[1]{\bs{v}^{#1}}
\newcommand{\al}[2]{v^{#1}_{#2}}
\newcommand{\simp}{\Delta_{\nex}}
\newcommand{\lset}{[0,1]^{\nex}}

\newcommand{\blo}[1]{\bs{c}^{#1}}
\newcommand{\lo}[2]{c^{#1}_{#2}}
\newcommand{\lot}[1]{C^{#1}}
\newcommand{\na}[1]{[#1]}

\newcommand{\bpe}{\bs{\pi}}
\newcommand{\pe}[1]{\pi_{#1}}
\newcommand{\baw}[1]{\bs{\alpha}^{#1}}
\newcommand{\aw}[2]{\alpha^{#1}_{#2}}

\newcommand{\bwaw}{\bs{\delta}}
\newcommand{\waw}[1]{\delta_{#1}}

\newcommand{\saw}{\beta}
\newcommand{\bnaw}{\bs{\epsilon}}
\newcommand{\naw}[1]{\epsilon_{#1}}

\newcommand{\str}{\mathcal{Z}}
\newcommand{\ltu}[1]{T^{#1}}
\newcommand{\ti}{t}
\newcommand{\trm}[2]{\lambda_{#1,#2}}
\newcommand{\trn}[1]{\lambda_{#1}}
\newcommand{\crs}{\mathcal{C}}
\newcommand{\crw}[1]{\rho(#1)}
\newcommand{\trw}[1]{\bar{\rho}(#1)}
\newcommand{\ci}{\omega}
\newcommand{\hye}{\mathcal{E}}
\newcommand{\ei}{i}
\newcommand{\ran}{R_{\com}}
\newcommand{\rcr}[1]{\xi_{#1}}
\newcommand{\idf}[1]{\left[#1\right]}
\newcommand{\ufu}{u}
\newcommand{\upd}[1]{u(#1)}
\newcommand{\was}[1]{\mathcal{W}_{#1}}
\newcommand{\liv}[1]{\ell^{#1}}

\newcommand{\gltf}[1]{\sigma(#1)}
\newcommand{\pv}[1]{p_{#1}}
\newcommand{\nxt}[2]{\bar{\sigma}(#1,#2)}
\newcommand{\qf}[2]{q_{#1,#2}}
\newcommand{\benl}{\bs{f}}
\newcommand{\enl}[1]{f_{#1}}
\newcommand{\strt}[1]{\overline{\str}_{#1}}
\newcommand{\crst}[1]{\overline{\crs}_{#1}}
\newcommand{\rtr}[2]{[#1~|~#2]}
\newcommand{\nc}[1]{b_{#1}}
\newcommand{\bsrt}[1]{\bs{g}^{#1}}
\newcommand{\srt}[2]{g^{#1}_{#2}}

\newcommand{\sbl}[3]{\enl{#3}^{#1,#2}}
\newcommand{\bsbl}[2]{\benl^{#1,#2}}

\newcommand{\pit}[2]{{\pi}^{#1}_{#2}}
\newcommand{\alt}[4]{\alpha^{#1,#2,#3}_{#4}}
\newcommand{\tq}{\hat{q}}
\newcommand{\btt}[1]{\beta^{#1}}
\newcommand{\dlt}[2]{\delta^{#1}_{#2}}
\newcommand{\ept}[2]{\epsilon^{#1}_{#2}}
\newcommand{\zet}[1]{\zeta_{#1}}

\newcommand{\cop}[1]{\ufu^{#1}}

\newcommand{\hi}{k}
\newcommand{\rel}[2]{d(#1,#2)}

\newcommand{\zed}{A}

\newcommand{\bpit}[1]{\bpe^{#1}}

\newcommand{\sh}[2]{h^{#1}_{#2}}


\newcommand{\novikoff}{Novikoff62}

\newcommand{\mcside}{HPT19}
\newcommand{\wmref}{LW94}
\newcommand{\MWM}{AHK12}
\newcommand{\hedge}{FS97}
\newcommand{\specials}{FSSW97}
\newcommand{\switchgraphmem}{ourJMLR15}
\newcommand{\msktoregret}{BPS09}
\newcommand{\littleopt}{litt88}
\newcommand{\fixedshare}{herbster1998tracking}
\newcommand{\MPP}{bousquet2003tracking}
\newcommand{\bayessleep}{adamskiy2012putting}
\newcommand{\freezing}{koolen2010freezing}
\newcommand{\tracklin}{herbster2001tracking}
\newcommand{\coinflipping}{OP16}
\newcommand{\agg}{AS90}
\newcommand{\RKHSaron}{a-trk-50}
\newcommand{\OLbook}{nicolobook}
\newcommand{\OCObook}{Shalev-Shwartz2011}
\newcommand{\maxnormrank}{LRSTS10} 

\newcommand{\MJH}[1]{{\color{purple}{{\rm\bfseries MJH:[}{\sffamily #1}{\rm\bfseries ]~}}}}
\newcommand{\LT}[1]{{\color{purple}{{\rm\bfseries LT:[}{\sffamily #1}{\rm\bfseries ]~}}}}
\newcommand{\HC}[1]{{}}

\renewcommand{\crs}{\overline{\mathcal{C}}}
\renewcommand{\str}{\overline{\mathcal{X}}}
\renewcommand{\alt}[4]{w^{#1,#2,#3}_{#4}}
\newcommand{\balt}[3]{\bs{w}^{#1,#2,#3}}
\newcommand{\talt}[2]{\bs{w}^{#1}_{#2}}
\renewcommand{\ci}{\alpha}
\renewcommand{\baw}[1]{\bs{w}^{#1}}
\renewcommand{\aw}[2]{w^{#1}_{#2}}
\renewcommand{\pv}[1]{p^{#1}}
\renewcommand{\crst}[1]{\mathcal{C}^{#1}}
\renewcommand{\strt}[1]{\mathcal{X}^{#1}}
\renewcommand{\nc}[1]{b^{#1}}
\renewcommand{\crw}[1]{\bar{\rho}(#1)}
\renewcommand{\trw}[1]{{\rho}(#1)}
\renewcommand{\ran}{\dis(\com)}
\renewcommand{\qf}[2]{q^{#1}_{#2}}
\renewcommand{\was}[1]{\mathcal{W}^{#1}}
\renewcommand{\nsw}[1]{\swi(#1)}
\renewcommand{\ei}{e}
\renewcommand{\li}{i}
\renewcommand{\com}{\bs{z}^*}
\renewcommand{\sh}[2]{{z}^{#1}_{#2}}
\newcommand{\ehb}{\kappa}
\newcommand{\tbc}[2]{C^{#2}_{#1}}
\newcommand{\blc}[2]{\bs{c}^{#2}_{#1}}
\newcommand{\lc}[3]{c^{#2}_{#1,#3}}
\renewcommand{\hi}{h}

\newcommand{\arv}[1]{\bar{v}^{#1}}
\newcommand{\arc}[1]{\bar{c}^{#1}}
\newcommand{\barv}[2]{\bar{v}^{#1}(#2)}
\newcommand{\barc}[2]{\bar{c}^{#1}(#2)}
\newcommand{\inprod}[2]{\langle#1,#2\rangle}

\renewcommand{\str}{\overline{\mathcal{T}}}
\renewcommand{\strt}[1]{\mathcal{T}^{#1}}
\renewcommand{\crst}[1]{\mathcal{A}^{#1}}
\renewcommand{\crs}{\overline{\mathcal{A}}}

\newcommand{\sht}{\mathcal{T}}
\newcommand{\sha}{\mathcal{A}}

\newcommand{\cset}[1]{S_{#1}}
\renewcommand{\was}[1]{\mathcal{W}_{#1}}
\newcommand{\bine}[1]{H(#1)}
\newcommand{\Bbine}[1]{H\!\left(#1\right)}

\allowdisplaybreaks

\newtoggle{shrink}
\toggletrue{shrink}
\begin{document}

\maketitle
\begin{abstract}
We introduce a novel online multitask setting.  In this setting each task is partitioned into a sequence of segments that is unknown to the learner.  Associated with each segment is a hypothesis from some hypothesis class.  We give algorithms that are designed to exploit the scenario where there are many such segments but significantly fewer associated hypotheses.  We prove regret bounds that hold for any segmentation of the tasks and any association of hypotheses to the segments. In the single-task setting this is equivalent to {\em switching with long-term memory} in the sense of~\cite{\MPP}.   
We provide an algorithm that predicts on each trial in time linear in the number of hypotheses when the hypothesis class is finite.  We also consider infinite hypothesis classes from reproducing kernel Hilbert spaces for which we give an algorithm whose per trial time complexity is cubic in the number of cumulative trials.  
In the single-task special case this is the first example of an efficient regret-bounded switching algorithm with long-term memory for a non-parametric hypothesis class.  
\end{abstract}
\section{Introduction}\label{sec:intro}
We consider a model of 
online prediction in a non-stationary environment with multiple interrelated tasks.  Associated with each task is an asynchronous data stream.   As an example, consider a scenario where a team of drones may need to decontaminate an area of toxic waste.  In this example, the tasks correspond to drones.  Each drone is receiving a data stream from its sensors.  The data streams are non-stationary but interdependent as the drones are travelling within a common site.  
At any point in time, a drone receives an instance $x$ and is required to predict its label $y$. 
The aim is to minimize mispredictions. As is standard in regret-bounded learning we have no statistical assumptions on the data-generation process.  Instead, we aim to predict well relative to some hypothesis class of predictors.
Unlike a standard regret model, where we aim to predict well in comparison to a single hypothesis, we instead aim to predict well relative to a completely unknown sequence of hypotheses in each task's data stream, as illustrated by the ``{\em coloring}'' in Figure~\ref{fig:model}.  Each {\em mode} (color) corresponds to a distinct hypothesis from the hypothesis class. A {\em switch} is said to have occurred whenever we move between modes temporally within the same task.

Thus in task~1, there are three modes and four switches. We are particularly motivated by the case that a mode once present will  possibly recur multiple times even within different tasks, i.e., $\text{``modes''} \ll \text{``switches.''}$  
We will give algorithms and regret bounds  
for finite hypothesis classes (the ``experts'' model~\cite{\agg,\wmref,\OLbook}) and for infinite non-parametric Reproducing Kernel Hilbert Space (RKHS)~\cite{\RKHSaron} hypothesis classes. 
\begin{figure}[ht]
\centering
\begin{minipage}[h]{0.48\linewidth}
\vspace{.13in}
\includegraphics[width=.98\linewidth]{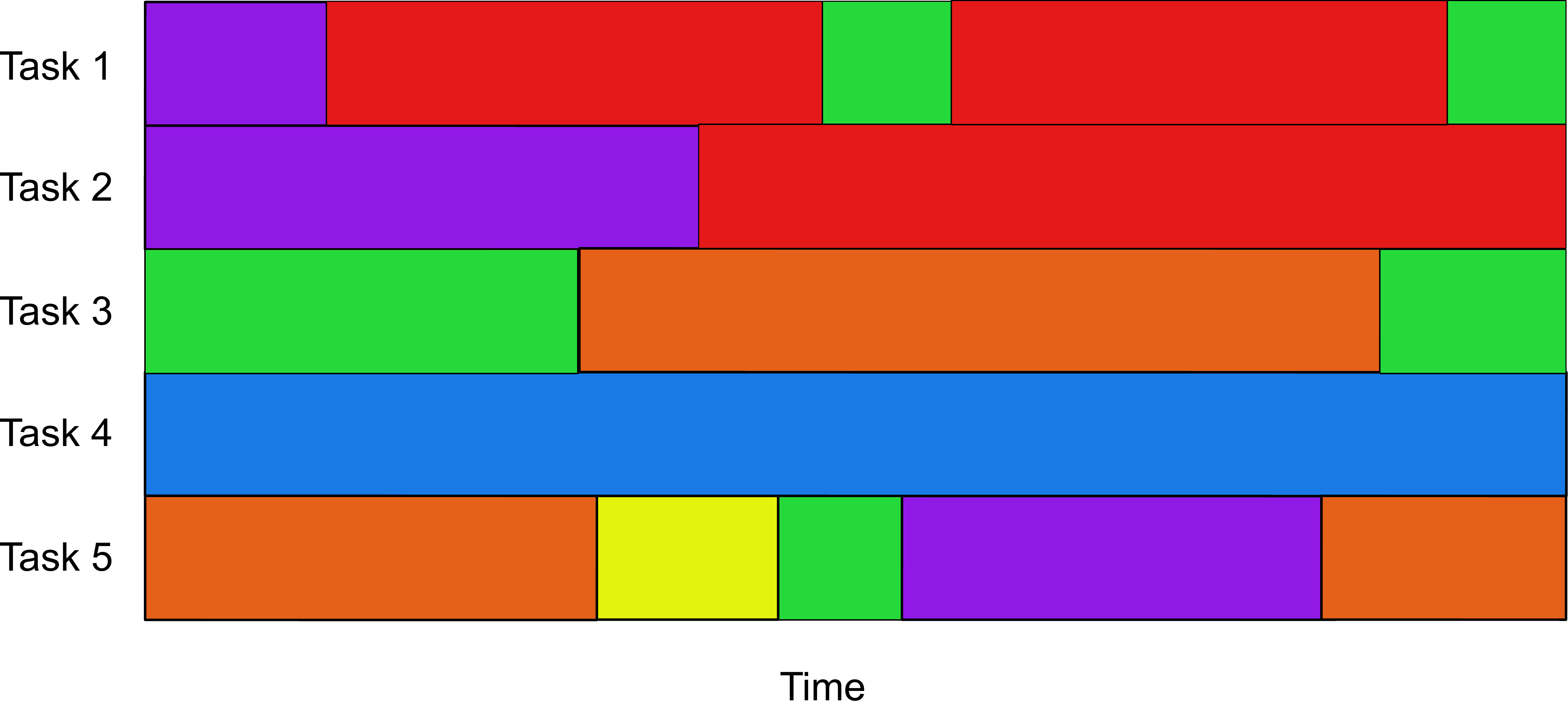} \vspace{0in}
\caption{\small A Coloring of Data Streams ($5$ tasks, $6$ modes, and $11$ switches).}\label{fig:model}
\end{minipage}
\begin{minipage}[h]{0.45\linewidth}
{\small
\begin{mdframed}
\begin{nstabbing}
For \= \=$\gti=1$ to $T$ do\\
\>\> Receive task $\ltau \in [\nL]$\,. \\
\>\> Set $i \leftarrow \ltau;\, t\leftarrow\gltF(\gti)$\,.  \\
\>\> Receive instance \hspace{.15cm} \= $\xtau \equiv  x^{i}_t \in \cX$\,.\\
\> \>Predict \>$\yhtau  \equiv \yhl\in \{-1,1\}$\,.\\
\>\> Receive label \> $\ytau  \equiv \yl \in \bset$\,.\\
\>\> Incur Loss \> $\lzo(\ytau,\yhtau)$\,.
\end{nstabbing}
\end{mdframed}
}
\caption{\small The Switching Multitask Model}\label{fig:mtp}
\end{minipage}
\end{figure}

The paper is organized as follows.  In the next section, we introduce our formal model for online switching multitask learning.  
In doing so we provide a brief review of some related online learning results which enable us to provide a prospectus for attainable regret bounds. This is done by considering the bounds achievable by non-polynomial time algorithms.  
We then provide a brief survey of related work as well as our notational conventions.  In Sections~\ref{sec:fin} and~\ref{sec:ker} we provide algorithms and bounds for finite hypothesis classes and RKHS hypothesis classes, respectively.  Finally, we provide a few concluding remarks in Section~\ref{sec:dis}.  The supplementary appendices contain our proofs.
\section{Online Learning with Switching, Memory, and Multiple Tasks}\label{sec:formalmodel}
We review the models and regret bounds for online learning in the single-task, switching, and switching with memory models as background for our multitask switching model with memory.

In the single-task online model a {\em learner} receives data sequentially so that on a trial $t=1,\ldots,T$:\\ 1) the learner receives an instance $x_t\in\cX$ from the {\em environment}, then 2) predicts a label $\yht\in\{-1,1\}$, then 3) receives a label from an environment $y_t\in \{-1,1\}$ and then 4) incurs a {\em zero-one} loss $\lzo(y_t,\yht) := [y_t \ne \yht]$.  There are no probabilistic assumptions on how the environment generates its instances or their labels; it is an arbitrary process which in fact may be adversarial.  The only restriction on the environment is that it does not ``see'' the learner's $\yht$ until after it reveals $y_t$. 
The learner's aim will be to  compete with a hypothesis class of predictors $\cH \subseteq \bset^\cX$ so as to minimize its {\em expected regret}, $R_T(h) := \sum_{t=1}^T \E [\lzo(y_t,\yht)] - \lzo(y_t,h(x_t))$ for every hypothesis $h\in\cH$, where the expectation is with respect to the learner's internal randomization.

In this paper we will consider two types of hypothesis classes: a finite set of hypotheses $\hfin$, and a set $\hK$ induced by a kernel $K$.   A ``multiplicative weight'' (\MW) 
algorithm~\cite{\MWM} 
that achieves a regret bound\footnote{Technically, when we say that an algorithm {\em achieves a bound}, it may be that the algorithm depends on a small set of parameters which we have then assumed are ``tuned'' optimally.}
of the form 
\begin{equation}\label{eq:exp}
R_T(h) \in \cO\left(\sqrt{\log(|\hfin|)T}\right) \quad (\forall h\in\hfin)
\end{equation}
was given in~\cite{CFHHSW97} for finite hypothesis classes. This is a special case of the framework of ``prediction with expert advice'' introduced in~\cite{AS90,\wmref}.  Given a reproducing kernel $K:\cX\times\cX\into\Re$ we denote the induced norm of the reproducing kernel Hilbert space (RKHS) $\hK$ as $\norm{\cdot}_{K}$  (for details on RKHS see~\cite{a-trk-50} also Appendix~\ref{app:rkhsrev}).
Given an instance sequence $\bxx := (x_1,\ldots,x_T)$, we let $\hKbin := \{h\in\hK : h(x_t) \in \bset,\forall t\in [T]\}$
denote the functions in $\hK$ that are binary-valued on the sequence.  
An analysis of online gradient descent (\OGD) with the hinge loss, kernel $K$ and randomized prediction~\cite[see e.g., Ch. 2 \& 3]{\OCObook} (proof included in Appendix~\ref{app:perc} for completeness) gives an expected regret bound of 
\begin{equation}\label{eq:ogd}
R_T(h)  \in \cO\left(\sqrt{\norm{h}^2_K X_K^2 T}\right) \quad (\forall h\in\hKbin)\,,
\end{equation}
where $X_K^2 \ge \max_{t\in [T]} K(x_t,x_t)$.

In the switching single-task model the hypothesis becomes a sequence of hypotheses $\bh=(h_1,h_2,\ldots,h_T)\in\cH^T$ and the regret is $R_T(\bh) :=\sum_{t=1}^T \E [\lzo(y_t,\yht)] - \lzo(y_t,h_t(x_t))$.   Two parameters of interest are the number of {\em switches} $\swi := \sum_{t=1}^{T-1} [h_t \ne h_{t+1}]$ and the number of {\em modes} $\dis := |\cup_{t=1}^{T} \{h_t\} |$, i.e., the number of the distinct hypotheses in the sequence.  In this work we are interested in {\em long-term memory}, that is, algorithms and bounds that are designed to exploit the case of $\dis \ll \swi$.

The methodology of~\cite{\fixedshare} may be used to derive an expected regret bound for  $\hfin$ in the switching single-task model of the form
\iftoggle{shrink}{
$R_T(\bh) \in \cO(\sqrt{(\swi \log(|\hfin|) + \swi \log({T}/{\swi}))T})$.
}{
$R_T(\bh) \in \cO\left(\sqrt{(\swi \log(|\hfin|) + \swi \log \frac{T}{\swi})T}\right)$. 
}
Freund in~\cite{freundopen} posed an open problem to improve the results of~\cite{\fixedshare} in the case of long-term memory ($\dis \ll \swi$).
Freund gave counting arguments that led to an exponential-time algorithm with a regret bound of
\iftoggle{shrink}{ 
$R_T(\bh) \in \cO(\sqrt{(\dis \log(|\hfin|) + \swi \log \dis  + \swi \log({T}/{\swi}))T})$. 
}{ 
$R_T(\bh) \in \cO\left(\sqrt{(\dis \log(|\hfin|) + \swi \log \dis  + \swi \log \frac{T}{\swi} )T}\right)$.
}
In~\cite{bousquet2003tracking} an efficient algorithm was given with nearly this bound, except for a small additional  additive ``$T \log\log T$'' term under the square root.  For the hypothesis class $\hKbin$
we may give non-memory bounds of the form 
$R_T(\bh) \in \cO(\sqrt{\swi \max_t \norm{h_t}^2_K X_K^2 T})$
by using a simple modification~\cite{\tracklin} of \OGD\ (see Appendix~\ref{app:perc}).  To the best of our knowledge there are no previous long-term memory bounds for $\hKbin$ (however see the discussion of~\cite{\switchgraphmem} in Section~\ref{sec:rw}); these will be a special case of our multitask model, to be introduced next.
\subsection{Switching Multitask Model}
In Figure~\ref{fig:mtp} we illustrate the protocol for our multitask model.  
The model is essentially the same as the switching single-task model, except that we now have $\nL$ tasks.  
On each (global) trial $\tau$ the environment reveals the active task $\ltau\in[\nL]$.  The ordering of tasks chosen by the environment is arbitrary, and therefore we may  switch tasks on every (global) trial $\tau$.  We use the following notational convention:
(global time) $\tau \equiv  \tmit$ (local time) where $i = \ltau$, $t = \gltF(\tau)$ and $\gltF(\tau) := \sum_{j=1}^{\tau} [\ell^j = \ltau]$.  Thus $\xtau\equiv\xit$, $\ytau\equiv \yit$, etc., where the mapping is determined implicitly by the task vector $\tv\in [\nL]^T$.
Each task $i\in [\nL]$ has its own data pair (instance, label) sequence $(x^{i}_1,y^{i}_1),\ldots,(x^{i}_{T^i},y^{i}_{T^i})$ where $T=T^1 +\ldots +T^{\nL}$.    
The {\em multitask hypotheses multiset} is denoted as 
$\gbh = (h^1,\ldots,h^T) \equiv (h^1_1,\ldots,h^1_{T^1},\ldots,h^{\nL}_{1},\ldots,h^{\nL}_{T^{\nL}})\in\cH^T$.  
In the multitask model we denote the number of switches as $\swi(\gbh) := \sum_{i=1}^\nL \sum_{t=1}^{T^i-1} [\hit \ne \hits]$, the set of modes as $\dis(\gbh) := \cup_{i=1}^{\nL} \cup_{t=1}^{T^i}\{h^i_t\}$ and the multitask regret as $R_T(\gbh) :=\sum_{i=1}^\nL \sum_{t=1}^{T^i} \Exp[\lzo(\yl,\yhl)] - \lzo(\yl,h^i_t(x^i_t))$.   In the following, we give motivating upper bounds based on exponential-time algorithms induced by ``meta-experts.''  We provide a lower bound with respect to $\hKbin$ in Proposition~\ref{prop:lb}.

The idea of ``meta-experts'' is to take the base class of hypotheses and to construct a class of ``meta-hypotheses'' by combining the original hypotheses to form new ones, and then apply an \MW\ algorithm to the constructed class; in other words, we reduce the ``meta-model'' to the ``base-model.''  In our setting, the base class is $\hfin\subseteq \bset^{\cX}$ and our meta-hypothesis class will be some $\cH' \subseteq \bset^{\cX'}$ where $\cX' := \{(x,t,i) : x\in\cX, t\in [T^i],i\in[\nL]\}$.
To construct this set we define $\bar{\cH}(\swi,\dis,\nL,\hfin,T^1,\ldots,T^\nL) := \{(h^1_1,\ldots,h^{\nL}_{T^{\nL}})=\bar{\bh}\in\hfin^{T} : \swi = \swi(\bar{\bh}), \dis = |\dis(\bar{\bh})|\}$ and then observe that  for each $\bar{\bh}\in \bar{\cH}$ we may define an $h' : \cX'\into\bset$ via $h'((x,t,i)) := {h}^i_t(x)$, where ${h}^i_t$ is an element of $\bar{\bh}$.   We thus construct $\cH'$ by converting each $\bar{\bh}\in \bar{\cH}$ to an $h'\in\cH'$. 
Hence we have reduced the switching multitask model to the single-task model with respect to $\cH'$.  We proceed to obtain a bound by observing that the cardinality of $\bar{\cH}$ is bounded above by $\binom{T-\nL}{\swi} \binom{\nex}{\dis}\dis^\nL(\dis-1)^\swi$ where $\nex = |\hfin|$.
If we then substitute into~\eqref{eq:exp} and then further upper bound we have
\iftoggle{shrink}{
\begin{equation}\label{eq:ineffin}
R_T(\gbh) \in  \cO\left(\sqrt{(\dis \log({\nex}/{\dis})+\nL \log\dis + \swi \log\dis+\swi \log(({T-\nL})/{\swi}))T}\right)\,,
\end{equation}
}{
\begin{equation}\label{eq:ineffin}
R_T(\gbh) \in  \cO\left(\sqrt{\left(\dis \log\left(\frac{\nex}{\dis}\right)+ (\nL + \swi )\log\dis+\swi \log\left(\frac{T-\nL}{\swi}\right)\right)T}\right)\,,
\end{equation}
}
for any $\gbh\in{\hfin^T}$ 
such that $\swi = \swi(\gbh)$ and  $\dis = |\dis(\gbh)|$.  The drawback is that the algorithm requires exponential time.  In Section~\ref{sec:fin} we will give an algorithm whose time to predict per trial is $\cO(|\hfin|)$ and whose bound is equivalent up to constant factors.

We cannot directly adapt the above argument to obtain an algorithm and bound for $\hKbin$ since the cardinality, in general, is infinite, and additionally we do not know $\bxx$ in advance.  However, the structure of the argument is the same.  Instead of using hypotheses from $\hKbin$ as building blocks to construct meta-hypotheses, we use multiple instantiations of an online algorithm for $\hKbin$ as our building blocks.
We let $\cA_K := \{a[1],\ldots,a[\dis]\}$ denote our set of $\dis$ instantiations that will act as a surrogate for the hypothesis class $\hKbin$.
We then construct the set,
$\bar{\cA}_K(\swi,\dis,\nL,T^1,\ldots,T^\nL) := \{\bar{\ba}\in\cA_K^{T} : \swi = \swi(\bar{\ba}), \dis = |\dis(\bar{\ba})|\}$.  Each  $\bar{\ba}\in \bar{\cA}_K$ now defines a meta-algorithm for the multitask setting.
That is, given an online multitask data sequence $(x_1^i,y_1^i), \ldots, (x^j_{T^j},y^j_{T^j})$, each element of $\bar{\ba}$ will ``color'' the corresponding data pair with one of the $\dis$ instantiations (we will use the function 
$\alpha:\{(t,i) : t\in [T^i],i\in[\nL]\}\into [\dis]$
to denote this mapping with respect to $\bar{\ba}$).  Each instantiation will {\em  receive as inputs  only the online sequence of the data pairs corresponding to its ``color''}; likewise, the prediction of meta-algorithm $\bar{\ba}$ will be that of the instantiation active on that trial.    We will use as our base algorithm \OGD.  Thus for the meta-algorithm $\bar{\ba}$ we have from~\eqref{eq:ogd}, 
\begin{equation}\label{eq:a} 
\sum_{i=1}^\nL \sum_{t=1}^{T^i} \Exp[\lzo(\yl,\yhl)]  \le  \sum_{i=1}^\nL \sum_{t=1}^{T^i}  \lzo(\yl,h[\alpha(\smash{{}^i_t})](x^i_t)) + \sum_{j=1}^\dis \cO\left(\sqrt{\norm{h[j]}^2_K X^2 T^j}\right)\,
\end{equation}
for any received instance sequence 
$\bxx \in\cX^T$ and for any $h[1],\ldots,h[\dis]\in\hKbin$.  
The $\MW$ algorithm~\cite{\wmref,\agg,\OLbook} does not work just for  hypothesis classes; more generally, it works for collections of algorithms.  Hence we may run the $\MW$ as a meta-meta-algorithm to combine all of the meta-algorithms $\bar{\ba}\in\bar{\cA}_K$.  
Thus by substituting the loss for  each meta-algorithm $\bar{\ba}$ (the R.H.S. of~\eqref{eq:a}) into~\eqref{eq:exp} and using the upper bound $\binom{T-\nL}{\swi}\dis^\nL(\dis-1)^\swi$
for the cardinality of $\bar{\cA}_K$, we obtain (using upper bounds for binomial coefficients and the inequality $\sum_i \sqrt{p_i q_i}\le\sqrt{(\sum_i p_i)(\sum_i q_i)}$)\,,
\begin{equation}\label{eq:inefker}\textstyle
R_T(\gbh) \in \cO\left(\sqrt{(\sum_{h\in \dis(\gbh)} \norm{h}^2_K X^2_K +\nL \log\dis + \swi \log \dis +\swi \log(({T-\nL})/{\swi}))T}\right)\,,
\end{equation}
for any received instance sequence 
$\bxx \in\cX^T$ and for any $\gbh\in{\hKbin}^T$
such that $\swi = \swi(\gbh)$ and  $\dis = |\dis(\gbh)|$.

The terms $\dis \log({\nex}/{\dis})$  (assuming $\dis \ll \nex$) and $\sum_{h\in \dis(\gbh)} \norm{h}^2_K X^2_K$ may be viewed as  {\em learner complexities}, i.e., the price we ``pay'' for identifying the hypotheses that fit the modes.  A salient feature of long-term memory bounds is that although the data pairs associated with each hypothesis are intermixed in the multitask sequence, we pay  the learner complexity only modestly in terms of potentially leading multiplicative constants.    A switching algorithm without long-term memory ``forgets'' and pays the full price for a mode on every switch or new task.   We gave exponential-time algorithms for $\hfin$ and $\hKbin$ with $\cO(1)$ leading multiplicative constants in the discussion leading to~\eqref{eq:ineffin} and~\eqref{eq:inefker}.
We give efficient algorithms for finite hypothesis classes and RKHS hypothesis classes in Sections~\ref{sec:fin} and~\ref{sec:ker}, with time complexities of $\cO(n)$ and $\cO(T^3)$ per trial, and in terms of learner complexities they gain only  leading multiplicative constants of  $\cO(1)$ and $\cO(\log T)$.
\subsection{Related Work}\label{sec:rw}
In this section we briefly describe other related work in the online setting that considers either {\em switching} or {\em multitask} models.

The first result for switching in the experts model was the WML algorithm~\cite{\wmref} which was generalized in~\cite{\fixedshare}.  There is an extensive literature building on those papers, with some prominent results including~\cite{bousquet2003tracking,wacky,\freezing,fixmir,GLL12,\bayessleep,fixmir,AKCV12,DGS15,alt2017growing,BDS19,ZLDW19}.
Relevant for our model are those papers~\cite{\MPP,\freezing,\bayessleep,fixmir,alt2017growing,BDS19,ZLDW19} that address the problem of long-term memory ($\dis \ll \swi$), in particular~\cite{bousquet2003tracking,\freezing,\bayessleep}. 

Analogous to the problem of long-term memory in online learning is the problem of catastrophic forgetting in artificial neural network research~\cite{MC89,F99}.  That is the problem of how a system can adapt to new information without forgetting the old.  In online learning that is the problem of how an algorithm can both quickly adapt its prediction hypothesis and recall a previously successful prediction hypothesis when needed.  
In the experts model this problem was first addressed by~\cite{bousquet2003tracking}, which gave an algorithm that stores each of its past state vectors, and then at each update mixes these vectors into the current state vector. In~\cite{\freezing}, an algorithm and bounds  were given that extended the base comparision class of experts to include Bernoulli models. 
An improved algorithm with a Bayesian intepretation based on the idea of ``circadian specialists'' was given for this setting in~\cite{\bayessleep}.  
Our construction of Algorithm~\ref{alg:fin} was based on this methodology.

The problem of linear regression with long term memory was posed as an open problem in~\cite[Sec. 5]{\bayessleep}.  Algorithm~\ref{alg:ker} gives an algorithm for linear interpolation in a RKHS with a regret bound that reflects long-term memory.
Switching linear prediction has been considered in~\cite{herbster2001tracking,onlinekernellearning,budget,\switchgraphmem}.  Only~\cite{\switchgraphmem} addresses the issue of long-term memory.  The methodology of~\cite{\switchgraphmem} is a direct inspiration for Algorithm~\ref{alg:ker}.  We significantly extend the result of~\cite[Eq.~(1)]{\switchgraphmem}.  Their result was i) restricted to a mistake as opposed to a regret bound, ii) restricted to finite positive definite matrices and  
iii) in their mistake bound the term analogous to $\sum_{h\in \dis(\gbh)} \norm{h}^2_K X^2_K$ was increased by a multiplicative factor of $\cOT(|\dis(\gbh)|)$, a significantly weaker result.

Multitask learning has been considered extensively in the batch setting, with some prominent early results including~\cite{B95,C97,EP04}.  In the online multitask {\em expert} setting~\cite{Abernethy07,ARB08,SahaRDV11,adamskiy2012putting} considered a model which may be seen as a special case of ours where each task is associated only with a single hypothesis, i.e., no internal switching within a task.   Also in the expert setting~\cite{RAB07,LPS09}  considered models where the prediction was made for all tasks simultaneously.  In~\cite{LPS09} the aim was to predict well relative to a set of possible predefined task interrelationships and in~\cite{RAB07} the interrelationships were to be discovered algorithmically.  The online multitask {\em linear} prediction setting was considered in~\cite{dekel2007online,Cavallanti,hstk17}.  The models of~\cite{Cavallanti,hstk17} are similar to ours, but like previous work in the expert setting, these models are limited to one ``hypothesis'' per task.  In the work of~\cite{dekel2007online}, the predictions were made for all tasks simultaneously through a joint loss function. 
\subsection{Preliminaries}\label{sec:prelim}
For any positive integer $m$, we define $[m] := \left\{1,2,\ldots,m\right\}$. For any predicate $[\mbox{\sc pred}] :=1$ if $\mbox{\sc pred}$ is true and equals 0 otherwise, and for any $x\in \Re$, $[x]_{+} := x [x>0]$.  
We denote the inner product of vectors as both $\bxx,\bw\in\Re^n$ as $\dotp{\bxx,\bw} =\bxx \cdot\bw= \sum_{i=1}^n x_i w_i$, component-wise multiplication $\bxx \odot \bw : = (x_1 w_1,\ldots,x_n w_n)$ and the norm as $\vn{\bw} = \sqrt{\dotp{\bw,\bw}}$.  If $f:\Re\to\Re$ and $\bxx\in\Re^n$ then $f(\bxx):= (f(x_1),\ldots,f(x_n))$.  The $x$th-coordinate vector is denoted $\be_X^x := ([x=z])_{z\in X}$;  we commonly abbreviate this to $\be^x$.   We denote the probability simplex as $\Delta_\cH := \{h\in [0,1]^\cH\} \cap \{h : \sum_{h\in\cH} =1\}$ and set $\Delta_n := \Delta_{[n]}$.   We denote the binary entropy as $\bine{p}:=p\log \frac{1}{p}+(1-p)\log \frac{1}{1-p}$.
If $\bvv \in \Delta_{\cH}$ then $h \sim \bvv$ denotes that $h$ is a random sample from the probability vector $\bvv$ over the set $\cH$.
For vectors $\bp \in \Re^m$ and $\bq \in \Re^n$ we define $\con{\bp}{\bq}\in \Re^{m+n}$ to be the concatenation of $\bp$ and $\bq$, which we regard as a column vector. Hence $\con{\bp}{\bq}^\trans\!{\con{\boldsymbol{\bar \bp}}{\boldsymbol{{\bar \bq}}}}=\bp^\trans \boldsymbol{\bar \bp}+\bq^\trans \boldsymbol{\bar \bq}$. 

 The notation $\bM^+$ and $\sqrt{\bM}$
 denotes the pseudo-inverse and the unique positive square root, respectively, of a positive semi-definite matrix $\bM$.  The trace of a square matrix is denoted by $\trace{\bY} := \sum_{i=1}^n Y_{ii}$ for $\bY\in \Re^{n\times n}$. 
The $m \times m$ identity matrix is denoted $\bI^m$.   
A function $K: \cX\times \cX\into\Re$ is a strictly positive definite (SPD) kernel iff for every finite $X \subseteq \cX$ the matrix $K(x,x')_{x,x'\in X}$ is symmetric and strictly positive definite, for example, the Gaussian kernel.
In addition, we define $\bS^m$ to be the set of $m \times m$ symmetric matrices and let $\bS^m_+$ and $\bS_{++}^m$ be the subset of positive semidefinite and strictly positive definite matrices, respectively. 
We define the squared radius of $\RN \in \bS_+^m$ as $\RRN := \max_{i\in [m]} M^+_{ii}$.   The (undirected) graph Laplacian matrix is defined by $\bL:= \bD-\bA$ where $\bD$ is the degree matrix and $\bA$ is the adjacency matrix.
The corresponding (strictly) positive definite {\em  PDLaplacian} of an $m$-vertex connected graph is 
$\bLb := \la+\RAD_L^{-1}\left(\frac{\one}{\m}\right)\left(\frac{\one}{\m}\right)^\trans$. 

\section{Finite Hypothesis Classes}\label{sec:fin}
\begin{algorithm}[h]
\begin{algorithmic}\small 
\caption{Predicting $\hfin$ in a switching multitask setting. \label{alg:fin}}
\renewcommand{\algorithmicrequire}{\textbf{Parameters:}} 
\REQUIRE $\hfin \subseteq \bset^\cX$; $\nL,\dis,\swi,T\in\N$ 
\\  
\renewcommand{\algorithmicrequire}{\textbf{Initialization:}}   \vspace{.1truecm} 
\REQUIRE $\nex := |\hfin|$;\,$\bpe^1 \leftarrow\frac{\bone}{\nex}$;
$\sip:=\frac{1}{\disj}$;\, 
$\bw^1_1\! =\!\! \cdots\!\! =\! \bw^\nL_1\! \leftarrow\!\sip\bone$
;\, $\ww:=1-\frac{\swi}{\gtu-\nL}$;\, $\sw:=\frac{\swi}{(\dis-1)(\gtu-\nL)}$ and
$ 
\eta:= \sqrt{\left(\dis\log\left(\frac{\nex}{\dis}\right)+\nL\dis\Bbine{\frac{1}{\dis}}+(\gtu-\nL)\Bbine{\frac{\swi}{\gtu-\nL}}
+(\dis-1)(\gtu-\nL)\Bbine{\frac{\swi}{(\dis-1)(\gtu-\nL)}}\right)\frac{2}{T}}
$

\renewcommand{\algorithmicrequire}{\textbf{For}}
\REQUIRE $\gti =1,\dots,T$  \vspace{.1truecm}
\STATE $\bullet$ Receive task $\ltau \in [\nL]$\,. \\ \vspace{.1truecm}
\STATE $\bullet$ Receive $\xtau\in\cX$\,. \\ \vspace{.1truecm}
\STATE $\bullet$ Set $i \leftarrow \ltau;\, t\leftarrow\gltF(\gti)$\,. \\ \vspace{.1truecm}
\STATE $\bullet$ Predict
\begin{align*}
&\bal{\gti}\leftarrow\frac{\bpe^{\gti}\odot\bawit}{\bpe^{\gti}\cdot\bawit},
~~~~~~\hat{h}^{\gti}\sim\bal{\gti},
~~~~~~\yhtau\leftarrow \hat{h}^{\gti}(\xtau)\,.
\end{align*}
\vspace{-.13in}
\STATE $\bullet$ Receive $\ytau \in \bset$\,. \\ \vspace{.1truecm}
\STATE $\bullet$ Update:
\begin{alignat*}{2}
&i)\ \ \ \ \forall h\in\hfin,~\lo{\gti}{h}=\lzo(h(\xtau),\ytau)\quad\quad\quad
&ii)&\ \bwaw\leftarrow\bawit\odot\exp(-\lr\blo{\gti})\\
&iii)\ \saw\leftarrow(\bpe^{\gti}\cdot\bawit)/(\bpe^{\gti}\cdot\bwaw)
&iv)&\ \bnaw\leftarrow \bone-\bawit+\saw\bwaw\\
&v)\ \ \,  \bpe^{\gti+1}\leftarrow\bpe^{\gti}\odot\bnaw
&vi)&\ \bawits\leftarrow(\sw(\bone-\bawit)+\ww\saw\bwaw)\odot\bnaw^{-1}\
\end{alignat*}
\end{algorithmic}
\end{algorithm}

In this section we present the algorithm and the regret bound for finite hypothesis classes, with proofs given  in Appendix~\ref{sec:proof_fin}.
The design and analysis of the algorithm is inspired by~\cite{adamskiy2012putting}, which considers a Bayesian setting where, on each trial, each hypothesis $h$ gives an estimated probability $P(y^{\gti}=\hat{y}|h)$ of the outcome $y^{\gti}$. The idea is for the learner to predict a probability $\hat{P}(y^{\gti}=\hat{y})$ and the loss incurred is the log loss, $\log(1/\hat{P}(y^{\gti}))$. Our algorithm, on the other hand, is framed in the well known ``Allocation'' setting~\cite{\hedge} where the learner must play, on trial $\tau$, a vector $\bal{\gti}\in\Delta_n$ and incurs a loss of $\blo{\gti}\cdot\bal{\gti}$ where all components of $\blo{\gti}$ are in $[0,1]$.

To gain some intuition about the algorithm we observe the following.  
The algorithm maintains and updates the following vectors:  a ``global'' probability vector $\bpe^{\gti}\in\Delta_{\hfin}$ and the ``local''  task weight vectors $\baw{1}_t,\ldots,\baw{\nL}_t\in[0,1]^{\hfin}$. Given an hypothesis $h\in\hfin$, the scalar $\pe{h}^{\gti}$ represents our ``confidence'', on trial $\gti$, that hypothesis $h$ is in $\dis(\gbh)$.  For a given task $i$, hypothesis $h\in\hfin$, and local time $t$, the scalar $\aw{i}{{t,h}}$ represents our confidence that $h=h^i_t$ if we knew that  $h$ was in $\dis(\gbh)$. Putting together, $\pe{h}^{\gti}\aw{i}{\gltF(\gti),h}$ represents our confidence, on trial $\gti$, that $h=h^i_{t}$. The weights $\bpe^{\gti}$ and $\baw{i}_t$ (for tasks $i$) are designed in such a way that, not only do they store all the information required by the algorithm, but also on each trial $\gti$ we need only update $\bpe^{\gti}$ and $\baw{\liv{\gti}}_t$\!.  Thus the algorithm predicts in $\cO(\nex)$ time per trial and requires $\cO(\nL \nex)$ space.  We bound the regret of the algorithm in the following theorem.

\begin{theorem}\label{thm:fin}
The expected regret  of Algorithm~\ref{alg:fin} with parameters $\hfin \subseteq \bset^\cX$; $\nL,\dis,\swi,T\in\N$  and
\[
C := \dis\log\left(\frac{\nex}{\dis}\right)+\nL\dis\Bbine{\frac{1}{\dis}}+(\gtu-\nL)\Bbine{\frac{\swi}{\gtu-\nL}}
+(\dis-1)(\gtu-\nL)\Bbine{\frac{\swi}{(\dis-1)(\gtu-\nL)}}
\]
is bounded above by
\begin{equation*}
\sum_{i=1}^\nL \sum_{t=1}^{T^i} \Exp[\lzo(\yl,\yhl)] - \lzo(\yl,h^i_t(x^i_t))  \le\sqrt{2 C T}
\end{equation*}
 for any $\gbh\in{\hfin^T}$  such that $\swi = \swi(\gbh)$, $\dis \ge |\dis(\gbh)|$, $\dis>1$.  Furthermore,
\[
C\le
\dis\log\left(\frac{\nex}{\dis}\right)+\nL(\log(\dis)+1)+\swi\left(\log(\dis-1)+2\log\left(\frac{\gtu-\nL}{\swi}\right)+2\right)\,.
\]
\end{theorem}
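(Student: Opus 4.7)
The plan is to identify Algorithm~\ref{alg:fin} with a Hedge/specialist algorithm run over an exponentially-large class of ``meta-experts'', then invoke the standard Hedge regret bound on the zero-one loss. A meta-expert $E$ is specified by a mode-set $S\subseteq\hfin$ with $|S|=\dis$ together with, for each task $i\in[\nL]$, a local assignment $\alpha^i:[T^i]\to S$; on global trial $\tau$ with $(\ltau,\gltF(\tau))=(i,t)$ it predicts $\alpha^i(t)(x^i_t)$. I endow this class with a two-level factored prior $P_0(E)=P(S)\prod_{i=1}^\nL \dis^{-1}\prod_{t<T^i} \ww^{\idf{\alpha^i(t)=\alpha^i(t+1)}}\sw^{\idf{\alpha^i(t)\neq\alpha^i(t+1)}}$ with $\ww,\sw$ exactly as in the initialisation and $P(S)$ the uniform prior on $\dis$-subsets of $\hfin$. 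The choice of $\ww,\sw$ as a Bernoulli with switch-rate $\swi/(T-\nL)$ is exactly the one that minimises the two-state Markov neg-log-likelihood; combining the identity $-(T-\nL-\swi)\log\ww-\swi\log\sw=(\gtu-\nL)\Bbine{\swi/(\gtu-\nL)}+\swi\log(\dis-1)$ with standard counting bounds on $P(S)$ and the $\prod_i \dis^{-1}$ factor yields $-\log P_0(E^*)\le C$ for any comparator $\gbh$ realised by a meta-expert $E^*$ with $\swi(\gbh)=\swi$ and $|\dis(\gbh)|\le\dis$, where the extra slack in $C$ over the exact neg-log-prior is absorbed by the entropy-form upper bounds.

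The core of the proof is to verify by induction on $\tau$ that the pair $(\bpe^\tau,(\baw{i}_{\gltF(\tau)})_{i\in[\nL]})$ encodes the Hedge posterior $Q^\tau(E)\propto P_0(E)\exp(-\lr L^{\tau-1}(E))$ in a factored sense: for $(i,t)=(\ltau,\gltF(\tau))$, the $Q^\tau$-marginal probability that $E$'s active hypothesis on trial $\tau$ equals $h$ is $\pe{h}^\tau\aw{i}{t,h}/(\bpe^\tau\!\cdot\!\baw{i}_t)$, which is precisely the sampling distribution $\bal{\tau}$. Tracing through steps (i)--(vi): (ii) applies the Hedge multiplicative loss update $e^{-\lr\blo{\tau}}$ locally to the active task's $\baw{i}$; the ratio $\saw$ in (iii) and the correction vector $\bnaw$ in (iv) are the exact factors needed to push the per-task normaliser into the global vector $\bpe$; (v) performs that absorption so the factorisation survives; and (vi) applies the two-state ``stay with weight $\ww$, switch to each other specific mode with weight $\sw$'' Markov transition to advance $\baw{i}$ from local time $t$ to $t+1$, with the $\bnaw^{-1}$ factor compensating the absorption of (iv)--(v). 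This is the Bayesian specialists/freezing-mixture device of~\cite{adamskiy2012putting,\freezing,bousquet2003tracking} extended to a two-level (mode-identity $\times$ mode-selection) factorisation, and keeping the factored invariant alive through all six micro-steps is the main technical obstacle.

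Once the equivalence is established, the standard Hedge regret bound for $[0,1]$-valued losses under a general prior, $\sum_\tau\blo{\tau}\!\cdot\!\bal{\tau}-\sum_\tau\blo{\tau}(E^*)\le \lr T/2+\lr^{-1}(-\log P_0(E^*))$, combined with the tuning $\lr=\sqrt{2C/T}$ from the initialisation and $-\log P_0(E^*)\le C$, delivers the main bound $\sqrt{2CT}$. The ``furthermore'' estimate on $C$ is a routine entropy computation using the inequality $(1-p)\log\frac{1}{1-p}\le p\log e$, equivalently $nH(p/n)\le p\log(n/p)+p$ in the natural base: one obtains $\nL\dis\Bbine{1/\dis}\le\nL(\log\dis+1)$, $(\gtu-\nL)\Bbine{\swi/(\gtu-\nL)}\le\swi\log((\gtu-\nL)/\swi)+\swi$, and $(\dis-1)(\gtu-\nL)\Bbine{\swi/((\dis-1)(\gtu-\nL))}\le\swi(\log(\dis-1)+\log((\gtu-\nL)/\swi))+\swi$; summing these with the unchanged $\dis\log(\nex/\dis)$ produces exactly the displayed upper bound.
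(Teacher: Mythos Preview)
Your high-level strategy—showing that Algorithm~\ref{alg:fin} is an efficient simulation of a Hedge-type procedure over an exponential family, then invoking a standard regret bound and an entropy calculation—matches the paper.  The ``furthermore'' inequality at the end is also handled correctly via $nH(p/n)\le p\log(n/p)+p$.

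However, the \emph{reduction you describe is not the one the algorithm actually implements}, and this creates two concrete gaps.  The paper does not reduce to Hedge over meta-experts $E=(S,\alpha^1,\dots,\alpha^\nL)$ with a uniform prior on $\dis$-subsets $S$.  Instead it reduces to the \emph{Specialist Hedge} model over pairs $(\ei,\alpha)$ where $\ei\in[\nex]$ is an expert and $\alpha:\str\to\{0,1\}$ is a \emph{circadian} (a product of $\nL$ two-state Markov chains, ``awake''/``asleep'').  On trial $\gti$ only specialists with $\alpha(\ltau,\gltf{\gti})=1$ are awake; the comparator is the \emph{uniform mixture} over the $\dis$ specialists $\{(\ei,\xi_\ei):\ei\in\dis(\gbh)\}$, not a point mass on a single meta-expert.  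The factored invariant actually proved is that the joint mass $q^\gti_\ei(\benl)=\sum_{\alpha}\pv{\gti}(\ei,\alpha)\,\idf{\alpha(\cdot,\nxt{\gti}{\cdot})=\benl}$ factors as $\pi^\gti_\ei\prod_{\li} w^{\li,\gti,\enl{\li}}_\ei$; this is a statement about specialist weights, not about a posterior over mode-sets.  In your framework the subset $S$ couples all tasks, and it is not at all clear that the resulting posterior marginal on the active hypothesis collapses to $\pi^\gti_h\,w^{i}_{t,h}$—indeed the global $\pi$-vector has only $\nex$ entries and cannot represent a distribution over $\binom{\nex}{\dis}$ sets.

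The second gap is quantitative: even granting your equivalence, $-\log P_0(E^*)$ with a uniform prior on $\dis$-subsets is $\log\binom{\nex}{\dis}+\nL\log\dis+(\gtu-\nL)\bine{\tfrac{\swi}{\gtu-\nL}}+\swi\log(\dis-1)$, and the leading term $\log\binom{\nex}{\dis}$ \emph{exceeds} $\dis\log(\nex/\dis)$ (since $\binom{\nex}{\dis}\ge(\nex/\dis)^\dis$), so you do not recover $C$ as stated.  In the paper the term $\dis\log(\nex/\dis)$ arises from $\dis$ copies of $\log(\nex/\dis)$ in the relative entropy $\dis\cdot d(u,p^1)$, with the extra factor $\dis$ coming from the fact that in Specialist Hedge each trial has comparator mass $u(\was{\gti})=1/\dis$.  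Your route cannot reproduce this exact constant.

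In short: the correct skeleton is ``Algorithm~\ref{alg:fin} $=$ Specialist Hedge over circadian specialists $\Rightarrow$ Theorem~\ref{allospecth} with $u(\was{\gti})=1/\dis$ $\Rightarrow$ compute $\dis\cdot d(u,p^1)\le C$'', not ``Algorithm~\ref{alg:fin} $=$ Hedge over meta-experts $\Rightarrow$ $-\log P_0(E^*)\le C$''.
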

In further comparison to~\cite{\bayessleep} we observe that we can obtain bounds for the log loss  
with our algorithm by defining $\hat{P}(y^{\gti}=\hat{y}):=\sum_{h}\al{\gti}{h}P(y^{\gti}=\hat{y}|h)$ and redefining $\lo{\gti}{h}:=-\frac{1}{\lr}\log(P(y^{\gti}=\hat{y}|h))$ in the update.    The resultant theorem then matches the bound of~\cite[Thm.~4]{\bayessleep} for single-task learning with long-term memory ($\nL=1$) and the bound of~\cite[Thm.~6]{\bayessleep} for multitask learning with no switching ($\swi=0$).
\section{RKHS Hypothesis Classes}\label{sec:ker}
Our algorithm and its analysis builds on the algorithm for {online inductive matrix completion with side-information} (\IMCSI) from~\cite[Theorem 1, Algorithm 2 and Proposition 4]{\mcside}.   \IMCSI\ is an example of a matrix multiplicative weight algorithm~\cite{tsuda2005matrix,AHK12}.  We give notation and background from~\cite{\mcside} to provide insight.

The max-norm (or $\gamma_2$ norm \cite{CMSM07}) of a matrix $\bU \in \gm$ is defined by
\iftoggle{shrink}{ 
\begin{equation}
\label{eq:maxnorm}
\maxnorm{\U} := \min_{\PP \QQ^\trans = \U} \left\{\max_{1\leq i\leq m} \vn{\PP_{i}} \times \max_{1\leq j\leq n}\vn{\QQ_{j}}\right\}\,,
\end{equation}
}{ 
\begin{equation}
\label{eq:maxnorm}
\maxnorm{\U} := \min_{\PP \QQ^\trans = \U} \left\{\max_{1\leq i\leq m} \vn{\PP_{i}} \times \max_{1\leq j\leq n}\vn{\QQ_{j}}\right\}\,,
\end{equation} 
}
where the minimum is over all matrices $\PP \in \gmd$ and $\QQ \in \gnd$ and every integer $d$.
We denote the class of $m \times d$ {\em row-normalized} matrices as
$\RNM^{m,d} := \{\hat{\bP} \subset \Re^{m \times d} : \norm{\hat{\bP}_{i}} =1,\, i\in [m]\}$.   The quasi-dimension of a matrix is defined as follows.
\begin{definition}[{{\cite[Equation~(3)]{\mcside}}}]\label{def:qd}
The {\em quasi-dimension} of a matrix  $\U\in\gm$ with respect to $\RN\in\bS_{++}^m,\, \CN\in\bS_{++}^n$ at~$\marh$ as
\begin{equation}\label{eq:defpdim}
\qD(\bU) := \min_{\nP \nQ^\trans = {\marh}\U} \trc\left(\nP^{\trans}\RN\nP  \right) \RRN
+
\trc\left(\nQ^{\trans}\CN\nQ \right)\RCN\,,
\end{equation}
where the infimum is over all row-normalized matrices $\nP \in \RNM^{m,d}$ and $\nQ \in \RNM^{n,d}$ and every integer~$d$. If the infimum does not exist then $\qD(\bU) :=+\infty$  (The infimum exists iff $\maxnorm{\U} \le 1/\marh$).
\end{definition}    
The algorithm \IMCSI\ addresses the problem of the online prediction of a binary comparator matrix $\U$ with side information.   The side  information is supplied as a pair of kernels over the row indices and the column indices.  In~\cite[Theorem 1]{\mcside} a regret bound $\cOT(\sqrt{({\upD}/{\gamma^{2}}){T}})$ is given, where $1/\gamma^2\ge\maxnormsqr{\bU}$ and $\upD\ge\qD(\bU)$ are parameters of the algorithm that serve as upper estimates on $\maxnormsqr{\bU}$ and $\qD(\bU)$.  The first estimate $1/\gamma^2$ is an upper bound on the squared max-norm (Eq.~\eqref{eq:maxnorm}) which like the trace-norm may be seen as a proxy for the rank of the matrix~\cite{\maxnormrank}.  The second estimate $\upD$ is an upper bound of the {\em quasi-dimension} (Eq.~\eqref{eq:defpdim}) which measures the quality of the side-information.  
The quasi-dimension depends upon the ``best'' factorization $(1/\gamma) \nP \nQ^\trans=\U$, which will be smaller when the row $\nP$ (column $\nQ$) factors are in congruence with the row (column) kernel.
We bound the quasi-dimension in Theorem~\ref{thm:itdit} in Appendix~\ref{app:ker} as a key step to proving Theorem~\ref{thm:ker}. 

In the reduction of our problem to a matrix completion problem with side information, the row indices correspond to the domain of the learner-supplied kernel $K$ and the column indices correspond to the temporal dimension.  On each trial we receive an $\xtau$ (a.k.a. $\xit$).  Thus the column of the comparator matrix (now $\bH$) corresponding to time $\tau$ will contain the entries $\Htau =(\htau(x^\upsilon))_{\upsilon\in[T]}$.  
Although we are predicting functions that are changing over time, the underlying assumption is that the change is sporadic; otherwise it is infeasible to prove a non-vacuous bound.  Thus we expect  $\Hit\approx\Hits$ and as such our column side-information kernel should reflect this expectation.  Topologically we would therefore expect a kernel to present as $\nL$ separate time {\em paths}, where nearness in time is nearness on the path.  In the following we introduce the {\em path-tree-kernel} (the essence of the construction was first introduced in~\cite{herbster2009online}), which satisfies this expectation in the single-task case.  We then adapt this construction to the multitask setting.

A {\em path-tree} kernel $\PTK : [T] \times [T] \into \Re$, is formed via the Laplacian of a fully complete binary {\em tree} with $N := 2^{\ceil{\log_2 T}+1}-1$ vertices.  The {\em path} corresponds to the first $T$ leaves of the tree, numbered sequentially from the leftmost to the rightmost leaf of the first $T$ leaves.  Denote this Laplacian as $\bL$ where the path is identified with $[T]$ and the remaining vertices are identified with $[N]\setminus [T]$.  Then using the definition   
$\bLb := \la+\left(\frac{\one}{N}\right)\left(\frac{\one}{N}\right)^\trans\RAD_{\bL}^{-1}$ we define $\PTK(\tau,\upsilon) := ({\bLb})^{+}_{\tau\upsilon}$ where $\tau, \, \upsilon \in [T]$.   We extend the path-tree kernel to a {\em multitask-path-tree} kernel by dividing the path into $\tasks$ contiguous segments, where segment $i$ is a path of length $T^i$, and the task vector $\tv\in[\tasks]^T$ determines the mapping from global trial $\tau$ to task $\ltau$ and local trial $\gltF(\tau)$.
We define $\PTKT^{\tv,T^1,\ldots,T^{\nL}} : [T] \times [T] \into \Re$ as
\iftoggle{shrink}{ 
$
\PTKT^{\liV,T^1,\ldots,T^{\nL}}(\tau,\upsilon) := \PTK\left(\sum_{i=1}^{\ltau-1} T^i + \gltF(\tau),\sum_{i=1}^{\lups-1} T^i + \gltF(\upsilon)\right)\,. $
}{
\[ \PTKT^{\liV,T^1,\ldots,T^{\nL}}(\tau,\upsilon) := \PTK\left(\sum_{i=1}^{\ltau-1} T^i + \gltF(\tau),\sum_{i=1}^{\lups-1} T^i + \gltF(\upsilon)\right)\,.\]
}
Observe we do not need to know the task vector $\tv$ in advance; we only require upper bounds on the lengths of the tasks to be able to use this kernel.   
Finally, we note that it is perhaps surprising that we use a tree rather than a path directly.  We discuss this issue following Lemma~\ref{lem:PTK} in Appendix~\ref{app:ker}.

Algorithm~\ref{alg:ker} requires $\cO(t^3)$ time per  trial $t$ since we need to compute the eigendecomposition of three $\cO(t)\times\cO(t)$ matrices as well as sum  $\cO(t)\times\cO(t)$ matrices up to $t$ times. 
\begin{algorithm}[h]
\begin{algorithmic}\small 
\caption{Predicting $\hKbin$ in a switching multitask setting. \label{alg:ker}}
\renewcommand{\algorithmicrequire}{\textbf{Parameters:}} 
\REQUIRE Tasks $\nL \in \N$, task lengths $T^1,\ldots,T^{\nL}\in \N$, $T:=\sum_{i=1}^{\nL} T^i$, learning rate: $\lr>0 $, complexity estimate: $\CIH>0$, modes: $\dis\in [T]$, SPD Kernel $K:\cX\times\cX\into\Re$, $\PTKT := \PTKT^{\tv,T^1,\ldots,T^{\nL}}:[T]\times [T]\rightarrow\Re$,  
with $ \max_{\gti\in[T]} K(\xtau,\xtau)\le \KRH $, and $\KPH := 2 \ceil{\log_2 T}$.
\\  \vspace{.1truecm}
\renewcommand{\algorithmicrequire}{\textbf{Initialization:}}   \vspace{.1truecm} \vspace{.1truecm}
\REQUIRE $\uset \leftarrow \emptyset\,, \cX^1 \leftarrow \emptyset\,, \,\cT^1 \leftarrow \emptyset\, \,.$   \vspace{.1truecm}
\renewcommand{\algorithmicrequire}{\textbf{For}}
\REQUIRE $\gti =1,\dots,T$  \vspace{.1truecm}
\STATE $\bullet$ Receive task $\ltau \in [\nL]$\,. \\ \vspace{.1truecm}
\STATE $\bullet$ Receive $\xtau\in\cX$\,.\\ \vspace{.1truecm}
\STATE $\bullet$ Set $i \leftarrow \ltau;\, t\leftarrow\gltF(\gti);x^i_t \equiv \xtau$\,. 
\STATE $\bullet$ Define \vspace{-.1in}
\begin{align*}
&\bK^{\gti}   := (K(x,z))_{x,z\in \cX^{\gti}\cup \{\xtau\}}\,; \quad \bPTK := (\PTKT(\gti,\upsilon))_{\gti,\upsilon\in \cT^\gti\cup \{\gti\}}\,,\\
& \quad \XTau(\upsilon) := \con{\frac{\sqrt{\bK^{\gti}}{\be^{\xtup}}}{\sqrt{2\KRH}}}{\frac{\sqrt{\bPTK}\be^{\upsilon}}{\sqrt{2\KPH}}} \con{\frac{\sqrt{\bK^{\gti}}{\be^{\xtup}}}{\sqrt{2\KRH}}}{\frac{\sqrt{\bPTK}\be^{\upsilon}}{\sqrt{2\KPH}}}^{\trans}\,, \\
&\quad \wem{\gti} \leftarrow \exp \left( \log\left(\frac{\CIH}{2T m}\right) \id^{|\cX^{\gti}|+|\cT^{\gti}| +2} + \sum_{\upsilon\in\uset}  \lr\y{\upsilon}  \XTau(\upsilon) \right)\,.
\end{align*}
\vspace{-.13in}
\STATE $\bullet$  Predict 
\begin{equation*} \Yrvtau \sim \mbox{\sc Uniform}(-\marh,\marh)\,;\quad \ybtau \leftarrow \tr{\wem{\tau}\XTau}-1 \,;\quad \yhl := \yhtau \leftarrow \sign(\ybtau-\Yrvtau)\,.
\end{equation*}\vspace{-.16in}
\STATE $\bullet$ Receive label $\yl :=\ytau \in \{-1,1\}$\,.\vspace{.1truecm}
\STATE $\bullet$ If $\ytau\ybtau \leq \frac{1}{\sqrt{\dis}}$ then 
\begin{equation*}
\uset \leftarrow \uset \cup \{t\}\,,\ \ \cX^{\tau+1} \leftarrow \cX^{\tau} \cup \{\xtau\}, \text{ and } \cT^{\tau+1} \leftarrow \cT^{\tau} \cup \{\tau\}\,.
\end{equation*}\vspace{-.16in}
\STATE $\bullet$  Else $\cX^{\tau+1} \leftarrow \cX^{\tau}$ and $\cT^{\tau+1} \leftarrow \cT^{\tau}$\,.
\end{algorithmic}
\end{algorithm}
We bound the regret of the algorithm as follows.
\begin{theorem}\label{thm:ker}
The expected regret of Algorithm~\ref{alg:ker} with upper estimates, $\swi \ge \swi(\gbh)$, $\dis \ge |\dis(\gbh)|$,
\[
\CIH\ge \CI(\gbh) := \left(\sum_{h\in \dis(\gbh)} \norm{h}^2_K X^2_K + 2(\nL+{\swi}-1)\dis \lceil\log_2 T\rceil^2+2m^2\right)\,,\vspace{-.07in} 
\]
$\KRH\ge\max_{\gti\in[T]} K(\xtau,\xtau)$, and learning rate $\lr = \sqrt{\frac{\CIH \log(2T) }{2T\dis}}$ is bounded  by\vspace{-.07in} 
\begin{equation}
\sum_{i=1}^\nL \sum_{t=1}^{T^i} \Exp[\lzo(\yl,\yhl)] - \lzo(\yl,h^i_t(x^i_t)) \le 4\sqrt{2\CIH\, T\log(2T)}
\end{equation}
with received instance sequence $\bxx \in\cX^T$ and for any $\gbh \in{\hKbin}^T$.
\end{theorem}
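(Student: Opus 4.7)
The plan is to recognize Algorithm~\ref{alg:ker} as an instance of \IMCSI\ from \cite{\mcside} applied to the natural comparator matrix, then invoke \cite[Theorem~1]{\mcside}, reducing the bulk of the work to a quasi-dimension computation. The comparator is the binary matrix $\bH$ whose column $\tau$ encodes the values of $h^\tau$ on the instances actually visited; the row-side kernel is $K$ (defining $\RN$) and the column-side kernel is the multitask-path-tree kernel $\PTKT$ (defining $\CN$). The acceptance condition $\ytau\ybtau \le 1/\sqrt{\dis}$ together with the margin $\marh$ corresponds to setting $1/\marh = \sqrt{\dis}$, so the max-norm contribution in the \IMCSI\ bound is controlled by $\dis$. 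The \IMCSI\ regret then takes the form $O(\sqrt{\CIH\, T \log T})$ provided $\CIH \ge \qD(\bH)$, and the task becomes exhibiting such a $\CIH$.

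The core step is an explicit factorization $\nP \nQ^\trans = \marh \bH$. Enumerate the distinct modes $\dis(\gbh) = \{g_1,\ldots,g_\dis\}$; build $\nP$ from row-normalized RKHS representers of the $g_j$, giving $\trc(\nP^\trans \RN \nP)\RRN \le \sum_{h \in \dis(\gbh)} \|h\|_K^2 X_K^2$, which accounts for the first term of $\CI(\gbh)$. The column $\nQ_\tau$ must select the index $\alpha(\tau)$ for which $h^\tau = g_{\alpha(\tau)}$, so $\trc(\nQ^\trans \CN \nQ)\RCN$ is the sum over modes of the $\bLb$-cost of the indicator of the monochromatic time-set of that mode. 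Each such set decomposes into at most $\nL + \swi$ contiguous segments across the task-paths, and Lemma~\ref{lem:PTK} of Appendix~\ref{app:ker} bounds the PDLaplacian cost of any one segment indicator by $O(\log^2 T)$. Summing over the $\nL+\swi$ segments and $\dis$ modes, plus the residual $2m^2$ from the PDLaplacian correction, yields $\qD(\bH) \le \CI(\gbh)$; this is packaged as Theorem~\ref{thm:itdit} in the appendix. Substituting into the \IMCSI\ bound and tuning $\lr$ as specified then produces the announced $4\sqrt{2\CIH\, T \log(2T)}$.

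The main obstacle is extracting precisely the per-segment $\log^2 T$ scaling. A bare path kernel would give cost polynomial in segment length, which would be far too large; the tree embedding of the path enables each segment indicator to be written as a telescoping combination of $O(\log T)$ tree-edge functions of constant Laplacian cost, which is the reason the construction uses a tree rather than the path directly. The multitask extension introduces no further difficulty, since $\PTKT$ is block-diagonal across task-paths and the per-task contributions compose additively into the $(\nL+\swi-1)\dis\lceil\log_2 T\rceil^2$ factor; the only remaining step is to reconcile the constants from the IMCSI bound, the margin choice $1/\marh=\sqrt{\dis}$, and the learning rate $\lr=\sqrt{\CIH\log(2T)/(2T\dis)}$.
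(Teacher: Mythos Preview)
Your approach is the paper's: reduce Algorithm~\ref{alg:ker} to \IMCSI, bound the quasi-dimension via the factorisation in Theorem~\ref{thm:itdit}, and control the column trace with Corollary~\ref{cor:PTKT}. The ingredients are right, but your accounting of the factor $\dis$ is misplaced, and as written the argument would either produce an extra $\sqrt{\dis}$ in the regret or fail to explain the shape of $\CI(\gbh)$.

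The \IMCSI\ regret is $4\sqrt{2(\upD/\marh^2)T\log(m+n)}$ with $\upD\ge\qD(\bH)$; since $\marh^{-2}=\dis$, matching the theorem requires $\CIH\ge\dis\cdot\qD(\bH)$, not the weaker $\CIH\ge\qD(\bH)$ you state. Correspondingly, Theorem~\ref{thm:itdit} reads $\qD(\bH)\le\marh^2\trc((\Hstar)^\trans\RKD\Hstar)\KRH+\trc(\bC^\trans\CKD\bC)\KPH$, so the row contribution carries the prefactor $\marh^2=1/\dis$ (this is exactly what row-normalising $\nP=\marh\Hstar$ forces, each row of $\Hstar$ lying in $\{-1,1\}^\dis$) and equals $\tfrac{1}{\dis}\sum_h\|h\|_K^2 X_K^2$, not $\sum_h\|h\|_K^2 X_K^2$. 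For the column side, the total number of indicator transitions across \emph{all} modes is at most $2(\swi+\nL-1)$, because each switch and each intertask boundary is counted once for the mode being left and once for the mode being entered; hence $\trc(\bC^\trans\CKD\bC)\KPH\le 2(\nL+\swi-1)\lceil\log_2 T\rceil^2+2\dis$, with no additional factor of $\dis$ coming from the mode-sum. Putting these together yields $\qD(\bH)\le\CI(\gbh)/\dis$. The factor $\dis$ in the second summand of $\CI(\gbh)$ and the $2\dis^2$ in the third then enter only through the $1/\marh^2=\dis$ inside the \IMCSI\ bound, not from the column analysis as your sketch suggests.
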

Comparing roughly to the bound of the exponential-time algorithm (see~\eqref{eq:inefker}),  we see that the $\log m$ term has been replaced by an $m$ term and we have gained a multiplicative factor of $\log 2T$.  From the perspective of long-term memory, we note that
the potentially dominant learner complexity term $\sum_{h\in \dis(\gbh)} \norm{h}^2_K X^2_K$ has only increased by a slight $\log 2T$ term. 
To gain more insight into the problem we also have the following simple lower bound.
\begin{proposition}\label{prop:lb}  For any (randomized) algorithm and any $\nL,\swi,\dis,\Gamma\in \N$, with $\swi +\nL \ge \dis> 1$ and $\Gamma \ge \dis \log_2 \dis$, there exists a kernel $K$ and a $T_0\in\mathbb{N}$ such that for every $T\ge T_0$:
\[
\sum_{\tau=1}^{T} \Exp[\lzo(\ytau,\yhtau)] -\lzo(\ytau,\htau(\xtau)) \in  \Omega\left(\sqrt{ \left(\Gamma +\nL \log\dis + \swi \log \dis \right)T}\right)\,,
\]
for some multitask sequence $(x^1,y^1),\ldots,(x^T,y^T)\in(\cX\times\{-1,1\})^T$ and some $\gbh\in[{\hKbin}]^T$ such that $\dis \ge |\dis(\gbh)|$, $\swi \ge \swi(\gbh)$, $\sum_{h\in \dis(\gbh)} \norm{h}^2_K X^2_K\ge |\dis(\gbh)| \log_2 \dis$, where $X^2_K=\max_{\tau\in [T]} K(\xtau,\xtau)$.
\end{proposition}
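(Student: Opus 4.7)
The plan is to apply Yao's minimax principle against an oblivious stochastic adversary, reducing the RKHS problem to a classical switching/multitask lower bound for $\dis$ experts, and then to convert the resulting bound using $\nL+\swi\ge\dis$ and $\Gamma\ge\dis\log_2\dis$.

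\textbf{Kernel and comparator class.} Let $D$ be the smallest power of two with $D\ge\dis$, let $\bH\in\{-1,+1\}^{D\times D}$ be the Sylvester Hadamard matrix (so $\bH\bH^{\trans}=D\bI$), and take $\cX:=\{\be_1,\ldots,\be_D\}\subset\Re^D$ with the linear kernel $K(\bxx,\bz):=\bxx^{\trans}\bz$. The $\dis$ comparator hypotheses $h^*_j(\bxx):=\sum_{i=1}^D H_{ji}\,x_i$ are $\{-1,+1\}$-valued on $\cX$ with $\|h^*_j\|_K^2=D$ and $X_K^2=1$, so $\sum_j\|h^*_j\|_K^2 X_K^2 = \dis D\ge\dis\log_2\dis$, discharging the norm constraint on $\gbh$.

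\textbf{Adversary and comparator.} Set $S:=\nL+\swi\ge\dis$, split $[T]$ into $S$ consecutive segments of length $L:=\lfloor T/S\rfloor$ distributed across the $\nL$ tasks so that task $i$ contains $1+s_i$ segments with $\sum_i s_i=\swi$ (yielding a valid task vector $\tv$), and on every trial draw $\xtau\in\cX$ and $\ytau\in\{-1,+1\}$ independently and uniformly. Because $\ytau$ is Rademacher independent of the learner, every randomised algorithm has $\sum_\tau\Exp[\lzo(\ytau,\yhtau)]=T/2$. Choose $\gbh$ to pick, in each segment $s$, the hypothesis $h^{(s)}\in\{h^*_1,\ldots,h^*_\dis\}$ with smallest empirical segment loss; then $|\dis(\gbh)|\le\dis$ and $\swi(\gbh)\le\swi$. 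Inside a single length-$L$ segment, $\sum_{\tau\in s}[\tfrac{1}{2}-\lzo(\ytau,h^{(s)}(\xtau))]=\tfrac{1}{2}\max_j Z_j$ with $Z_j:=\sum_{\tau\in s}\ytau\,h^*_j(\xtau)$. The vectors $\bm{\xi}^\tau:=(\ytau h^*_j(\xtau))_{j}\in\{-1,+1\}^\dis$ are i.i.d.\ across $\tau$ with mean zero and, by Hadamard orthogonality, covariance $\bI_\dis$, so a quantitative multivariate CLT (or a Gaussian comparison after Gaussianisation) produces absolute constants $c>0$ and $L_0=L_0(\dis)$ for which $\Exp[\max_j Z_j]\ge c\sqrt{L\log\dis}$ whenever $L\ge L_0$; set $T_0:=SL_0$.

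\textbf{Assembly and obstacle.} Summing over the $S$ segments, $\Exp[\sum_\tau\lzo(\ytau,\htau(\xtau))]\le T/2-\tfrac{c}{2}S\sqrt{L\log\dis}$, so the expected regret is $\Omega(\sqrt{(\nL+\swi)T\log\dis})$. Using $\nL+\swi\ge\dis$ together with $\Gamma\ge\dis\log_2\dis$, one has $\Gamma+\nL\log\dis+\swi\log\dis\le 2(\nL+\swi)\log\dis$ in the natural regime $\Gamma=\Theta(\dis\log\dis)$, which yields the claimed $\Omega(\sqrt{(\Gamma+\nL\log\dis+\swi\log\dis)T})$ bound. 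The hard step is the anticoncentration estimate on $\Exp[\max_j Z_j]$: the $Z_j$'s are only pairwise uncorrelated (not independent), so the $\sqrt{\log\dis}$ factor must be extracted through a quantitative Gaussian comparison (e.g.\ Slepian/Sudakov--Fernique after a Gaussianisation step) or via a direct fourth-moment / Paley--Zygmund bound on $\max_j Z_j$, and this is what fixes the threshold $L_0(\dis)$ and hence $T_0$.
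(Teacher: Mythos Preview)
Your stochastic-adversary construction handles the $\nL\log\dis$ and $\swi\log\dis$ terms cleanly, but there is a genuine gap in the $\Gamma$ term. The proposition is quantified over \emph{all} $\Gamma\ge\dis\log_2\dis$, and the claimed lower bound must scale as $\Omega(\sqrt{\Gamma T})$. Your kernel and hypothesis class depend only on $\dis$ (via the Hadamard dimension $D$), so the regret you obtain is $\Omega(\sqrt{(\nL+\swi)T\log\dis})$, independent of $\Gamma$. Your final assembly step admits this: the inequality $\Gamma+\nL\log\dis+\swi\log\dis\le 2(\nL+\swi)\log\dis$ holds only when $\Gamma\le(\nL+\swi)\log\dis$, and you explicitly restrict to the ``natural regime $\Gamma=\Theta(\dis\log\dis)$.'' For $\Gamma\gg(\nL+\swi)\log\dis$ the argument gives nothing, so the proposition is not established in full.

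The paper's route is quite different and avoids this issue by letting the kernel depend on $\Gamma$. It works in the realizable setting and invokes a Littlestone-dimension-to-regret conversion (Lemma~14 of Ben-David, P\'al, Shalev-Shwartz): it suffices to exhibit, for each of the three terms separately, a realizable multitask sequence forcing that many mistakes against any deterministic learner. For the $\Gamma$ term the paper takes the domain $\cX_d=[d]$ with $d=\Gamma/\dis$ and kernel $K_d(x,x')=2[x=x']-1$, so each $h\in\{-1,1\}^d$ has $\norm{h}_{K_d}^2=d$, $\text{Ldim}(\cH_d)=d$, and an adversary can force $d$ mistakes per segment over $\dis$ segments, totalling $\dis d=\Gamma$. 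The $\swi\log\dis$ and $\nL\log\dis$ cases use $d=\log_2\dis$ and force $d$ mistakes per switch (resp.\ per task). Your direct random-label approach is a legitimate alternative for the switching terms and sidesteps the Littlestone machinery, but to salvage the $\Gamma$ term you would likewise need to enlarge the instance space with $\Gamma$ (e.g.\ take $\approx\Gamma/\dis$ Hadamard blocks, or equivalently an input domain of size $\Theta(\Gamma/\dis)$) so that the per-segment best-expert gap scales with~$\Gamma$.
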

Comparing the above proposition to the bound of the exponential-time algorithm (see~\eqref{eq:inefker}), the most striking difference is the absence of the $\log T$ terms.  We conjecture that these terms are not necessary for the $0$-$1$ loss.  A proof of Theorem~\ref{thm:ker} and a proof sketch of Proposition~\ref{prop:lb} are given in Appendix~\ref{app:ker}.\vspace{-.1in}
\section{Discussion}\label{sec:dis}
We have presented a novel multitask setting which generalizes single-task switching under the long-term memory setting.  We gave algorithms for finite hypothesis classes and for RKHS hypothesis classes with per trial prediction times of $\cO(n)$ and $\cO(T^3)$.  We proved upper bounds on the regret for both cases as well as a lower bound in the RKHS case.  An open problem is to resolve the gap in the RKHS case. 
On the algorithmic side, both algorithms depend on a number of parameters.  There is extensive research in online learning methods to design parameter-free methods. Can some of these methods be applied here (see e.g.,~\cite{\coinflipping})?  For a non-parametric hypothesis class, intuitively it seems we must expect some dependence on $T$.  However can we perhaps utilize decay methods such as~\cite{DSS18,CCG07} or sketching methods~\cite{CDST99} that have had success in simpler models to improve running times?  More broadly, for what other infinite hypothesis classes can we give efficient regret-bounded algorithms in this switching multitask setting with long-term memory?
\section{Acknowledgements}
This research was sponsored by the U.S. Army Research Laboratory and the U.K. Ministry of Defence under Agreement Number W911NF-16-3-0001. The views and conclusions contained in this document are those of the authors and should not be interpreted as representing the official policies, either expressed or implied, of the U.S. Army Research Laboratory, the U.S. Government, the U.K. Ministry of Defence or the U.K. Government. The U.S. and U.K. Governments are authorized to reproduce and distribute reprints for Government purposes notwithstanding any copyright notation hereon.  This research was further supported by the Engineering and Physical Sciences Research Council [grant number EP/L015242/1].

\newpage

{
\bibliographystyle{unsrt}
\bibliography{mp31}
}
\newpage
\appendix
\newcommand{\musgt}{{\cop{}(\was{\gti})}}
\newcommand{\copgti}{{\bar{u}^{\tau}}}

\section{Proofs for Section~\ref{sec:fin}}
\label{sec:proof_fin}
In this section, we  prove Theorem~\ref{thm:fin}. In doing so, we will create and analyze two algorithms that are more generic and define reductions between them. Our analysis builds on the results of~\cite{\bayessleep,\specials,\hedge}. 
We use the {\em Allocation} model and the {\em Hedge} algorithm from~\cite{\hedge}.  We adapt the Allocation model to the {\em Specialist} setting from~\cite{\specials}.  Finally, we define a set of specialists called {\em circadians}, a concept introduced in~\cite{\bayessleep}, to enable multitask switching in the Allocation model.  In~\cite{\bayessleep} sets of circadians were defined to solve the problem of switching with long-term memory and multitask learning without switching.  We have generalized those previous two models into a switching multitask model with long-term memory.  

This section is organized as follows.
In Section~\ref{mapss}, we introduce the {\em Multitask Allocation} model (cf. Figure~\ref{fig:map}).
In Section~\ref{sapss}, we introduce the {\em Specialist Allocation} model (cf. Figure~\ref{fig:sap}) and the {\em Specialist Hedge} algorithm. In Section~\ref{ourspecsec}, we define a set of specialists called circadians which will enable us to reduce Multitask Allocation (with switching) to Specialist Allocation.  In Section~\ref{eqsec}, we prove that linear-time Algorithm~\ref{alg:fin} is equivalent (in that the losses are the same) to the Specialist Hedge algorithm with the exponential set of circadians from Section~\ref{ourspecsec}.  Given the equivalence of the algorithms, in the remainder of the section we turn to the problem of proving the regret bound.

In Section~\ref{Shcsec}, we introduce a new set of specialists (shortened circadians) which enables 
an improved regret analysis.  In Section~\ref{cdss} we define a comparator distribution for the Specialist Allocation model from a comparator sequence of experts for the Multitask Allocation model with long-term memory. In Section~\ref{iress} we quantify a term in the analysis of the Specialist Hedge algorithm and use this to derive a regret bound for the Multitask Allocation model with switching in Section~\ref{marss}. Finally, in 
Section~\ref{redss} we reduce the finite hypothesis class setting to the  allocation setting, giving us the regret bound of Theorem~\ref{thm:fin}.

We introduce the following notation for this section.   We define $\str$ to be the set of all $(\li,\ti)$ such that $\li\in\na{\nL}$ and $\ti\in\na{\ltu{\li}+1}$.  Given sets $X$, $Y$ and $Z$ with $Y\subseteq X$, and a function $\mu:X\rightarrow Z$ we define the function $\rtr{\mu}{Y}:Y\rightarrow Z$ such that $\rtr{\mu}{Y}(x):=\mu(x)$ for all $x\in Y$.   Given a set $\mathcal{W}$ and a function $u:\mathcal{W}\into \Re$ let $u(\mathcal{W}) = \sum_{w\in \mathcal{W}} u(\mathcal{W})$.
For functions $f,g:\hye\rightarrow\mathbb{R}^+$ with $\sum_{\hi\in\hye}f(\hi)=\sum_{\hi\in\hye}g(\hi)=1$, we let $\rel{f}{g}$ be their relative entropy:
$$\rel{f}{g}:=\sum_{\hi\in\hye}f(\hi)\log\left(\frac{f(\hi)}{g(\hi)}\right)\,.$$ 
Given functions $a,b:\mathcal{W}\rightarrow\mathbb{R}$, for some set $\mathcal{W}$, we define $\inprod{a}{b}:=\sum_{\hi\in\mathcal{W}}a(\hi)b(\hi).$
\subsection{Multitask Allocation Model}\label{mapss}

We will consider the following generalisation of the finite hypothesis class setting to the {\em Multitask Allocation} model. This is described in Figure \ref{fig:map}. The Multitask Allocation model is a multitask version of the Hedge algorithm in \cite{\hedge}. We refer to the elements of $\na{\nex}$ as ``experts''. On each trial $\tau$, Nature chooses a task $\ltau\in[\nL]$ which is revealed to Learner. Learner then chooses a vector $\bal{\gti}\in\simp$ and Nature chooses a vector $\blo{\gti}\in\lset$. Finally, $\blo{\gti}$ is revealed to Learner, who then incurs a loss $\lot{\gti}:=\bal{\gti}\cdot\blo{\gti}$. In this section we will create an algorithm for Learner, and then reduce the finite hypothesis class setting to the Multitask Allocation model, which gives us  Algorithm~\ref{alg:fin}. 

For all $\gti\in T$, we define $\tbc{\gltf{\gti}}{\liv{\gti}}:=\lot{\gti}$ and $\blc{\gltf{\gti}}{\liv{\gti}}=\blo{\gti}$, and as in the finite hypothesis class setting, we define $\gltF(\tau) := \sum_{j=1}^{\tau} [\ell^j = \ltau]$.

\begin{figure}
\begin{tabbing}
For \= $\gti=1$ \=to $T$ do\\
\>\> Receive task $\ltau \in [\nL]$. \\
\>\> Predict $\bal{\gti}\in\simp$.\\
\>\> Receive $\blo{\gti}\in\lset$.\\
\>\> Incur loss $\lot{\gti}:=\bal{\gti}\cdot\blo{\gti}$.\\
\end{tabbing}
\caption{The Multitask Allocation Model}\label{fig:map}
\end{figure}

\subsection{The Specialist Hedge Algorithm}\label{sapss}

Here we introduce the Specialist Hedge algorithm for the {\em Specialist Allocation} model. The Specialist Allocation model is given in Figure~\ref{fig:sap}. The model is based on a set $\hye$ of specialists. On trial $\gti$, Nature chooses a set $\was{\gti}$ of specialists that are ``awake'' (any specialist not in $\was{\gti}$ is said to be ``asleep''). The set $\was{\gti}$ is known to the Learner at the start of the trial. For every awake specialist $\hi\in\hye$, Nature then chooses a cost $\barc{\gti}{\hi}$ but does not reveal it to the Learner.  The Learner must then choose a probability distribution $\arv{\gti}$ over $\was{\gti}$. Finally, $\arc{\gti}$ is revealed to the Learner and the Learner incurs a loss which is the expected cost of a specialist drawn from $\arv{\gti}$. This framework in the log loss setting, in which we have awake and asleep specialists, was introduced in \cite{\specials}. 

\newcommand{\CBT}{\bar{C}^{\tau}}
\begin{figure}
\begin{tabbing}
For \= $\gti=1$ \=to $T$ do\\
\>\> Receive non-empty set $\was{\gti} \subseteq\hye$. \\
\>\> Predict a function $\arv{\gti}:\was{\gti}\rightarrow[0,1]$ with $\sum_{\hi\in\was{\gti}}\barv{\gti}{\hi}=1$.\\
\>\> Receive a function $\arc{\gti}:\was{\gti}\rightarrow[0,1]$.\\
\>\> Incur loss $\CBT := \inprod{\arv{\gti}}{\arc{\gti}}$.
\end{tabbing}
\caption{The Specialist Allocation Model}\label{fig:sap}
\end{figure}

The Specialist Hedge algorithm, which is inspired by the algorithms in~\cite{\specials} and~\cite{\hedge}, is then found in Algorithm~\ref{alg:sh}.

\begin{algorithm}\caption{Specialist Hedge}
\begin{algorithmic}
\renewcommand{\algorithmicrequire}{\textbf{Parameters:}} 
\REQUIRE Learning rate $\lr$; initial weights $\pv{1}:\hye\rightarrow[0,1]$ with $\sum_{\hi\in\hye}\pv{1}(\hi)=1$ \vspace{.1truecm}
\renewcommand{\algorithmicrequire}{\textbf{Initialization:}}
\REQUIRE $\bmv_1 = \frac{1}{n}\bm{1}^n$.  \vspace{.1truecm}
\renewcommand{\algorithmicrequire}{\textbf{For}}
\REQUIRE $\gti =1,\dots,T$  \vspace{.1truecm}
\STATE $\bullet$ Receive $\was{\gti} \subseteq \hye$.
\STATE $\bullet$ For $\hi\in\was{\gti}$ set: 
$$\barv{\gti}{\hi}\leftarrow\frac{\pv{\gti}(\hi)}{\sum_{\hi'\in\was{\gti}}\pv{\gti}(\hi')}.$$
\STATE  $\bullet$ Receive $\arc{\gti}\in [0,1]^{\was{\gti}}$.
\STATE  $\bullet$ For all $\hi\in\hye\setminus\was{\gti}$ set $\pv{\gti+1}(\hi)\leftarrow\pv{\gti}(\hi)$.
\STATE  $\bullet$ For all $\hi\in\was{\gti}$ set: 
\[ 
\pv{\gti+1}(\hi)\leftarrow\pv{\gti}(\hi)\exp(-\lr\barc{\gti}{\hi})\left(\frac{\sum_{\hi'\in\was{\gti}}\pv{\gti}(\hi')}{\sum_{\hi'\in\was{\gti}}\pv{\gti}(\hi')\exp(-\lr\barc{\gti}{\hi'})}\right)\,.
\]
\end{algorithmic}
\label{alg:sh}
\end{algorithm}

We have the following theorem.

\begin{theorem}\label{allospecth}
Let $\cop{}:\hye\rightarrow[0,1]$ be any function such that $\sum_{\hi\in\hye}\cop{}(\hi)=1$. On any trial $\gti$ we define $\musgt:=\sum_{\hi\in\was{\gti}}\cop{}(\hi)$ and for all $\hi\in\was{\gti}$ define $\copgti(\hi):=\cop{}(\hi)/\musgt$. Then we have:
\[
\sum_{\gti=1}^{T}\musgt\inprod{\arv{\gti}-\copgti}{\arc{\gti}}\leq\frac{1}{\lr}\rel{\cop{}}{\pv{1}}+\frac{\lr}{2}\sum_{\gti=1}^T\musgt\,.
\]
\end{theorem}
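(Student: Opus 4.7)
The plan is to run a standard relative-entropy potential argument specialized to the sleeping-specialist update, following the pattern of \cite{\specials,\hedge,\bayessleep}. Take as potential $\Phi^{\gti} := \rel{\cop{}}{\pv{\gti}}$. The nonnegativity of $\Phi^{T+1}$ and the telescoping identity $\sum_\gti (\Phi^\gti - \Phi^{\gti+1}) = \Phi^1 - \Phi^{T+1} \le \rel{\cop{}}{\pv{1}}$ will give the $\frac{1}{\lr}\rel{\cop{}}{\pv{1}}$ term after we control each single-step change $\Phi^\gti - \Phi^{\gti+1}$ from below by (roughly) the per-trial regret.

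First I would compute the single-step change explicitly using the update rule. Because $\pv{\gti+1}(\hi)=\pv{\gti}(\hi)$ for $\hi\notin\was{\gti}$, only the awake specialists contribute to $\Phi^{\gti+1}-\Phi^\gti$. Substituting the update and using $\sum_{\hi\in\hye}\cop{}(\hi)\log(\pv{\gti+1}(\hi)/\pv{\gti}(\hi)) = \sum_{\hi\in\was{\gti}}\cop{}(\hi)\log(\pv{\gti+1}(\hi)/\pv{\gti}(\hi))$ yields
\[
\Phi^{\gti+1}-\Phi^\gti \;=\; \lr\!\!\sum_{\hi\in\was{\gti}}\!\cop{}(\hi)\,\barc{\gti}{\hi}\;+\;\musgt\log\!\Bigl(\sum_{\hi\in\was{\gti}}\barv{\gti}{\hi}\,e^{-\lr\barc{\gti}{\hi}}\Bigr),
\]
where I used $\barv{\gti}{\hi}=\pv{\gti}(\hi)/\sum_{\hi'\in\was{\gti}}\pv{\gti}(\hi')$ to rewrite the normalising ratio and $\sum_{\hi\in\was{\gti}}\cop{}(\hi)=\musgt$. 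The first sum equals $\lr\musgt\inprod{\copgti}{\arc{\gti}}$.

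Second, I would bound the logarithm via the standard second-order inequality: since $\barc{\gti}{\hi}\in[0,1]$, the bound $e^{-\lr x}\le 1-\lr x+\tfrac{\lr^{2}}{2}x^{2}$ (valid for $\lr x\ge 0$) together with $x^{2}\le x$ is not what we want; instead use $e^{-\lr x}\le 1-\lr x + \tfrac{\lr^{2}}{2}$ (since $\tfrac{\lr^2}{2}x^2\le \tfrac{\lr^2}{2}$) followed by $\log(1+z)\le z$, yielding
\[
\log\!\Bigl(\sum_{\hi\in\was{\gti}}\barv{\gti}{\hi}\,e^{-\lr\barc{\gti}{\hi}}\Bigr)\;\le\;-\lr\inprod{\arv{\gti}}{\arc{\gti}}+\tfrac{\lr^{2}}{2}.
\]
Combining the two displays gives $\Phi^{\gti+1}-\Phi^{\gti}\le -\lr\musgt\inprod{\arv{\gti}-\copgti}{\arc{\gti}}+\tfrac{\lr^{2}}{2}\musgt$, equivalently $\lr\musgt\inprod{\arv{\gti}-\copgti}{\arc{\gti}}\le \Phi^{\gti}-\Phi^{\gti+1}+\tfrac{\lr^{2}}{2}\musgt$.

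Finally I would sum over $\gti=1,\ldots,T$, use the telescoping bound together with $\Phi^{T+1}\ge 0$, and divide by $\lr$ to obtain exactly the stated inequality. The only nontrivial step is the log-moment-generating-function bound in the second part; everything else is algebraic bookkeeping, in particular verifying that the update preserves $\sum_{\hi\in\was{\gti}}\pv{\gti+1}(\hi)=\sum_{\hi\in\was{\gti}}\pv{\gti}(\hi)$ (so $\pv{\gti+1}$ stays a probability distribution) and that the asleep experts contribute zero to the change in $\Phi$.
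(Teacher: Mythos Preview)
Your proposal is correct and follows essentially the same relative-entropy potential argument as the paper's proof: both compute $\rel{\cop{}}{\pv{\gti}}-\rel{\cop{}}{\pv{\gti+1}}$, note that only awake specialists contribute, apply $e^{-x}\le 1-x+\tfrac{x^2}{2}$ (then $x^2\le 1$) followed by $\log(1+z)\le z$, and telescope. The paper just spells out the normalisation preservation $\sum_{\hi}\pv{\gti}(\hi)=1$ as an explicit inductive lemma at the start, whereas you defer it to a closing remark, but the content is identical.
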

If
$$\lr=\sqrt{\frac{2\rel{\cop{}}{\pv{1}}}{\sum_{\gti=1}^T\musgt}}$$
then
$$\sum_{\gti=1}^{T}\musgt\inprod{\arv{\gti}-\copgti}{\arc{\gti}}\leq\sqrt{2\rel{\cop{}}{\pv{1}}\sum_{\gti=1}^T\musgt}\,.$$

\begin{proof}
The proof is similar to that in \cite{\specials}, and utilizes the allocation model introduced in~\cite{\hedge}.

We first show, by induction on $\gti$, that for all $\gti\in\na{T+1}$ we have $\sum_{\hi\in\hye}\pv{\gti}(\hi)=1$. This is clearly true for $\gti=1$. Now suppose that it is true for $\gti=\gti'$ for some $\gti'\in\na{T}$. We now show that it is true for $\gti=\gti'+1$.
\begin{align}
\sum_{\hi\in\hye}\pv{\gti'+1}(\hi)&=\sum_{\hi\in\hye\setminus\was{\gti'}}\pv{\gti'+1}(\hi)+\sum_{\hi\in\was{\gti}}\pv{\gti'+1}(\hi)\\
&=\sum_{\hi\in\hye\setminus\was{\gti'}}\pv{\gti'}(\hi)+\sum_{\hi\in\was{\gti'}}\pv{\gti'+1}(\hi)\\
&=\sum_{\hi\in\hye\setminus\was{\gti'}}\pv{\gti'}(\hi)+\sum_{\hi\in\was{\gti'}}\frac{\pv{\gti'}(\hi)\exp(-\lr\barc{\gti'}{\hi})\sum_{\hi'\in\was{\gti'}}\pv{\gti}(\hi')}{\sum_{\hi'\in\was{\gti'}}\pv{\gti'}(\hi')\exp(-\lr\barc{\gti'}{\hi'})}\\
&=\sum_{\hi\in\hye\setminus\was{\gti'}}\pv{\gti'}(\hi)+\frac{\sum_{\hi\in\was{\gti'}}\pv{\gti'}(\hi)\exp(-\lr\barc{\gti'}{\hi})\sum_{\hi'\in\was{\gti'}}\pv{\gti}(\hi')}{\sum_{\hi'\in\was{\gti'}}\pv{\gti'}(\hi')\exp(-\lr\barc{\gti'}{\hi'})}\\
&=\sum_{\hi\in\hye\setminus\was{\gti'}}\pv{\gti'}(\hi)+\sum_{\hi\in\was{\gti'}}\pv{\gti'}(\hi)\\
&=\sum_{\hi\in\hye}\pv{\gti'}(\hi)\\
&=1
\end{align}

With this in hand, we now have that $\rel{\cop{}}{\pv{\gti}}$ is defined and positive for all $\gti\in\na{T}$.

Let $\zed^{\gti}:=\sum_{\hi'\in\was{\gti}}\barv{\gti}{\hi'}\exp(-\lr\barc{\gti}{\hi'})$. By definition of $\arv{\gti}$ we have:
\begin{equation}\label{progleq}
\zed^{\gti}=\frac{\sum_{\hi'\in\was{\gti}}\pv{\gti}(\hi')\exp(-\lr\barc{\gti}{\hi'})}{\sum_{\hi'\in\was{\gti}}\pv{\gti}(\hi')}
\end{equation} so:
\begin{align}
\notag&\rel{\cop{}}{\pv{\gti}}-\rel{\cop{}}{\pv{\gti+1}}\\
\notag=&\sum_{\hi\in\hye}\cop{}(\hi)\left(\log\left(\frac{\cop{}(\hi)}{\pv{\gti}(\hi)}\right)-\log\left(\frac{\cop{}(\hi)}{\pv{\gti+1}(\hi)}\right)\right)\\
\notag=&\sum_{\hi\in\hye}\cop{}(\hi)\log\left(\frac{\pv{\gti+1}(\hi)}{\pv{\gti}(\hi)}\right)\\
\label{progl1}=&\sum_{\hi\in\was{\gti}}\cop{}(\hi)\log\left(\frac{\pv{\gti+1}(\hi)}{\pv{\gti}(\hi)}\right)\\
\notag=&\musgt\sum_{\hi\in\was{\gti}}\frac{\cop{}(\hi)}{\musgt}\log\left(\frac{\pv{\gti+1}(\hi)}{\pv{\gti}(\hi)}\right)\\
\notag=&\musgt\sum_{\hi\in\was{\gti}}\copgti(\hi)\log\left(\frac{\pv{\gti+1}(\hi)}{\pv{\gti}(\hi)}\right)\\
\label{progl2}=&\musgt\sum_{\hi\in\was{\gti}}\copgti(\hi)\log\left(\frac{\exp(-\lr\barc{\gti}{\hi})\sum_{\hi'\in\was{\gti}}\pv{\gti}(\hi')}{\sum_{\hi'\in\was{\gti}}\pv{\gti}(\hi')\exp(-\lr\barc{\gti}{\hi'})}\right)\\
\label{progl3}=&\musgt\sum_{\hi\in\was{\gti}}\copgti(\hi)\log\left(\frac{\exp(-\lr\barc{\gti}{\hi})}{\zed^{\gti}}\right)\\
\notag=&-\musgt\lr\left(\sum_{\hi\in\was{\gti}}\copgti(\hi)\barc{\gti}{\hi}\right)-\musgt\log(\zed^{\gti})\left(\sum_{\hi\in\was{\gti}}\copgti(\hi)\right)\\
\notag=&-\musgt\lr\inprod{\copgti}{\arc{\gti}}-\musgt\log(\zed^{\gti})\\
\notag=&-\musgt\lr\inprod{\copgti}{\arc{\gti}}-\musgt\log\left(\sum_{\hi\in\was{\gti}}\barv{\gti}{\hi}\exp(-\lr\barc{\gti}{\hi})\right)\\
\label{progl4}\geq&-\musgt\lr\inprod{\copgti}{\arc{\gti}}-\musgt\log\left(\sum_{\hi\in\was{\gti}}\barv{\gti}{\hi}\left(1-\lr\barc{\gti}{\hi}+\frac{1}{2}\lr^2\left(\barc{\gti}{\hi}\right)^2\right)\right)\\
\notag=&-\musgt\lr\inprod{\copgti}{\arc{\gti}}-\musgt\log\left(1-\lr\inprod{\arv{\gti}}{\arc{\gti}}+\frac{1}{2}\lr^2\sum_{\hi\in\was{\gti}}\barv{\gti}{\hi}\left(\barc{\gti}{\hi}\right)^2\right)\\
\notag=&-\musgt\lr\inprod{\copgti}{\arc{\gti}}-\musgt\log\left(1-\lr\inprod{\arv{\gti}}{\arc{\gti}}+\frac{1}{2}\lr^2\sum_{\hi\in\was{\gti}}\barv{\gti}{\hi}\left(\barc{\gti}{\hi}\right)^2\right)\\
\notag\geq&-\musgt\lr\inprod{\copgti}{\arc{\gti}}-\musgt\log\left(1-\lr\inprod{\arv{\gti}}{\arc{\gti}}+\frac{1}{2}\lr^2\sum_{\hi\was{\gti}}\barv{\gti}{\hi}\right)\\
\notag\geq&-\musgt\lr\inprod{\copgti}{\arc{\gti}}-\musgt\log\left(1-\lr\inprod{\arv{\gti}}{\arc{\gti}}+\frac{1}{2}\lr^2\right)\\
\label{progl5}\geq&-\musgt\lr\inprod{\copgti}{\arc{\gti}}+\musgt\lr\inprod{\arv{\gti}}{\arc{\gti}}-\musgt\frac{1}{2}\lr^2\\
=&\musgt\lr\inprod{\arv{\gti}-\copgti}{\arc{\gti}}-\musgt\frac{1}{2}\lr^2
\end{align}
where Equation \eqref{progl1} comes from the fact that if $\hi\in\hye\setminus\was{\gti}$ then $\pv{\gti+1}(\hi)=\pv{\gti}(\hi)$ so $\log(\pv{\gti+1}(\hi)/\pv{\gti}(\hi))=0$, Equation \eqref{progl2} comes from the update of $\pv{\gti}(\hi)$ to $\pv{\gti+1}(\hi)$ when $\hi\in\was{\gti}$, Equation \eqref{progl3} comes from Equation \eqref{progleq}, Equation \eqref{progl4} comes from the inequality $\exp(x)\leq 1-x+x^2/2$ for $x\geq 0$, and Equation \eqref{progl5} comes from the inequality $\log(1+x)\leq x$.

A telescoping sum then gives us:
\begin{align}
\rel{\cop{1}}{\pv{\gti}}&\geq\rel{\cop{1}}{\pv{\gti}}-\rel{\cop{}}{\pv{T+1}}\\
&=\sum_{\gti\in\na{\ntr}}\rel{\cop{}}{\pv{\gti}}-\rel{\cop{}}{\pv{\gti+1}}\\
&=\sum_{\gti\in\na{\ntr}}\left(\musgt\lr\inprod{\arv{\gti}-\copgti}{\arc{\gti}}-\musgt\frac{1}{2}\lr^2\right).
\end{align}
Dividing by $\lr$ and rearranging then gives us the result.
\end{proof}

\subsection{{\sc MA} Specialists}\label{ourspecsec}

Here we reduce the Multitask Allocation model (cf. Figure~\ref{fig:map}) with switching to the Specialist Allocation model (cf. Figure~\ref{fig:sap}). Specifically we define $\hye$, $\was{\gti}$, $\pv{1}$ and $\arc{\gti}$ in Figure \ref{fig:sap} and $\pv{1}$ in Algorithm \ref{alg:sh} from $\nex$, $\liv{\gti}$ and $\blo{\gti}$ in Figure \ref{fig:map}. 

Our reduction is similar to that in \cite{adamskiy2012putting}, which reduced a single task switching model 
to the specialist model of~\cite{\specials}. In~\cite{adamskiy2012putting}, the specialists were expert-circadian pairs, where a circadian was an instance of a Markov chain over the trials: with states ``awake'' and ``asleep''. A specialist predicted the same as its corresponding expert.  A specialist was awake on a trial if and only if the state of its circadian is ``awake'' on that trial. The initial weight of a specialist was proportional to its probability as an instance of the Markov chain. In our reduction, we again define experts as expert-circadian pairs, except this time a circadian is an instance of a set of $\nL$ independent Markov chains: one Markov chain for each task. The Markov chain corresponding to a task $\li$ runs over the trials $t$ in which $\gltf{t}=\li$, in order.

We first characterize the set of all trials as follows:

 \begin{definition}
We define $\str$ to be the set of all $(\li,\ti)$ such that $\li\in\na{\nL}$ and $\ti\in\na{\ltu{\li}+1}$.
\end{definition}

We now define the transition matrix of the Markov chains.

\begin{definition}\label{trmd}
We define:
\begin{itemize}
\item $\trn{1}:=\sip$~~~~~~~~~$\trn{0}:=1-\sip$
\item $\trm{1}{1}:=\ww$~~~~~~~$\trm{1}{0}:=1-\ww$
\item $\trm{0}{1}:=\sw$~~~~~~~$\trm{0}{0}:=1-\sw$
\end{itemize}
where $\sw$, $\ww$ and $\sip$ are defined in the description of  Algorithm \ref{alg:fin}.
\end{definition}

Now we define the notions of circadians, and their weights,

\begin{definition}\label{cirdef}
A {\em circadian} is a function from $\str$ into $\{0,1\}$. Let $\crs$ be the set of all circadians. Given a circadian $\ci\in\crs$ its weight $\crw{\ci}$ is defined as:
$$\crw{\ci}:=\prod_{\li\in\na{\nL}}\trn{\ci(\li,1)}\prod_{\ti\in\na{\ltu{\li}}}\trm{\ci(\li,\ti)}{\ci(\li,\ti+1)}\,.$$
\end{definition}

Finally, we define the multitask allocation ({\sc MA}) specialists, their initial weights and the sets $\{\was{\gti}:\gti\in\na{T}\}$

\begin{definition}
Each specialist is a pair $(\ei,\ci)$ where $\ei$ is an expert and $\ci$ is a circadian. i.e:
$$\hye=[\nex]\times\crs\,.$$
Each specialist $(\ei,\ci)$ has an initial weight: 
$$\pv{1}(\ei,\ci):=\frac{1}{\nex}\crw{\ci}\,.$$
On each trial $\gti\in\na{T}$ the set $\was{\gti}$, of awake specialists, is defined as:
$$\was{\gti}:=\{(\ei,\ci)\in\hye~|~\ci(\liv{\gti},\gltf{\gti})=1\}\,.$$
On each trial $\gti$, our cost function $\arc{\gti}:\was{\gti}\rightarrow[0,1]$ is defined as:
$$\barc{\gti}{(\ei,\ci)}:=\lo{\gti}{\ei}\,.$$
\end{definition}

\subsection{Equivalence of Algorithm \ref{alg:fin} to the Specialist Hedge Algorithm}\label{eqsec}
In this subsection we prove that Algorithm~\ref{alg:fin} and the Specialist Hedge algorithm are equivalent, given the reduction from the Multitask Allocation model to the Specialist Allocation model defined in Subsection \ref{ourspecsec}. Specifically, we prove the following theorem.
\begin{theorem}\label{equivth}
Apply the reduction in Subsection~\ref{ourspecsec} 
from the Multitask Allocation model (cf. Figure~\ref{fig:map}) with switching to the Specialist Allocation model (cf. Figure~\ref{fig:sap}). Suppose that for all trials $\gti$, we have that $\bal{\gti} (\blo{\gti})$ and $\arv{\gti} (\arc{\gti})$ are defined as in Algorithm~\ref{alg:fin} and the Specialist Hedge algorithm (Algorithm~\ref{alg:sh}).  Then for all $\gti\in\na{T}$:
$$\lot{\gti}=\bal{\gti}\cdot\blo{\gti}=\inprod{\arv{\gti}}{\arc{\gti}}=\CBT.$$
\end{theorem}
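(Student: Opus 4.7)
The plan is to prove Theorem~\ref{equivth} by induction on $\tau$, maintaining an invariant that relates the compact state $(\bpe^{\tau}, \baw{1}_{t_1}, \ldots, \baw{\nL}_{t_\nL})$ of Algorithm~\ref{alg:fin} to the marginals of the specialist weights $\pv{\tau}(e,\omega)$. Specifically, I would show that for every trial $\tau$ with $i = \liv{\tau}$ and $t = \gltf{\tau}$, and every expert $e \in [\nex]$,
\[
\pit{\tau}{e} \;=\; \sum_{\omega \in \crs} \pv{\tau}(e,\omega)\,, \qquad \pit{\tau}{e}\,\aw{i}{t,e} \;=\; \sum_{\omega:\,\omega(i,t)=1}\!\! \pv{\tau}(e,\omega)\,.
\]
Granted this invariant, the theorem follows immediately: summing the second identity over $e$ yields the denominator of the Specialist Hedge prediction, so $\al{\tau}{e} = \sum_{\omega} \barv{\tau}{(e,\omega)}$; since the specialist cost $\barc{\tau}{(e,\omega)} = \lo{\tau}{e}$ depends only on $e$, one obtains $\inprod{\arv{\tau}}{\arc{\tau}} = \sum_{(e,\omega)\in\was{\tau}} \barv{\tau}{(e,\omega)}\lo{\tau}{e} = \sum_e \al{\tau}{e}\lo{\tau}{e} = \bal{\tau}\cdot\blo{\tau}$.

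The base case $\tau = 1$ is direct from $\bpe^1 = \bone/\nex$, $\baw{i}_1 = \sip\bone$ and the product form $\crw{\omega} = \prod_j \lambda_{\omega(j,1)}\prod_t \lambda_{\omega(j,t),\omega(j,t+1)}$ of Definition~\ref{cirdef}. For the inductive step, the key structural observation is that the circadian prior factorizes across the $\nL$ independent task-wise Markov chains $\{\omega|_j\}_j$, and that the Specialist Hedge multiplicative update at trial $\tau$ multiplies $\pv{\tau}(e,\omega)$ only by a factor depending on $\omega(i,t)$. Hence, conditional on $e$, the joint distribution $\pv{\tau}(e,\omega)/\pit{\tau}{e}$ preserves the product form $\omega|_1 \perp \cdots \perp \omega|_\nL$ across all trials. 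This conditional independence gives $\mathrm{Pr}(\omega(j,t_j)=1 \mid \omega(i,t)=1, e) = \mathrm{Pr}(\omega(j,t_j)=1 \mid e)$ for every $j \neq i$, which is exactly what allows Algorithm~\ref{alg:fin} to leave $\baw{j}$ untouched for $j \neq i$ while updating only $\bpe$ globally.

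Lines (ii)--(vi) of Algorithm~\ref{alg:fin} then implement one Specialist Hedge step followed by one Markov-chain forward step for task $i$: line (ii) applies the exponential loss factor $\exp(-\lr \lo{\tau}{e})$ to the awake task-weight; line (iii) computes the Hedge normalizer $\saw = (\bpe^\tau\!\cdot\!\bawit)/(\bpe^\tau\!\cdot\!\bwaw)$; line (iv) forms $\waw{e} = (1 - \aw{i}{t,e}) + \saw\, \aw{i}{t,e}\, e^{-\lr \lo{\tau}{e}}$, which equals the ratio $\pit{\tau+1}{e}/\pit{\tau}{e}$ obtained by combining the unchanged asleep contribution with the Hedge-reweighted awake contribution; line (v) multiplies $\bpe$ by this ratio, matching the first part of the invariant at $\tau+1$; and line (vi) advances $\aw{i}{t,e}$ to $\aw{i}{t+1,e}$ via the forward recursion $m^{\tau+1}_i(t+1, e, 1) = \sw\, m^{\tau+1}_i(t, e, 0) + \ww\, m^{\tau+1}_i(t, e, 1)$, with the division by $\bnaw$ converting the joint into the conditional probability required by the second part of the invariant.

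The main obstacle is bookkeeping: confirming that the single global normalization $\bnaw$ in lines (v)--(vi) correctly absorbs both the Specialist Hedge renormalization and the per-task marginalization, without leaking task-specific factors into $\bpe$. This is precisely where the conditional independence across tasks is used: because the Hedge factor only depends on $\omega(i,t)$, the asleep/awake decomposition of $\bnaw$ coincides with the per-expert marginal over $\omega(i,t)$, and no additional bookkeeping for the inactive tasks $j \neq i$ is needed. Once this observation is in place, the verification of lines (ii)--(vi) against the invariant is a direct algebraic check.
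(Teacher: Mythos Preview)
Your plan is correct and targets the same invariant the paper establishes: under $\pv{\tau}$, the task-wise circadian trajectories $\omega|_1,\dots,\omega|_\nL$ remain conditionally independent given the expert, and Algorithm~\ref{alg:fin} stores exactly the per-task marginals of this factored posterior. The paper proves this by carrying the explicit joint $\qf{\tau}{e}(\benl)=\sum_{\omega}\pv{\tau}(e,\omega)\,[\forall j,\ \omega(j,\nxt{\tau}{j})=f_j]$ and showing, via the combinatorial circadian-tail machinery (Lemmas~\ref{eq1l}--\ref{nxtqt}), that it equals the product $\pit{\tau}{e}\prod_j\alt{j}{\tau}{f_j}{e}$ (Lemma~\ref{fqth}). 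You short-circuit this by observing directly that the prior $\crw{\omega}$ factorizes over tasks and each Specialist Hedge update multiplies by a function of a single $\omega(i,t)$, so the product form is preserved; line~(vi) is then one HMM forward step. Your framing is cleaner and avoids the tail bookkeeping; the paper's version is more explicit about why the ``future'' of each chain contributes a factor of~$1$ (Lemma~\ref{eq1l}), which is exactly the fact you invoke implicitly when you say ``one Markov-chain forward step.''

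One point to tighten: the two displayed identities you call the invariant concern only the \emph{current} task $i=\liv{\tau}$, but closing the induction requires the second identity for \emph{every} task $j$ at its local time $\nxt{\tau}{j}$, together with the conditional independence you state in prose. Without that, when $\liv{\tau+1}\neq\liv{\tau}$ you cannot recover $\pit{\tau+1}{e}\,\aw{j}{t_j,e}=\sum_{\omega:\,\omega(j,t_j)=1}\pv{\tau+1}(e,\omega)$ from the displayed equations alone. You clearly have this in mind (it is what justifies leaving $\baw{j}$ untouched for $j\neq i$); just promote it into the stated invariant rather than leaving it implicit.
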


As in the multitask algorithm of \cite{adamskiy2012putting}, the information learned (up to some trial) by the specialist algorithm can be represented by a set of $\na{\nex}$-dimensional vectors: one ``global'' vector and, for each task, a ``local'' vector. However, our global and local vector do not correspond exactly to those of \cite{adamskiy2012putting} in the special case (of our algorithm) of no switching. Like in \cite{\bayessleep}, the prediction and update of the specialist algorithm, on a particular trial $\gti$, depend only on the global vector and the local vector of task $\gltf{\gti}$. Since we have to analyze many Markov chains simultaneously, our proof of equivalence is considerably more involved than the proofs of either of the algorithms in~\cite{adamskiy2012putting}.

In the proof of Theorem \ref{equivth} we shall, for all trials $\gti\in\na{T}$, let $\pv{\gti}:\hye\rightarrow[0,1]$ be defined as in the Specialist Allocation algorithm.

We will need a way to refer to the current local time of a particular task. We will do this via the following function.
\begin{definition}
For all $\gti\in\na{\gtu+1}$ and learners $\li\in\na{\nL}$ we define:
$$\nxt{\gti}{\li}:=\gltf{\min\{\gti'\geq\gti:\gti'\in\na{\gtu},~\liv{\gti'}=\li\}}$$
or, if $\{\gti'\geq\gti:\gti'\in\na{\gtu},~\liv{\gti'}=\li\}$ is empty, we define $\nxt{\gti}{\li}:=\ltu{\li}+1$.
\end{definition}

Given an expert $\ei$, trial $\gti$ and $\benl\in\{0,1\}^{\nta}$ we will now define $\qf{\gti}{\ei}(\benl)$ as the sum, over $\ci$, of the weights of specialists $(\ci,\ei)$ in which, given any task $\li$, the value of $\ci$ on that task, and at its current local time, is equal to $\enl{\li}$. The formal definition of $\qf{\gti}{\ei}$ is as follows.

\begin{definition}
Given an expert $\ei\in\na{\nex}$ and trial $\gti\in\na{\gtu}$ we define the function $\qf{\gti}{\ei}:\{0,1\}^{\nta}\rightarrow[0,1]$ by:
$$\qf{\gti}{\ei}(\benl)=\sum_{\ci\in\crs}\pv{\gti}(\ei,\ci)\idf{\forall \li\in\na{\nL}~,~\ci(\li,\nxt{\gti}{\li})=\enl{\li}}$$
\end{definition}

We now define what we call {\em circadian tails.} Given a  trial $\gti$, a circadian tail is a truncated circadian: truncated so that it is only defined for future trials.

\begin{definition}
For all $\gti\in\na{\gtu}$ we define the respective {\em tail set} $\strt{\gti}$ as the set of all $(\li,\ti)$ such that $\li\in\na{\nL}$ and $\nxt{\gti}{\li}\leq\ti\leq\ltu{\li}+1$.

For all trials $\gti\in\na{\gtu}$ a respective {\em circadian tail} is a function from $\strt{\gti}$ into $\{0,1\}$. We define $\crst{\gti}$ to be the set of respective circadian tails of $\gti$.

Given a trial $\gti\in\na{\gtu}$ and a respective circadian tail $\ci\in\crst{\gti}$ we define its ``weight'' as:
$$\trw{\ci}:=\prod_{\li\in\na{\nL}}\prod_{\ti=\nxt{\gti}{\li}}^{\ltu{\li}}\trm{\ci(\li,\ti)}{\ci(\li,\ti+1)}\,.$$

Given a trial $\gti\in\na{\gtu}$ and a respective circadian tail $\ci\in\crst{\gti}$ we define its ``start'' as the vector $\bsrt{\ci}\in\{0,1\}^{\nta}$ defined by:
$$\srt{\ci}{\li}=\ci(\li,\nxt{\gti}{\li})\,.$$
\end{definition}

We now define, for every trial, a ``renormalisation constant'' $\nc{\gti}$. Awake experts are multiplied by $\nc{\gti}$ after the initial ``Hedge''-like update on trial $\gti$.

\begin{definition}
For all trials $\gti\in\na{\gtu}$ we define: \[\nc{\gti}:=\frac{\sum_{(\ei,\ci)\in\was{\gti}}\pv{\gti}(\ei,\ci)}{\sum_{(\ei',\ci')\in\was{\gti}}\pv{\gti}(\ei',\ci')\exp(-\lr\lo{\gti}{\ei'})}\,.\]
\end{definition}

We will utilize the following equalities throughout the proof.

\begin{lemma}\label{nxtl}
For all trials $\gti\in\na{\gtu}$ we have:
$$\nxt{\gti+1}{\li}=\nxt{\gti}{\li}+1=\gltf{\gti}+1~~~\operatorname{if}~\li=\liv{\gti}\,,$$
$$\nxt{\gti+1}{\li}=\nxt{\gti}{\li}~~~\operatorname{if}~\li\neq\liv{\gti}\,.$$
This then implies that:
$$\strt{\gti}=\strt{\gti+1}\cup\{(\liv{\gti},\gltf{\gti})\}\,.$$
\end{lemma}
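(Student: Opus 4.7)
The plan is to verify both identities by unfolding the definition of $\nxt{\cdot}{\cdot}$ via a case analysis on whether $\li=\liv{\gti}$; the set equality will then follow by taking unions over $\li\in\na{\nL}$.

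For the case $\li=\liv{\gti}$: since $\gti$ itself lies in the set $\{\gti'\geq\gti:\liv{\gti'}=\li\}$, the minimum in the definition of $\nxt{\gti}{\li}$ is attained at $\gti'=\gti$, so $\nxt{\gti}{\li}=\gltf{\gti}$. For $\nxt{\gti+1}{\li}$ I would let $\gti''$ denote the next global trial $\geq \gti+1$ with $\liv{\gti''}=\li$, if one exists; no trial strictly between $\gti$ and $\gti''$ has task $\li$, so the local-time counter advances by exactly one from $\gti$ to $\gti''$, giving $\gltf{\gti''}=\gltf{\gti}+1$. If no such $\gti''$ exists then $\gti$ is the final task-$\li$ trial, the default $\nxt{\gti+1}{\li}=\ltu{\li}+1$ kicks in, and since $\gltf{\gti}=\ltu{\li}$ in that situation the identity again reduces to $\gltf{\gti}+1$.

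For the case $\li\neq\liv{\gti}$: removing $\gti$ from $\{\gti'\geq\gti:\liv{\gti'}=\li\}$ leaves the set unchanged (since $\gti$ was not in it to begin with), so it coincides with $\{\gti'\geq\gti+1:\liv{\gti'}=\li\}$. Their minima therefore agree, or both are undefined and the default $\ltu{\li}+1$ applies to both; either way $\nxt{\gti+1}{\li}=\nxt{\gti}{\li}$.

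For the set identity, I decompose each of $\strt{\gti}$ and $\strt{\gti+1}$ as a disjoint union of per-task slices $\{(\li,\ti):\nxt{\cdot}{\li}\leq \ti\leq \ltu{\li}+1\}$ over $\li\in\na{\nL}$. The two cases above show that these slices coincide for $\li\neq \liv{\gti}$ and that, for $\li=\liv{\gti}$, the slice of $\strt{\gti}$ contains exactly one extra element, namely $(\liv{\gti},\gltf{\gti})$, compared with that of $\strt{\gti+1}$. Taking unions yields $\strt{\gti}=\strt{\gti+1}\cup\{(\liv{\gti},\gltf{\gti})\}$. The only subtlety is the boundary case where $\gti$ is the last occurrence of task $\li$; the conventions built into the definitions of $\nxt{}{}$ and $\strt{}$ were tailored precisely to absorb this edge case uniformly, so no genuine obstacle should arise.
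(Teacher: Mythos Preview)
Your proof is correct and proceeds exactly as one would expect: the paper simply writes ``Immediate'' for this lemma, and your case analysis is the natural unpacking of that word. The handling of the boundary case (when $\gti$ is the final occurrence of task $\li$) is done carefully and matches the conventions in the definitions.
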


\begin{proof}
Immediate.
\end{proof}

Given $\benl\in\{0,1\}^{\nta}$ we now define $\bsbl{\gti}{j}$ as equal to $\benl$ except for the $\liv{\gti}$-th component, which is equal to $j$.
\begin{definition}
Given $j\in\{0,1\}$ and $\benl\in\{0,1\}^{\nta}$ we define $\bsbl{\gti}{j}$ by:
\[ \sbl{\gti}{j}{\li}:=\begin{cases}
j & \li=\liv{\gti} \\
\enl{\li} & \li\neq\liv{\gti}
\end{cases}\quad (i\in [\nta])\,.
\]
\end{definition}

The next lemma shows that the weights of circadian tails (respective to a given trial) which have the same starting values sum to one.

\begin{lemma}\label{eq1l}
Given  $\gti\in\na{\gtu+1}$ and $\benl\in\{0,1\}^{\nta}$ we have:
$$\sum_{\ci\in\crst{\gti}}\idf{\bsrt{\ci}=\benl}\trw{\ci}=1$$
\end{lemma}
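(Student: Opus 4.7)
The plan is to exploit the product structure built into Definition of $\trw{\ci}$: the weight of a circadian tail factorizes as a product over the $\nL$ tasks, and the constraint $\bsrt{\ci}=\benl$ fixes the value of $\ci$ at the single ``starting'' coordinate $(\li,\nxt{\gti}{\li})$ of each task. Consequently the whole sum should factor into a product of $\nL$ independent per-task sums, each of which is the total probability of an inhomogeneous $2$-state Markov chain started at a fixed initial state; this total is $1$ because the rows of the transition matrix from Definition~\ref{trmd} sum to one ($\trm{j}{0}+\trm{j}{1}=1$ for $j\in\{0,1\}$).

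Concretely, I would first rewrite $\crst{\gti}$ as a Cartesian product $\prod_{\li\in[\nL]}\{0,1\}^{\{\nxt{\gti}{\li},\ldots,\ltu{\li}+1\}}$, writing each $\ci\in\crst{\gti}$ as a tuple $(\ci_\li)_{\li\in[\nL]}$ of per-task strings. Under this identification, the constraint $\bsrt{\ci}=\benl$ decouples into the per-task constraints $\ci_\li(\nxt{\gti}{\li})=\enl{\li}$, and the weight factors:
\[
\trw{\ci}=\prod_{\li\in[\nL]}\prod_{\ti=\nxt{\gti}{\li}}^{\ltu{\li}}\trm{\ci_\li(\ti)}{\ci_\li(\ti+1)}\,.
\]
Therefore
\[
\sum_{\ci\in\crst{\gti}}\idf{\bsrt{\ci}=\benl}\trw{\ci}
=\prod_{\li=1}^{\nL}\Bigl(\sum_{\ci_\li}\idf{\ci_\li(\nxt{\gti}{\li})=\enl{\li}}\prod_{\ti=\nxt{\gti}{\li}}^{\ltu{\li}}\trm{\ci_\li(\ti)}{\ci_\li(\ti+1)}\Bigr)\,.
\]

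Next I would show that each factor on the right equals $1$. Fix a task $\li$; writing $a=\nxt{\gti}{\li}$ and $b=\ltu{\li}+1$, the inner sum is
\[
S_\li:=\sum_{\substack{\ci_\li:\{a,\ldots,b\}\to\{0,1\}\\ \ci_\li(a)=\enl{\li}}}\prod_{\ti=a}^{b-1}\trm{\ci_\li(\ti)}{\ci_\li(\ti+1)}\,.
\]
I would prove $S_\li=1$ by a short reverse induction on the length $b-a$: when $b=a$ the product is empty and the only admissible $\ci_\li$ is $\ci_\li(a)=\enl{\li}$, giving $S_\li=1$; for the inductive step, sum first over $\ci_\li(b)\in\{0,1\}$ to collapse the last factor using $\trm{j}{0}+\trm{j}{1}=1$, reducing to the same sum with $b$ replaced by $b-1$. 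The only minor subtlety (and the only place a bit of care is needed) is the edge case $\nxt{\gti}{\li}=\ltu{\li}+1$, which occurs when task $\li$ has no remaining trials; there the tail is a single point and the empty product equals $1$, so $S_\li=1$ still holds.

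Multiplying $S_\li=1$ across $\li\in[\nL]$ yields the claim. There is no real obstacle here; the only thing to be careful about is bookkeeping the index sets $\strt{\gti}$ and the boundary case above, both of which follow directly from the definitions and from Lemma~\ref{nxtl}.
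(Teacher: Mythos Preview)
Your proof is correct and takes a somewhat different route from the paper's. The paper proves the lemma by reverse induction on the \emph{global} trial index $\gti$: the base case $\gti=\gtu+1$ has each task's tail reduced to a single point (so the unique tail with start $\benl$ has empty product $1$), and the inductive step passes from $\gti'+1$ to $\gti'$ by invoking Lemma~\ref{nxtl} to write $\strt{\gti'}=\strt{\gti'+1}\cup\{(\liv{\gti'},\gltf{\gti'})\}$, splitting each tail in $\crst{\gti'}$ according to the value $j\in\{0,1\}$ it assigns at the new coordinate, and collapsing via $\trm{\enl{\liv{\gti'}}}{0}+\trm{\enl{\liv{\gti'}}}{1}=1$.

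You instead factorize over tasks from the start: since $\crst{\gti}$, the constraint $\bsrt{\ci}=\benl$, and the weight $\trw{\ci}$ all decompose as products over $\li\in\na{\nL}$, the sum splits as $\prod_{\li} S_\li$, and each $S_\li$ is dispatched by a short per-task reverse induction on the tail length. This is arguably cleaner, as it makes the independence across tasks explicit and does not depend on the global task ordering $\liV$; the paper's global induction effectively performs the same Markov-chain collapse one coordinate at a time, interleaved across tasks according to $\liV$. Both arguments rest on the single identity $\trm{j}{0}+\trm{j}{1}=1$, so neither gains materially over the other, though yours is more transparent about \emph{why} the total is $1$ (total probability of independent two-state chains from fixed initial states).
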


\begin{proof}
We prove by reverse induction on $\gti$ (i.e. from $\gti=\gtu+1$ to $\gti=1$).

In the case that $\gti=\gtu+1$ we have, for all $\li\in\na{\nL}$ that $\nxt{\gtu}{\li}=\ltu{\li}+1$. This implies that $\strt{\gtu}$ is the set of all $(\li,\ti)$ such that $\li\in\na{\nL}$ and $\ti=\ltu{\li}+1$. This means that there is a unique $\ci\in\crst{\gtu}$ with $\bsrt{\ci}=\benl$. By definition of $\trw{\cdot}$ we have, for all $\ci\in\crst{\gtu}$, that $\trw{\ci}=1$. Putting together gives us the result.

Now suppose it holds for $\gti=\gti'+1$ for some $\gti'\in\na{\gtu-1}$. We shall now show that it holds for $\gti=\gti'$ which will complete the proof. By Lemma \ref{nxtl} we, given $\ci'\in\crst{\gti'+1}$, have that there exists $\ci\in\crst{\gti'}$ with $\bsrt{\ci}=\benl$ and $\rtr{\ci}{\strt{\gti'+1}}=\ci'$ if and only if $\bsrt{\ci'}=\bsbl{\gti'}{j}$ for some $j\in\{0,1\}$. Furthermore, given $\ci'\in\crst{\gti'+1}$, such an $\ci$ is unique.

Given $j\in\{0,1\}$, $\ci'\in\crst{\gti'+1}$, and $\ci\in\crst{\gti'}$ with $\rtr{\ci}{\strt{\gti'+1}}=\ci'$, $\bsrt{\ci}=\benl$ and $\bsrt{\ci'}=\bsbl{\gti'}{j}$, we have, from Lemma \ref{nxtl} and definition of $\trw{\cdot}$, that 
\begin{align}
\notag\trw{\ci}&=\trm{\ci(\liv{\gti'},\gltf{\gti'})}{\ci(\liv{\gti'},\gltf{\gti'+1})}\trw{\ci'}\\
\notag&=\trm{\enl{\liv{\gti'}}}{j}\trw{\ci'}
\end{align}
Putting together gives us:
\begin{align}
\notag\sum_{\ci\in\crst{\gti'}}\idf{\bsrt{\ci}=\benl}\trw{\ci}&=\sum_{j\in\{0,1\}}\sum_{\ci'\in\crst{\gti'+1}}\idf{\bsrt{\ci'}=\bsbl{\gti'}{j}}\trm{\enl{\liv{\gti'}}}{j}\trw{\ci'}\\
\notag&=\sum_{j\in\{0,1\}}\trm{\enl{\liv{\gti'}}}{j}\sum_{\ci'\in\crst{\gti'+1}}\idf{\bsrt{\ci'}=\bsbl{\gti'}{j}}\trw{\ci'}
\end{align}
By the inductive hypothesis we have that $\sum_{\ci'\in\crst{\gti'+1}}\idf{\bsrt{\ci'}=\bsbl{\gti'}{j}}\trw{\ci'}=1$ for $j\in\{0,1\}$. Substituting into the above equation gives us
\begin{equation}
\notag\sum_{\ci\in\crst{\gti'}}\idf{\bsrt{\ci}=\benl}\trw{\ci}=\sum_{j\in\{0,1\}}\trm{\enl{\liv{\gti'}}}{j}=1
\end{equation}
which proves the inductive hypothesis holds for $\gti=\gti'$. This completes the proof.
\end{proof}

The next lemma quantifies the function $\qf{1}{\ei}$.

\begin{lemma}\label{ivth}
Given any expert $\ei\in\na{\nex}$ and vector $\benl\in\{0,1\}$ we have:
$$\qf{1}{\ei}(\benl)=\frac{1}{\nex}\prod_{\li\in\na{\nL}}\trn{\enl{\li}}$$
\end{lemma}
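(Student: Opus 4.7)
The plan is to compute $\qf{1}{\ei}(\benl)$ directly from its definition and reduce it, via the product structure of $\crw{\cdot}$, to the statement of Lemma~\ref{eq1l} applied at $\gti=1$.

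First I would unfold the definition:
\[
\qf{1}{\ei}(\benl) \;=\; \sum_{\ci\in\crs} \pv{1}(\ei,\ci)\, \idf{\forall \li\in[\nL],\ \ci(\li,\nxt{1}{\li}) = \enl{\li}} \;=\; \frac{1}{\nex} \sum_{\ci\in\crs} \crw{\ci}\, \idf{\forall \li,\ \ci(\li,\nxt{1}{\li}) = \enl{\li}},
\]
using the definition of $\pv{1}$. Next I would observe that $\nxt{1}{\li}=1$ for every $\li\in[\nL]$: if task $\li$ is played at some global trial, its first local time is $1$ by definition of $\gltF$; if it is never played, then $\ltu{\li}=0$ and the convention gives $\nxt{1}{\li} = \ltu{\li}+1 = 1$. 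In either case the constraint reads $\ci(\li,1) = \enl{\li}$, which in the language of circadian tails is exactly $\bsrt{\ci}=\benl$ with $\crst{1} = \crs$.

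The key algebraic step is to factor $\crw{\ci}$ into its ``initial'' and ``transition'' parts. By Definition~\ref{cirdef},
\[
\crw{\ci} \;=\; \prod_{\li\in[\nL]} \trn{\ci(\li,1)} \cdot \prod_{\li\in[\nL]} \prod_{\ti=1}^{\ltu{\li}} \trm{\ci(\li,\ti)}{\ci(\li,\ti+1)} \;=\; \prod_{\li\in[\nL]} \trn{\ci(\li,1)} \cdot \trw{\ci},
\]
where in the second equality I identify $\ci\in\crs$ with its image in $\crst{1}$ and use the definition of $\trw{\cdot}$ at $\gti=1$. On the event $\{\ci(\li,1) = \enl{\li}\ \forall \li\}$, the initial factor equals the constant $\prod_{\li}\trn{\enl{\li}}$, so it can be pulled out of the sum.

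Plugging back in yields
\[
\qf{1}{\ei}(\benl) \;=\; \frac{1}{\nex}\,\Bigl(\prod_{\li\in[\nL]}\trn{\enl{\li}}\Bigr) \sum_{\ci\in\crst{1}} \idf{\bsrt{\ci}=\benl}\, \trw{\ci}.
\]
The remaining sum equals $1$ by Lemma~\ref{eq1l} applied with $\gti=1$, giving the claimed identity. There is no real obstacle here—this is essentially a bookkeeping lemma whose content is that the $\nL$ Markov chains are independent and that each chain's transition distribution normalizes to one; the mild subtlety is only the verification that $\nxt{1}{\li}=1$ uniformly so that Lemma~\ref{eq1l} is applicable as stated.
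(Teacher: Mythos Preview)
Your proof is correct and follows essentially the same approach as the paper: unfold the definition of $\qf{1}{\ei}(\benl)$, factor $\crw{\ci}$ into the initial factor $\prod_{\li}\trn{\ci(\li,1)}$ times the tail weight $\trw{\ci}$, pull out the constant under the constraint $\ci(\li,1)=\enl{\li}$, and conclude by Lemma~\ref{eq1l} at $\gti=1$. Your explicit verification that $\nxt{1}{\li}=1$ even when $\ltu{\li}=0$ is a small addition the paper leaves implicit.
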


\begin{proof}
Since $\crs=\crst{1}$ we have that:
\begin{align}
\notag\qf{1}{\ei}(\benl)&=\sum_{\ci\in\crs}\pv{1}(\ei,\ci)\idf{\ci\in\crs~|~\forall \li\in\na{\nL}~,~\ci(\li,\nxt{1}{\li})=\enl{\li}}\\
\notag&=\sum_{\ci\in\crst{1}}\pv{1}(\ei,\ci)\idf{\ci\in\crs~|~\forall \li\in\na{\nL}~,~\ci(\li,\nxt{1}{\li})=\enl{\li}}\\
\notag&=\sum_{\ci\in\crst{1}}\pv{1}(\ei,\ci)\idf{\bsrt{\ci}=\benl}\\
\notag&=\sum_{\ci\in\crst{1}}\frac{1}{\nex}\crw{\ci}\idf{\bsrt{\ci}=\benl}\\
\notag&=\sum_{\ci\in\crst{1}}\frac{1}{\nex}\left(\prod_{\li\in\na{\nL}}\trn{\ci(\li,1)}\right)\trw{\ci}\idf{\bsrt{\ci}=\benl}\\
\notag&=\sum_{\ci\in\crst{1}}\frac{1}{\nex}\left(\prod_{\li\in\na{\nL}}\trn{\ci(\li,\nxt{1}{\li})}\right)\trw{\ci}\idf{\bsrt{\ci}=\benl}\\
\notag&=\sum_{\ci\in\crst{1}}\frac{1}{\nex}\left(\prod_{\li\in\na{\nL}}\trn{\enl{\li}}\right)\trw{\ci}\idf{\bsrt{\ci}=\benl}\\
\notag&=\frac{1}{\nex}\left(\prod_{\li\in\na{\nL}}\trn{\enl{\li}}\right)\sum_{\ci\in\crst{1}}\trw{\ci}\idf{\bsrt{\ci}=\benl}
\end{align}
which, by Lemma \ref{eq1l}, is equal to $\frac{1}{\nex}\prod_{\li\in\na{\nL}}\trn{\enl{\li}}$.
\end{proof}

The next lemma shows how $\qf{\gti}{\ei}$ updates to $\qf{\gti+1}{\ei}$.

\begin{lemma}\label{nxtqt}
Given any trial $\gti\in\na{\gtu}$, expert $\ei\in\na{\nex}$, and vector $\benl\in\{0,1\}^{\nta}$ we have:
$$\qf{\gti+1}{\ei}(\benl)=\qf{\gti}{\ei}(\bsbl{\gti}{0})\trm{0}{\enl{\liv{\gti}}}+\nc{\gti}\exp(-\lr\lo{\gti}{\ei})\qf{\gti}{\ei}(\bsbl{\gti}{1})\trm{1}{\enl{\liv{\gti}}}.$$
\end{lemma}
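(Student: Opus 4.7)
The strategy is to expand $\qf{\gti+1}{\ei}(\benl)$ from its definition, partition the sum over circadians by the value $j := \ci(\liv{\gti},\gltf{\gti}) \in \{0,1\}$ (which determines whether $(\ei,\ci)$ is awake on trial $\gti$), apply the Specialist Hedge update rule, and then peel off a single Markov transition using Lemma~\ref{eq1l}.

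First, Lemma~\ref{nxtl} rewrites the constraint defining $\qf{\gti+1}{\ei}(\benl)$ as $\ci(\li,\nxt{\gti}{\li}) = \enl{\li}$ for each $\li \neq \liv{\gti}$, together with $\ci(\liv{\gti},\gltf{\gti}+1) = \enl{\liv{\gti}}$ for the active task---a ``one-step future'' constraint. Splitting on $j$ and substituting the Algorithm~\ref{alg:sh} update rule (asleep case $j=0$: $\pv{\gti+1}(\ei,\ci) = \pv{\gti}(\ei,\ci)$; awake case $j=1$: additional factor $\nc{\gti}\exp(-\lr \lo{\gti}{\ei})$) reduces the lemma to the identity, for each $j \in \{0,1\}$,
\begin{equation}\label{eq:keyid_nxtqt}
\sum_{\ci \in \crs} \pv{\gti}(\ei,\ci)\,\idf{\ci(\liv{\gti},\gltf{\gti}) = j,\ \ci(\liv{\gti},\gltf{\gti}+1) = \enl{\liv{\gti}},\ \ci(\li,\nxt{\gti}{\li}) = \enl{\li}\ \forall \li \neq \liv{\gti}} = \trm{j}{\enl{\liv{\gti}}}\,\qf{\gti}{\ei}(\bsbl{\gti}{j}).
\end{equation}

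To prove~\eqref{eq:keyid_nxtqt}, I would unroll the update rule to obtain the closed form $\pv{\gti}(\ei,\ci) = \tfrac{1}{\nex}\crw{\ci}\prod_{s<\gti}g_s(\ci(\liv{s},\gltf{s}))$, where $g_s(1) := \nc{s}\exp(-\lr\lo{s}{\ei})$ and $g_s(0) := 1$ depend only on past-trial coordinates of $\ci$. Factoring $\crw{\ci}$ across the Markov bond between local times $\gltf{\gti}$ and $\gltf{\gti}+1$ on task $\liv{\gti}$ isolates the single transition $\trm{\ci(\liv{\gti},\gltf{\gti})}{\ci(\liv{\gti},\gltf{\gti}+1)} = \trm{j}{\ci(\liv{\gti},\gltf{\gti}+1)}$, leaving a ``past-and-present'' weight (involving $\ci(\li,\ti)$ for $\ti \leq \nxt{\gti}{\li}$) and a ``tail'' factor (involving $\ti \geq \nxt{\gti}{\li}$). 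On the right-hand side of~\eqref{eq:keyid_nxtqt}, summing $\qf{\gti}{\ei}(\bsbl{\gti}{j})$ over its free tail coordinates with start $\bsbl{\gti}{j}$ collapses the tail to $1$ by Lemma~\ref{eq1l}. On the left-hand side, the extra constraint $\ci(\liv{\gti},\gltf{\gti}+1) = \enl{\liv{\gti}}$ pins the first coordinate of this tail, so the peeled-off bond contributes the factor $\trm{j}{\enl{\liv{\gti}}}$, while the remainder beyond $(\liv{\gti},\gltf{\gti}+1)$ is itself a circadian tail respective to $\gti+1$ with start $\benl$ and sums to $1$ by Lemma~\ref{eq1l} again. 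The two sides therefore differ by exactly $\trm{j}{\enl{\liv{\gti}}}$. Reassembling the $j=0$ and $j=1$ contributions gives the statement of the lemma.

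The main obstacle I expect is the careful partitioning of the index set $\str$ into ``past'' coordinates (on which the update factors $g_s$ depend), the fixed ``current'' coordinates $\{(\li,\nxt{\gti}{\li})\}_\li$ that appear in the constraints of $\qf{\gti}{\ei}$, and the ``tail'' coordinates that must be marginalized; and in verifying that the two applications of Lemma~\ref{eq1l} (at trial $\gti$ on the right and at trial $\gti+1$ on the left) pair up correctly so that only the single transition $\trm{j}{\enl{\liv{\gti}}}$ survives. Once this bookkeeping is set up, the Markov property---that given the state at local time $\gltf{\gti}$ the next state is independent of the past with transition probability $\trm{j}{\cdot}$---makes the remainder nearly mechanical.
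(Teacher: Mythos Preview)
Your plan is correct and will go through. The closed form $p^\gti(e,\ci)=\tfrac{1}{n}\,\crw{\ci}\,\prod_{s<\gti}g_s(\ci(\liv{s},\gltf{s}))$ is exactly what one gets by unrolling Algorithm~\ref{alg:sh}, and since each $g_s$ depends only on a coordinate $(\liv{s},\gltf{s})$ with $\gltf{s}<\nxt{\gti}{\liv{s}}$, the weight genuinely factors across the Markov bond at $(\liv{\gti},\gltf{\gti})\to(\liv{\gti},\gltf{\gti}+1)$. The two applications of Lemma~\ref{eq1l}---one at level $\gti$ to evaluate $\qf{\gti}{\ei}(\bsbl{\gti}{j})$, one at level $\gti+1$ to collapse the residual tail on the constrained side---line up just as you describe, leaving only the single factor $\trm{j}{\enl{\liv{\gti}}}$.

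The paper takes a different route: rather than writing down the closed form for $p^\gti$, it proves by induction on $\gti$ the stronger auxiliary identity
\[
\sum_{\ci'\in\crs}\pv{\gti}(\ei,\ci')\,\idf{\rtr{\ci'}{\strt{\gti}}=\ci}=\qf{\gti}{\ei}(\bsrt{\ci})\,\trw{\ci}
\]
for every circadian tail $\ci\in\crst{\gti}$, and derives the lemma's recursion as a byproduct of the inductive step (splitting a tail $\ci\in\crst{\gti'+1}$ into its two one-step extensions $\ci_0,\ci_1\in\crst{\gti'}$). Your approach is more elementary in that the only induction needed is the trivial one verifying the product formula for $p^\gti$, after which everything is a direct marginalization. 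The paper's approach, on the other hand, never writes down the closed form and instead packages the past-versus-tail decoupling into a single structural invariant; this makes the role of $\qf{\gti}{\ei}$ as the ``past marginal'' more explicit and is arguably cleaner to reuse downstream (e.g.\ in Lemma~\ref{fqth}). Either argument is fine here.
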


\begin{proof}
We will prove lemma the statement by also proving that for all  respective circadian tails $\ci\in\crst{\gti+1}$ we have:
$$\sum_{\ci'\in\crs}\pv{\gti+1}(\ei,\ci')\idf{\rtr{\ci'}{\strt{\gti+1}}=\ci}=\qf{\gti+1}{\ei}(\bsrt{\ci})\trw{\ci}.$$ We prove both this and the lemma statement via induction on $\gti$.

We first prove that for all $\ci\in\crst{1}$ we have: $$\sum_{\ci'\in\crs}\pv{1}(\ei,\ci')\idf{\rtr{\ci'}{\strt{1}}=\ci}=\qf{1}{\ei}(\bsrt{\ci})\trw{\ci},$$
 which will seed the induction. Since $\strt{1}=\str$ and hence $\{\ci'\in\crs~|~\rtr{\ci'}{\strt{1}}=\ci\}=\{\ci\}$ we have:
 \begin{align}
\notag \sum_{\ci'\in\crs}\pv{1}(\ei,\ci')\idf{\rtr{\ci'}{\strt{1}}=\ci}&= \pv{1}(\ei,\ci)\\
\notag &=\frac{1}{\nex}\crw{\ci}\\
 \notag&=\frac{1}{\nex}\left(\prod_{\li\in\na{\nL}}\trn{\ci(\li,1)}\right)\trw{\ci}\\
  \notag&=\frac{1}{\nex}\left(\prod_{\li\in\na{\nL}}\trn{\ci(\li,\nxt{1}{\li})}\right)\trw{\ci}\\
 \notag&=\frac{1}{\nex}\left(\prod_{\li\in\na{\nL}}\trn{\srt{\ci}{\li}}\right)\trw{\ci}
 \end{align}
 which, by Lemma \ref{ivth}, is equal to $\qf{1}{\ei}(\bsrt{\ci})\trw{\ci}$.

Now suppose that, for some $\gti'\in\na{\gtu}$, the inductive hypothesis holds for $\gti=\gti'$. We now show that it holds for $\gti=\gti'+1$. First, for all $j\in\{0,1\}$ we define $\ci_j$ to be the circadian tail in $\crst{\gti'}$ defined by $\ci_j(\liv{\gti'},\gltf{\gti'}):=j$ and $\rtr{\ci_j}{\strt{\gti'+1}}:=\ci$. By Lemma \ref{nxtl}, $\ci_j$ is unique and:
$$\{\ci''\in\strt{\gti'}~|~\rtr{\ci''}{\strt{\gti'+1}}=\ci\}=\{\ci_0,\ci_1\}.$$
This implies that:
\begin{equation}\label{eq1}
\{\ci'\in\crs~|~\rtr{\ci'}{\strt{\gti'+1}}=\ci\}=\bigcup_{j\in\{0,1\}}\{\ci'\in\crs~|~\rtr{\ci'}{\strt{\gti'}}=\ci_j\}
\end{equation}
where the two sets on the right hand side are disjoint. This then implies that:
\begin{equation}\label{eq2}
\sum_{\ci'\in\crs}\pv{\gti'+1}(\ei,\ci')\idf{\rtr{\ci'}{\strt{\gti'+1}}=\ci}=\sum_{j\in\{0,1\}}\sum_{\ci'\in\crs}\pv{\gti'+1}(\ei,\ci')\idf{\rtr{\ci'}{\strt{\gti'}}=\ci_j}.
\end{equation}
If $\idf{\rtr{\ci'}{\strt{\gti'}}=\ci_0}$ then $\ci'(\liv{\gti'},\gltf{\gti'}):=0$ so from the definition of $\pv{\gti'+1}$ we have $\pv{\gti'+1}(\ei,\ci)=\pv{\gti'}(\ci')$. On the other hand, if $\idf{\rtr{\ci'}{\strt{\gti'}}=\ci_1}$ then $\ci'(\liv{\gti'},\gltf{\gti'}):=1$ so from the definition of $\pv{\gti'+1}$ we have $\pv{\gti'+1}(\ei,\ci)=\nc{\gti'}\exp(-\lr\lo{\gti}{\ei})\pv{\gti'}(\ci')$. Substituting into the inductive hypothesis gives us:
\begin{align}
\notag\sum_{\ci'\in\crs}\pv{\gti'+1}(\ei,\ci')\idf{\rtr{\ci'}{\strt{\gti'}}=\ci_0}&=\sum_{\ci'\in\crs}\pv{\gti'}(\ei,\ci')\idf{\rtr{\ci'}{\strt{\gti'}}=\ci_0}\\
\notag&=\qf{\gti'}{\ei}(\bsrt{\ci_0})\trw{\ci_0}
\end{align}
and
\begin{align}
\notag&\sum_{\ci'\in\crs}\pv{\gti'+1}(\ei,\ci')\idf{\rtr{\ci'}{\strt{\gti'}}=\ci_1}\\
\notag=&\nc{\gti'}\exp(-\lr\lo{\gti'}{\ei})\sum_{\ci'\in\crs}\pv{\gti'}(\ei,\ci')\idf{\rtr{\ci'}{\strt{\gti'}}=\ci_1}\\
\notag=&\nc{\gti'}\exp(-\lr\lo{\gti'}{\ei})\qf{\gti'}{\ei}(\bsrt{\ci_1})\trw{\ci_1}.
\end{align}
Substituting into Equation \eqref{eq2} gives us:
\begin{align}
\notag&\sum_{\ci'\in\crs}\pv{\gti'+1}(\ei,\ci')\idf{\rtr{\ci'}{\strt{\gti'+1}}=\ci}\\
\label{eq3}=&\qf{\gti'}{\ei}(\bsrt{\ci_0})\trw{\ci_0}+\nc{\gti'}\exp(-\lr\lo{\gti'}{\ei})\qf{\gti'}{\ei}(\bsrt{\ci_1})\trw{\ci_1}.
\end{align}
Now let $\benl:=\bsrt{\ci}$. Given $j\in\{0,1\}$ we have, from definition of $\ci_j$ and Lemma \ref{nxtl}, $\bsrt{\ci_j}=\bsbl{\gti'}{j}$. Also, from the definition of $\ci_j$ and the function $\trw{\cdot}$ we have $\trw{\ci_j}=\trm{j}{\ci(\liv{\gti'},\gltf{\gti'}+1)}\trw{\ci}$ which is equal to $\trm{j}{\enl{\liv{\gti'}}}\trw{\ci}$. Substituting into Equation \eqref{eq3} gives us:
\begin{align}
\notag&\sum_{\ci'\in\crs}\pv{\gti'+1}(\ei,\ci')\idf{\rtr{\ci'}{\strt{\gti'+1}}=\ci}\\
\notag=&\qf{\gti'}{\ei}(\bsbl{\gti'}{0})\trw{\ci_0}+\nc{\gti'}\exp(-\lr\lo{\gti'}{\ei})\qf{\gti'}{\ei}(\bsbl{\gti'}{1})\trw{\ci_1}\\
\label{eq4}=&\left(\qf{\gti'}{\ei}(\bsbl{\gti'}{0})\trm{0}{\enl{\liv{\gti'}}}+\nc{\gti'}\exp(-\lr\lo{\gti'}{\ei})\qf{\gti'}{\ei}(\bsbl{\gti'}{1})\trm{1}{\enl{\liv{\gti'}}}\right)\trw{\ci}.
\end{align}
Letting: $$r:=\qf{\gti'}{\ei}(\bsbl{\gti'}{0})\trm{0}{\enl{\liv{\gti'}}}+\nc{\gti'}\exp(-\lr\lo{\gti'}{\ei})\qf{\gti'}{\ei}(\bsbl{\gti'}{1})\trm{1}{\enl{\liv{\gti'}}}$$ we will now show that $r=\qf{\gti'+1}{\ei}(\benl)$ which, by Equation \eqref{eq4} shows that the inductive hypothesis holds for $\gti=\gti'$.

From the definition of $\qf{\gti'+1}{\ei}$ we have, for any $\benl\in\{0,1\}^{\nL}$:
$$\qf{\gti'+1}{\ei}(\benl)=\sum_{\ci'\in\crs}\pv{\gti'+1}(\ei,\ci')\idf{\forall \li\in\na{\nL}~,~\ci'(\li,\nxt{\gti'+1}{\li})=\enl{\li}}.$$
We have that $\idf{\forall \li\in\na{\nL}~,~\ci'(\li,\nxt{\gti'+1}{\li})=\enl{\li}}=1$ if and only if there exists $\ci''\in\strt{\gti'+1}$ with $\bsrt{\ci''}=\benl$ and $\rtr{\ci'}{\strt{\gti'+1}}=\ci''$. This gives us:
\begin{align}
\notag\qf{\gti'+1}{\ei}(\benl)&=\sum_{\ci'\in\crs}\pv{\gti'+1}(\ei,\ci')\sum_{\ci''\in\strt{\gti'+1}}\idf{\bsrt{\ci''}=\benl}\idf{\rtr{\ci'}{\strt{\gti'+1}}=\ci''}\\
\notag&=\sum_{\ci''\in\strt{\gti'+1}}\idf{\bsrt{\ci''}=\benl}\sum_{\ci'\in\crs}\pv{\gti'+1}(\ei,\ci')\idf{\rtr{\ci'}{\strt{\gti'+1}}=\ci''}
\end{align}
which, by using Equation \eqref{eq4} and definition of $r$, gives us:
\begin{align}
\notag\qf{\gti'+1}{\ei}(\benl)&=\sum_{\ci''\in\strt{\gti'+1}}\idf{\bsrt{\ci''}=\benl}r\trw{\ci''}\\
\notag&=r\sum_{\ci''\in\strt{\gti'+1}}\idf{\bsrt{\ci''}=\benl}\trw{\ci''}
\end{align}
which, by Lemma \ref{eq1l}, is equal to $r$. Equation \eqref{eq4} then shows that the inductive hypothesis holds for $\gti=\gti'$ which completes the proof.
\end{proof}

We now utilize the vectors $\talt{\li}{\nxt{\gti}{\li}}$ and $\bpit{\gti}$ that are defined in the description of the algorithm.

\begin{definition}
For any trial $\gti\in\na{\gtu}$, and learner $\li\in\na{\nL}$ let $\balt{\li}{\gti}{1}:=\talt{\li}{\nxt{\gti}{\li}}$ and let $\balt{\li}{\gti}{0}:=\bs{1}-\balt{\li}{\gti}{1}$.
\end{definition}

We start our analysis of the algorithm with the following lemma.

\begin{lemma}\label{sul}
Given a trial $\gti\in\na{\gtu}$, expert $\ei\in\na{\nex}$ and a function $\tq:\{0,1\}^{\nL}\rightarrow[0,1]$ defined by: \[\tq(\benl):=\pit{\gti}{i}\prod_{\li\in\na{\nL}}\alt{\li}{\gti}{\enl{\li}}{i}\]
we have that for all $j\in\{0,1\}$:
\[\sum_{\benl\in\{0,1\}^{\nL}}\idf{\enl{\liv{\gti}}=j}\tq(\benl)=\pit{\gti}{i}\alt{\liv{\gti}}{\gti}{j}{i}.\]
\end{lemma}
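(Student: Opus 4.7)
The plan is to recognize that the claim is purely a statement about a factored product over the coordinates of $\benl \in \{0,1\}^{\nL}$: the only dependence on component $\enl{\li}$ lives in the single factor $\alt{\li}{\gti}{\enl{\li}}{i}$, so summing over $\benl$ with the $\liv{\gti}$-th coordinate pinned to $j$ decouples into independent sums over the remaining $\nL-1$ coordinates.

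First I would record the key identity on the ``two-state'' weight vectors: by the definitions $\balt{\li}{\gti}{1}:=\talt{\li}{\nxt{\gti}{\li}}$ and $\balt{\li}{\gti}{0}:=\bs{1}-\balt{\li}{\gti}{1}$, for every expert index $i$ and every task $\li$ we have
\[
\alt{\li}{\gti}{0}{i}+\alt{\li}{\gti}{1}{i}=1.
\]
This is the only algebraic fact that the lemma really needs.

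Next I would pull the constant $\pit{\gti}{i}$ out of the sum and factor the product over $\li$ so that the $\liv{\gti}$-th factor, which is pinned by the indicator $\idf{\enl{\liv{\gti}}=j}$, is separated from the other factors:
\begin{align*}
\sum_{\benl\in\{0,1\}^{\nL}}\idf{\enl{\liv{\gti}}=j}\,\tq(\benl)
&=\pit{\gti}{i}\sum_{\benl\in\{0,1\}^{\nL}}\idf{\enl{\liv{\gti}}=j}\prod_{\li\in\na{\nL}}\alt{\li}{\gti}{\enl{\li}}{i}\\
&=\pit{\gti}{i}\,\alt{\liv{\gti}}{\gti}{j}{i}\prod_{\li\in\na{\nL}\setminus\{\liv{\gti}\}}\Bigl(\alt{\li}{\gti}{0}{i}+\alt{\li}{\gti}{1}{i}\Bigr).
\end{align*}
Each factor in the remaining product equals $1$ by the identity above, and the right-hand side collapses to $\pit{\gti}{i}\alt{\liv{\gti}}{\gti}{j}{i}$, as required.

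There is no genuine obstacle here: the lemma is a bookkeeping identity whose content is that the $\balt{\li}{\gti}{\cdot}$ behave like a probability distribution on $\{0,1\}$ in each coordinate. The only care needed is to make sure the indicator constraint is applied to the correct coordinate $\liv{\gti}$ before invoking the factorization, and to cite the definition $\balt{\li}{\gti}{0}=\bs{1}-\balt{\li}{\gti}{1}$ explicitly so that the telescoping of the non-pinned coordinates is transparent.
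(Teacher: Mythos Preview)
Your proof is correct and uses the same underlying idea as the paper: the product structure of $\tq(\benl)$ together with $\alt{\li}{\gti}{0}{i}+\alt{\li}{\gti}{1}{i}=1$ for each $\li$. The paper formalizes the collapse of the non-pinned coordinates via an induction on $\li'\in\na{\nL}$ (assuming without loss of generality $\liv{\gti}=\nL$), whereas you factor the sum directly; this is a purely stylistic difference, and your version is the more concise of the two.
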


\begin{proof}
Without loss of generality we assume $\liv{\gti}=\nL$. We now take the inductive hypothesis over $\li'\in\na{\nL}$ that for any $\benl'\in\{0,1\}^{\nL}$ we have:
\[\sum_{\benl\in\{0,1\}^{\nL}}\idf{\forall \li\geq\li',\enl{\li}=\enl{\li}'}\tq(\benl)=\pit{\gti}{i}\prod_{\li=\li'}^{\nL}\alt{\li}{\gti}{\enl{\li}'}{i}.\]
In the case that $\li'=1$ we have:
\begin{align}
\notag\sum_{\benl\in\{0,1\}^{\nL}}\idf{\forall \li\geq\li',\enl{\li}=\enl{\li}'}\tq(\benl)&=\sum_{\benl\in\{0,1\}^{\nL}}\idf{\benl=\benl'}\tq(\benl)\\
\notag&=\tq(\benl')\\
\notag&=\pit{\gti}{i}\prod_{\li\in\na{\nL}}\alt{\li}{\gti}{\enl{\li}'}{i}\\
\notag&=\pit{\gti}{i}\prod_{\li=\li'}^{\nL}\alt{\li}{\gti}{\enl{\li}'}{i}.
\end{align}
So the inductive hypothesis holds for $\li'=1$.

Now suppose the inductive hypothesis holds for $\li'=\li''$. We now show that it holds for $\li'=\li''+1$. We have:
\begin{align}
\notag&\sum_{\benl\in\{0,1\}^{\nL}}\idf{\forall \li\geq\li',\enl{\li}=\enl{\li}'}\tq(\benl)\\
\notag=&\sum_{\benl\in\{0,1\}^{\nL}}\idf{\forall \li\geq\li''+1,\enl{\li}=\enl{\li}'}\tq(\benl)\\
\notag=&\sum_{\benl\in\{0,1\}^{\nL}}\left(\sum_{k\in\{0,1\}}\idf{\enl{\li''}=k~\wedge~\forall \li\geq\li''+1,\enl{\li}=\enl{\li}'}\right)\tq(\benl)\\
\notag=&\sum_{k\in\{0,1\}}\sum_{\benl\in\{0,1\}^{\nL}}\idf{\enl{\li''}=k~\wedge~\forall \li\geq\li''+1,\enl{\li}=\enl{\li}'}\tq(\benl)
\end{align}
which, by the inductive hypothesis, is equal to:
\begin{align}
\notag&\sum_{k\in\{0,1\}}\pit{\gti}{i}\alt{l''}{\gti}{k}{i}\prod_{\li=\li''+1}^{\nL}\alt{\li}{\gti}{\enl{\li}'}{i}\\
\notag=&\pit{\gti}{i}\left(\alt{l''}{\gti}{0}{i}+\alt{l''}{\gti}{1}{i}\right)\prod_{\li=\li''+1}^{\nL}\alt{\li}{\gti}{\enl{\li}'}{i}\\
\notag=&\pit{\gti}{i}\prod_{\li=\li''+1}^{\nL}\alt{\li}{\gti}{\enl{\li}'}{i}\\
\notag=&\pit{\gti}{i}\prod_{\li=\li'}^{\nL}\alt{\li}{\gti}{\enl{\li}'}{i}.
\end{align}
So the inductive hypothesis holds for $\li'=\li''+1$. Hence the inductive hypothesis holds for for all $\li'\in\na{\nL}$ and specifically holds for $\li'=\nL$. So, letting $\li'=\nL$, and letting $\benl'$ be any vector in $\{0,1\}^{\nL}$ with $\enl{\nL}'=j$ we have:
\begin{align}
\notag\sum_{\benl\in\{0,1\}^{\nL}}\idf{\enl{\liv{\gti}}=j}\tq(\benl)&=\sum_{\benl\in\{0,1\}^{\nL}}\idf{\enl{\nL}=\enl{\nL}'}\tq(\benl)\\
\notag&=\sum_{\benl\in\{0,1\}^{\nL}}\idf{\forall \li\geq\nL,\enl{\li}=\enl{\li}'}\tq(\benl)\\
\notag&=\pit{\gti}{i}\prod_{\li=\nL}^{\nL}\alt{\li}{\gti}{\enl{\li}'}{i}\\
\notag&=\pit{\gti}{i}\alt{\nL}{\gti}{\enl{\nL}'}{i}\\
\notag&=\pit{\gti}{i}\alt{\nL}{\gti}{j}{i}\\
\notag&=\pit{\gti}{i}\alt{\liv{\gti}}{\gti}{j}{i}.
\end{align}
\end{proof}

We are now ready to quantify $\qf{\gti}{\ei}$ for any trial $\gti$ and expert $\ei$.

\begin{lemma}\label{fqth}
For any trial $\gti\in\na{\gtu}$, any expert $\ei\in\na{\nex}$, and any vector $\benl\in\{0,1\}^{\nL}$ we have:
$$\qf{\gti}{\ei}(\benl)=\pit{\gti}{\ei}\prod_{\li\in\na{\nL}}\alt{\li}{\gti}{\enl{\li}}{\ei}$$
\end{lemma}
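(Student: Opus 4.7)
My plan is to prove Lemma~\ref{fqth} by induction on $\gti$, leveraging Lemma~\ref{ivth} (for the base case), Lemma~\ref{nxtqt} (for the recursion on $\qf{\gti}{\ei}$), and Lemma~\ref{sul} (to marginalize over all tasks except $\liv{\gti}$). The idea is that the algorithm stores the joint probability vector $\qf{\gti}{\ei}$ in a \emph{factored} form as $\pit{\gti}{\ei}\prod_{\li}\alt{\li}{\gti}{\cdot}{\ei}$, and we verify that the simple vector updates (v),(vi) for $\bpe^{\gti}$ and $\bawit$ exactly reproduce the joint recursion.

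For the base case $\gti=1$, by the initialization $\pit{1}{\ei}=1/\nex$, $\aw{\li}{1,\ei}=\sip=\trn{1}$, and $1-\aw{\li}{1,\ei}=1-\sip=\trn{0}$, so $\pit{1}{\ei}\prod_{\li}\alt{\li}{1}{\enl{\li}}{\ei}=\frac{1}{\nex}\prod_{\li}\trn{\enl{\li}}$, which matches Lemma~\ref{ivth}. For the inductive step, apply the recursion of Lemma~\ref{nxtqt} and substitute the inductive hypothesis, isolating the factor $\prod_{\li\neq\liv{\gti}}\alt{\li}{\gti}{\enl{\li}}{\ei}$ (which is unchanged under the algorithm's updates since $\nxt{\gti+1}{\li}=\nxt{\gti}{\li}$ for those tasks). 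It then suffices to show, for each $j\in\{0,1\}$,
\begin{equation*}
\alt{\liv{\gti}}{\gti}{0}{\ei}\trm{0}{j}+\nc{\gti}\exp(-\lr\lo{\gti}{\ei})\,\alt{\liv{\gti}}{\gti}{1}{\ei}\trm{1}{j}=\naw{\ei}\,\alt{\liv{\gti}}{\gti+1}{j}{\ei}\,,
\end{equation*}
using $\pit{\gti+1}{\ei}=\pit{\gti}{\ei}\naw{\ei}$ from update (v).

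The crucial subordinate observation is that $\saw$ from update (iii) equals $\nc{\gti}$. To see this, apply Lemma~\ref{sul} with $j=1$: summing $\qf{\gti}{\ei}(\benl)$ over $\benl$ with $\enl{\liv{\gti}}=1$ gives $\pit{\gti}{\ei}\aw{\liv{\gti}}{\gltf{\gti},\ei}$, so $\sum_{(\ei,\ci)\in\was{\gti}}\pv{\gti}(\ei,\ci)=\bpe^{\gti}\cdot\bawit$, and similarly for the weighted denominator with $\waw_{\ei}=\aw{\liv{\gti}}{\gltf{\gti},\ei}\exp(-\lr\lo{\gti}{\ei})$. Then the case $j=1$ reduces, using update (vi), to the identity $\sw(1-\aw{\liv{\gti}}{\gltf{\gti},\ei})+\ww\saw\waw_{\ei}=\naw{\ei}\,\alt{\liv{\gti}}{\gti+1}{1}{\ei}$, which is tautological since $\bawits=(\sw(\bone-\bawit)+\ww\saw\bwaw)\odot\bnaw^{-1}$. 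The case $j=0$ follows by taking complements, using $\naw{\ei}=1-\aw{\liv{\gti}}{\gltf{\gti},\ei}+\saw\waw_{\ei}$ from update (iv).

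The main obstacle will not be the algebra (it is essentially bookkeeping once the right identities are written down) but rather the notational pile-up: juggling the Markov transition entries $\trm{\cdot}{\cdot}$, the algorithm's factored state variables $(\bpe^{\gti},\bawit)$, the intermediate quantities $(\bwaw,\saw,\bnaw)$, and the joint probabilities $\qf{\gti}{\ei}(\benl)$ simultaneously, together with carefully invoking Lemma~\ref{sul} to turn marginals of the joint into products of factored components. Once the identification $\saw=\nc{\gti}$ is established, the induction closes in a few lines of matching coefficients for $j\in\{0,1\}$.
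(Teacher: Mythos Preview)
Your proposal is correct and follows essentially the same approach as the paper: induction on $\gti$, with the base case via Lemma~\ref{ivth} and initialization, and the inductive step by combining Lemma~\ref{nxtqt} with the inductive hypothesis, using Lemma~\ref{sul} to identify $\saw=\nc{\gti}$, and then matching the algorithm's updates (iv)--(vi) against the recursion for each $j\in\{0,1\}$. The paper organizes the final matching slightly differently (introducing auxiliary quantities $\zet{j}$ and checking $\zet{0}+\zet{1}=\naw{\ei}$ and $\zet{1}/(\zet{0}+\zet{1})=\alt{\liv{\gti}}{\gti+1}{1}{\ei}$), but this is the same computation you outline.
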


\begin{proof}
We prove by induction over $\gti$. In the case that $\gti=1$ we have, from Lemma \ref{ivth}, that:
\begin{align}
\notag\qf{1}{\ei}(\benl)&=\frac{1}{\nex}\prod_{\li\in\na{\nL}}\trn{\enl{\li}}\\
\notag&=\pit{1}{\ei}\prod_{\li\in\na{\nL}}\trn{\enl{\li}}\\
\notag&=\pit{1}{\ei}\prod_{\li\in\na{\nL}}\alt{\li}{1}{\enl{\li}}{\ei}.
\end{align}
Now suppose that, for some trial $\gti'\in\na{\gtu}$, the inductive hypothesis holds for $\gti=\gti'$. We now show that it holds for $\gti=\gti'+1$. For all $\ei'\in\na{\nex}$ let $\btt{\gti'}$, $\dlt{\gti'}{\ei'}$ and $\ept{\gti'}{\ei'}$ be the values of  $\saw$, $\waw{\ei'}$ and $\naw{\ei'}$ formed by the algorithm on trial $\gti'$ respectively. We have:
\begin{align}
\notag&\sum_{(\ei',\ci)\in\was{\gti}}\pv{\gti'}(\ei',\ci)\\
\notag=&\sum_{\ei'\in\na{\nex}}\sum_{\ci\in\crs}\idf{\ci(\liv{\gti'},\gltf{\gti'})=1}\pv{\gti'}(\ei',\ci)\\
\notag=&\sum_{\ei'\in\na{\nex}}\sum_{\ci\in\crs}\idf{\ci(\liv{\gti'},\nxt{\gti'}{\liv{\gti'}})=1}\pv{\gti'}(\ei',\ci)\\
\notag=&\sum_{\ei'\in\na{\nex}}\sum_{\ci\in\crs}\idf{\exists \benl\in\{0,1\}^{\nL}:\enl{\liv{\gti'}}=1\wedge\forall \li\in\na{\nL},\ci(\li,\nxt{\gti'}{\li})=\enl{\li}}\pv{\gti'}(\ei',\ci)\\
\notag=&\sum_{\ei'\in\na{\nex}}\sum_{\ci\in\crs}\sum_{\benl\in\{0,1\}^{\nL}}\idf{\enl{\liv{\gti'}}=1}\idf{\forall \li\in\na{\nL},\ci(\li,\nxt{\gti'}{\li})=\enl{\li}}\pv{\gti'}(\ei',\ci)\\
\notag=&\sum_{\ei'\in\na{\nex}}\sum_{\benl\in\{0,1\}^{\nL}}\idf{\enl{\liv{\gti'}}=1}\sum_{\ci\in\crs}\idf{\forall \li\in\na{\nL},\ci(\li,\nxt{\gti'}{\li})=\enl{\li}}\pv{\gti'}(\ei',\ci)\\
\notag=&\sum_{\ei'\in\na{\nex}}\sum_{\benl\in\{0,1\}^{\nL}}\idf{\enl{\liv{\gti'}}=1}\qf{\gti'}{\ei'}(\benl)
\end{align}
which, by the inductive hypothesis, is equal to:
$$\sum_{\ei'\in\na{\nex}}\sum_{\benl\in\{0,1\}^{\nL}}\idf{\enl{\liv{\gti'}}=1}\pit{\gti'}{\ei'}\prod_{\li\in\na{\nL}}\alt{\li}{\gti'}{\enl{\li}}{\ei'}.$$
By Lemma~\ref{sul}, this is equal to:
$$\sum_{\ei'\in\na{\nex}}\pit{\gti'}{\ei'}\alt{\liv{\gti'}}{\gti'}{1}{\ei'}.$$
Similarly we have:
$$\sum_{(\ei',\ci)\in\was{\gti}}\pv{\gti'}(\ei',\ci)\exp(-\lr\lo{\gti}{\ei'})=\sum_{\ei'\in\na{\nex}}\pit{\gti'}{\ei'}\alt{\liv{\gti'}}{\gti'}{1}{\ei}\exp(-\lr\lo{\gti}{\ei'})=\sum_{\ei'\in\na{\nex}}\pit{\gti'}{\ei'}\waw{\ei'}.$$
Hence we have that:
\begin{align}
\notag\nc{\gti'}&=\frac{\sum_{(\ei',\ci)\in\was{\gti'}}\pv{\gti'}(\ei',\ci)}{\sum_{(\ei',\ci')\in\was{\gti'}}\pv{\gti'}(\ei',\ci')\exp(-\lr\lo{\gti'}{\ei'})}\\
\notag&=\frac{\sum_{\ei'\in\na{\nex}}\pit{\gti'}{\ei'}\alt{\liv{\gti'}}{\gti'}{1}{\ei'}}{\sum_{\ei'\in\na{\nex}}\pit{\gti'}{\ei'}\waw{\ei'}}\\
\notag&=\btt{\gti'}.
\end{align}
Substituting into Lemma \ref{nxtqt} gives us:
\begin{equation}\label{mteq1}
\qf{\gti'+1}{\ei'}(\benl)=\qf{\gti'}{\ei'}(\bsbl{\gti'}{0})\trm{0}{\enl{\liv{\gti'}}}+\btt{\gti'}\exp(-\lr\lo{\gti}{\ei'})\qf{\gti'}{\ei'}(\bsbl{\gti'}{1})\trm{1}{\enl{\liv{\gti'}}}.
\end{equation}
By the inductive hypothesis we have:
\begin{equation}\label{mteq2}
\qf{\gti'}{\ei'}(\bsbl{\gti'}{0})\trm{0}{\enl{\liv{\gti'}}}=\left(\pit{\gti'}{\ei'}\alt{\liv{\gti'}}{\gti'}{0}{\ei'}\prod_{\li\in\na{\nL}:\li\neq\liv{\gti'}}\alt{\li}{\gti'}{\enl{\li}}{\ei'}\right)\trm{0}{\enl{\liv{\gti'}}}
\end{equation}
and
\begin{align}
\notag&\btt{\gti'}\exp(-\lr\lo{\gti}{\ei'})\qf{\gti'}{\ei'}(\bsbl{\gti'}{1})\trm{1}{\enl{\liv{\gti'}}}\\
\label{mteq3}=&\btt{\gti'}\exp(-\lr\lo{\gti}{\ei'})\left(\pit{\gti'}{\ei'}\alt{\liv{\gti'}}{\gti'}{1}{\ei'}\prod_{\li\in\na{\nL}:\li\neq\liv{\gti'}}\alt{\li}{\gti'}{\enl{\li}}{\ei'}\right)\trm{1}{\enl{\liv{\gti'}}}.
\end{align}
For $j\in\{0,1\}$ let: 
$$\zet{j}:=\alt{\liv{\gti'}}{\gti'}{0}{\ei'}\trm{0}{j}+\btt{\gti'}\exp(-\lr\lo{\gti'}{\ei'})\alt{\liv{\gti'}}{\gti'}{1}{\ei'}\trm{1}{j}.$$
Note that for $j\in\{0,1\}$ we have:
\begin{align}
\notag\zet{j}&=\alt{\liv{\gti'}}{\gti'}{0}{\ei'}\trm{0}{j}+\btt{\gti'}\dlt{\gti'}{\ei'}\trm{1}{j}\\
\notag&=\left(1-\alt{\liv{\gti'}}{\gti'}{1}{\ei'}\right)\trm{0}{j}+\btt{\gti'}\dlt{\gti'}{\ei'}\trm{1}{j}
\end{align}
so:
\begin{align}
\notag\zet{0}+\zet{1}&=\left(1-\alt{\liv{\gti'}}{\gti'}{1}{\ei'}\right)(\trm{0}{0}+\trm{0}{1})+\btt{\gti'}\dlt{\gti'}{\ei'}(\trm{1}{0}+\trm{1}{1})\\
\notag&=\left(1-\alt{\liv{\gti'}}{\gti'}{1}{\ei'}\right)+\btt{\gti'}\dlt{\gti'}{\ei'}\\
\label{mteq4}&=\ept{\gti'}{\ei'}
\end{align}
and also:
\begin{align}
\notag\zet{1}&=\left(1-\alt{\liv{\gti'}}{\gti'}{1}{\ei'}\right)\trm{0}{1}+\btt{\gti'}\dlt{\gti'}{\ei'}\trm{1}{1}\\
\notag&=\left(1-\alt{\liv{\gti'}}{\gti'}{1}{\ei'}\right)\sw+\btt{\gti'}\dlt{\gti'}{\ei'}\ww\\
\label{mteq5}&=\sw-\alt{\liv{\gti'}}{\gti'}{1}{\ei'}\sw+\btt{\gti'}\dlt{\gti'}{\ei'}\ww.
\end{align}
Combining equations \eqref{mteq4} and \eqref{mteq5} and noting the update to $\alt{\liv{\gti'}}{\gti'+1}{1}{\ei'}$ gives us:
\begin{equation}
\notag\frac{\zet{1}}{\zet{0}+\zet{1}}=\alt{\liv{\gti'}}{\gti'+1}{1}{\ei'}
\end{equation}
which also gives us:
\begin{equation}
\notag\frac{\zet{0}}{\zet{0}+\zet{1}}=1-\frac{\zet{1}}{\zet{0}+\zet{1}}=1-\alt{\liv{\gti'}}{\gti'+1}{1}{\ei'}=\alt{\liv{\gti'}}{\gti'+1}{0}{\ei'}.
\end{equation}
So we have shown that for all $j\in\{0,1\}$, we have:
\begin{equation}\label{mteq9}
\frac{\zet{j}}{\zet{0}+\zet{1}}=\alt{\liv{\gti'}}{\gti'+1}{j}{\ei'}.
\end{equation}
Substituting equations \eqref{mteq2} and \eqref{mteq3} into Equation \eqref{mteq1} gives us:
\begin{align}
\notag\qf{\gti'+1}{\ei'}(\benl)&=\pit{\gti'}{\ei'}\left(\prod_{\li\in\na{\nL}:\li\neq\liv{\gti'}}\alt{\li}{\gti'}{\enl{\li}}{\ei'}\right)\zet{\enl{\liv{\gti'}}}\\
\notag&=\pit{\gti'}{\ei'}\left(\prod_{\li\in\na{\nL}:\li\neq\liv{\gti'}}\alt{\li}{\gti'+1}{\enl{\li}}{\ei'}\right)\zet{\enl{\liv{\gti'}}}\\
\notag&=\pit{\gti'}{\ei'}(\zet{0}+\zet{1})\left(\prod_{\li\in\na{\nL}:\li\neq\liv{\gti'}}\alt{\li}{\gti'+1}{\enl{\li}}{\ei'}\right)\frac{\zet{\enl{\liv{\gti'}}}}{\zet{0}+\zet{1}}\\
\label{mteq6}&=\pit{\gti'}{\ei'}\ept{\gti'}{\ei'}\left(\prod_{\li\in\na{\nL}:\li\neq\liv{\gti'}}\alt{\li}{\gti'+1}{\enl{\li}}{\ei'}\right)\frac{\zet{\enl{\liv{\gti'}}}}{\zet{0}+\zet{1}}\\
\label{mteq7}&=\pit{\gti'+1}{\ei'}\left(\prod_{\li\in\na{\nL}:\li\neq\liv{\gti'}}\alt{\li}{\gti'+1}{\enl{\li}}{\ei'}\right)\frac{\zet{\enl{\liv{\gti'}}}}{\zet{0}+\zet{1}}\\
\label{mteq8}&=\pit{\gti'+1}{\ei'}\left(\prod_{\li\in\na{\nL}:\li\neq\liv{\gti'}}\alt{\li}{\gti'+1}{\enl{\li}}{\ei'}\right)\alt{\liv{\gti'}}{\gti'+1}{\enl{\liv{\gti'}}}{\ei'}\\
\notag&=\pit{\gti'+1}{\ei'}\left(\prod_{\li\in\na{\nL}}\alt{\li}{\gti'+1}{\enl{\li}}{\ei'}\right)
\end{align}
where Equation \eqref{mteq6} comes from Equation \eqref{mteq4}, Equation \eqref{mteq7} comes from the update to $\pit{\gti'+1}{\ei'}$ and Equation \eqref{mteq8} comes from Equation \eqref{mteq9}. The inductive hypothesis hence holds for $\gti=\gti'+1$.
\end{proof}

The next lemma rewrites the selection $\bal{\gti}$ in terms of the current weights of the specialists.

\begin{lemma}\label{alth}
For all trials $\gti\in\na{\gtu}$ we have:
$$\al{\gti}{\ei}=\frac{\sum_{\ci\in\crs}\idf{(\ei,\ci)\in\was{\gti}}\pv{\gti}(\ei,\ci)}{\sum_{(\ei',\ci)\in\was{\gti}}\pv{\gti}(\ei',\ci)}\,.$$
\end{lemma}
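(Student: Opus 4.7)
The plan is to reduce both sides of the claimed equality to the same expression by invoking the previously established Lemmas~\ref{fqth} and~\ref{sul}. The key observation is that, by the construction in Subsection~\ref{ourspecsec}, a specialist $(\ei,\ci)$ is awake on trial $\gti$ iff $\ci(\liv{\gti},\gltf{\gti})=1$, and since $\gltf{\gti}=\nxt{\gti}{\liv{\gti}}$ (by Lemma~\ref{nxtl} applied at the current trial), this is equivalent to the condition that there exists some $\benl\in\{0,1\}^\nL$ with $\enl{\liv{\gti}}=1$ such that $\ci(\li,\nxt{\gti}{\li})=\enl{\li}$ for all $\li$.

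First I would rewrite the numerator of the right-hand side by decomposing the indicator over $\benl\in\{0,1\}^\nL$:
\[
\sum_{\ci\in\crs}\idf{(\ei,\ci)\in\was{\gti}}\pv{\gti}(\ei,\ci)=\sum_{\benl\in\{0,1\}^\nL}\idf{\enl{\liv{\gti}}=1}\sum_{\ci\in\crs}\idf{\forall\li,\,\ci(\li,\nxt{\gti}{\li})=\enl{\li}}\pv{\gti}(\ei,\ci),
\]
and then use the definition of $\qf{\gti}{\ei}$ to identify the inner sum as $\qf{\gti}{\ei}(\benl)$. Applying Lemma~\ref{fqth} replaces $\qf{\gti}{\ei}(\benl)$ by $\pit{\gti}{\ei}\prod_{\li}\alt{\li}{\gti}{\enl{\li}}{\ei}$, and then Lemma~\ref{sul} (with $j=1$) collapses the sum over $\benl$ to $\pit{\gti}{\ei}\alt{\liv{\gti}}{\gti}{1}{\ei}$. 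Since, by construction, $\alt{\liv{\gti}}{\gti}{1}{\ei}$ is exactly the $\ei$-th component of $\bawit$ (because $\nxt{\gti}{\liv{\gti}}=\gltf{\gti}$), this numerator equals $\pit{\gti}{\ei}\, w^{\liv{\gti}}_{\gltf{\gti},\ei}$.

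Next I would repeat the same calculation for the denominator: summing the previous identity over all experts $\ei'$ gives
\[
\sum_{(\ei',\ci)\in\was{\gti}}\pv{\gti}(\ei',\ci)=\sum_{\ei'\in[\nex]}\pit{\gti}{\ei'}\alt{\liv{\gti}}{\gti}{1}{\ei'}=\bpe^{\gti}\cdot\bawit.
\]
Taking the ratio then yields
\[
\frac{\sum_{\ci\in\crs}\idf{(\ei,\ci)\in\was{\gti}}\pv{\gti}(\ei,\ci)}{\sum_{(\ei',\ci)\in\was{\gti}}\pv{\gti}(\ei',\ci)}=\frac{\pit{\gti}{\ei}\,\alt{\liv{\gti}}{\gti}{1}{\ei}}{\bpe^{\gti}\cdot\bawit}=\al{\gti}{\ei},
\]
where the last equality is the definition of $\bal{\gti}$ in Algorithm~\ref{alg:fin}.

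The proof is largely mechanical given the prior lemmas; the only mild subtlety is bookkeeping the identification $\alt{\liv{\gti}}{\gti}{1}{\ei}=\srv{\liv{\gti}}{\gltf{\gti},\ei}$, which follows directly from the definition $\balt{\li}{\gti}{1}:=\talt{\li}{\nxt{\gti}{\li}}$ together with Lemma~\ref{nxtl}. I do not anticipate any genuine obstacle; the lemma is essentially a ``book-keeping'' identity that repackages the already-proved Lemma~\ref{fqth} into the form needed to match the definition of $\bal{\gti}$, and so sets up the final step of proving Theorem~\ref{equivth} by combining it with a parallel computation showing that the Specialist Hedge update to $\pv{\gti+1}$ translates into the prescribed updates of $\bpe^{\gti+1}$ and $\bawits$.
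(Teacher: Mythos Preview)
Your proposal is correct and follows essentially the same route as the paper: rewrite the awake-indicator via the decomposition over $\benl\in\{0,1\}^{\nL}$, identify the inner sum with $\qf{\gti}{\ei}(\benl)$, apply Lemma~\ref{fqth} and then Lemma~\ref{sul} with $j=1$ to collapse to $\pit{\gti}{\ei}\alt{\liv{\gti}}{\gti}{1}{\ei}$, and take the ratio. The only cosmetic difference is that you make the identification $\alt{\liv{\gti}}{\gti}{1}{\ei}=w^{\liv{\gti}}_{\gltf{\gti},\ei}$ explicit (via the definition $\balt{\li}{\gti}{1}:=\talt{\li}{\nxt{\gti}{\li}}$ and $\nxt{\gti}{\liv{\gti}}=\gltf{\gti}$), whereas the paper leaves this implicit; note that the latter equality follows directly from the definition of $\nxt{\cdot}{\cdot}$ rather than from Lemma~\ref{nxtl}.
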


\begin{proof}
Note first that for all $\ei'\in\na{\nex}$ and $\ci\in\crs$ we have that $(\ei',\ci)\in\was{\gti}$ if and only if $\ci(\liv{\gti},\gltf{\gti})=1$ which is equivalent to $\ci(\liv{\gti},\nxt{\gti}{\liv{\gti}})=1$ and is in turn equivalent to the existence of an $\benl\in\{0,1\}^{\nL}$ with $\enl{\liv{\gti}}=1$ and $\idf{\forall \li\in\na{\nL}, \ci(\li,\nxt{\gti}{\li})=\enl{\li}}=1$. Hence we have that:
\begin{align}
\notag&\sum_{\ci\in\crs}\idf{(\ei',\ci)\in\was{\gti}}\pv{\gti}(\ei',\ci)\\
\notag&=\sum_{\ci\in\crs}\pv{\gti}(\ei',\ci)\sum_{\benl\in\{0,1\}^{\nL}}\idf{\enl{\liv{\gti}}=1}\idf{\forall \li\in\na{\nL}, \ci(\li,\nxt{\gti}{\li})=\enl{\li}}\\
\notag&=\sum_{\benl\in\{0,1\}^{\nL}}\idf{\enl{\liv{\gti}}=1}\sum_{\ci\in\crs}\idf{\forall \li\in\na{\nL}, \ci(\li,\nxt{\gti}{\li})=\enl{\li}}\pv{\gti}(\ei',\ci)\\
\notag&=\sum_{\benl\in\{0,1\}^{\nL}}\idf{\enl{\liv{\gti}}=1}\qf{\gti}{\ei'}(\benl)
\end{align}
which, by Lemma \ref{fqth}, is equal to:
$$\sum_{\benl\in\{0,1\}^{\nL}}\idf{\enl{\liv{\gti}}=1}\pit{\gti}{\ei'}\prod_{\li\in\na{\nL}}\alt{\li}{\gti}{\enl{\li}}{\ei'}.$$
Applying Lemma \ref{sul} then gives us:
\begin{equation}
\notag\sum_{\ci\in\crs}\idf{(\ei',\ci)\in\was{\gti}}\pv{\gti}(\ei',\ci)=\pit{\gti}{\ei'}\alt{\liv{\gti}}{\gti}{j}{\ei'}.
\end{equation}
This means that:
\begin{equation}
\notag\sum_{\ci\in\crs}\idf{(\ei,\ci)\in\was{\gti}}\pv{\gti}(\ei,\ci)=\pit{\gti}{\ei}\alt{\liv{\gti}}{\gti}{j}{\ei}
\end{equation}
and:
\begin{align}
\notag\sum_{(\ei',\ci)\in\was{\gti}}\pv{\gti}(\ei',\ci)&=\sum_{\ei'\in\na{\nex}}\sum_{\ci\in\crs}\idf{(\ei',\ci)\in\was{\gti}}\pv{\gti}(\ei',\ci)\\
\notag&=\sum_{\ei'\in\na{\nex}}\pit{\gti}{\ei'}\alt{\liv{\gti}}{\gti}{j}{\ei'}
\end{align}
so:
\begin{align}
\notag\frac{\sum_{\ci\in\crs}\idf{(\ei,\ci)\in\was{\gti}}\pv{\gti}(\ei,\ci)}{\sum_{(\ei',\ci)\in\was{\gti}}\pv{\gti}(\ei',\ci)}&=\frac{\pit{\gti}{\ei}\alt{\liv{\gti}}{\gti}{j}{\ei}}{\sum_{\ei'\in\na{\nex}}\pit{\gti}{\ei'}\alt{\liv{\gti}}{\gti}{j}{\ei'}}\\
\notag&=\al{\gti}{\ei}.
\end{align}
\end{proof}

We are now ready to prove Theorem \ref{equivth}. From the Specialist Hedge (Algorithm \ref{alg:sh}) we have:
$$\barv{\gti}{\ei,\ci}=\frac{\pv{\gti}(\ei,\ci)}{\sum_{(\ei',\ci')\in\was{\gti}}\pv{\gti}(\ei',\ci')}$$
for all $(\ei,\ci)\in\was{\gti}$. So
\begin{align}
\inprod{\arv{\gti}}{\arc{\gti}}&=\sum_{(\ei,\ci)\in\was{\gti}}\barv{\gti}{\ei,\ci}\barc{\gti}{\ei,\ci}\\
&=\sum_{(\ei,\ci)\in\was{\gti}}\barv{\gti}{\ei,\ci}\lo{\gti}{\ei}\\
&=\sum_{\ei\in\na{\nex}}\sum_{\ci\in\crs}\idf{(\ei,\ci)\in\was{\gti}}\barv{\gti}{\ei,\ci}\lo{\gti}{\ei}\\
&=\sum_{\ei\in\na{\nex}}\lo{\gti}{\ei}\sum_{\ci\in\crs}\frac{\idf{(\ei,\ci)\in\was{\gti}}\pv{\gti}(\ei,\ci)}{\sum_{(\ei',\ci)\in\was{\gti}}\pv{\gti}(\ei',\ci)}\\
\label{altheq}&=\sum_{\ei\in\na{\nex}}\lo{\gti}{\ei}\al{\gti}{\ei}\\
&=\blo{\gti}\cdot\bal{\gti}\\
&=\lot{\gti}
\end{align}
where Equation \eqref{altheq} comes from Lemma \ref{alth}.

\subsection{Shortening the Circadians}\label{Shcsec}

In order to achieve slightly better bounds, we will now shorten the circadians and reduce, instead, to the Specialist Allocation model defined with the shortened circadians. We shall show that Specialist Hedge with the shortened circadians is identical to Specialist Hedge with the full-length circadians. We start with the definition of shortened circadians:

\begin{definition}
We define $\sht$ to be the set of all $(\li,\ti)$ such that $\li\in\na{\nL}$ and $\ti\in\na{\ltu{\li}}$. A ``shortened circadian'' is a function from $\sht$ into $\{0,1\}$. Let $\sha$ be the set of all shortened circadians.
\end{definition}

The new reduction is identical to that of Section \ref{ourspecsec} except that it uses the shortened circadians instead of the full-length ones. In particular we have:

$$\pv{1}(\ei,\ci):=\prod_{\li\in\na{\nL}}\trn{\ci(\li,1)}\prod_{\ti\in\na{\ltu{\li}-1}}\trm{\ci(\li,\ti)}{\ci(\li,\ti+1)}$$

for all $\ci\in\sha$. 

\begin{definition}
We let $\was{\gti}'$ and $\was{\gti}''$ be the set $\was{\gti}$ when constructing specialists from shortened circadians and full-length circadians respectively.
\end{definition}

We now partition the full-length circadians as follows:

\begin{definition}
Given a shortened circadian $\ci\in\sha$ we define:
$$\cset{\ci}:=\{\ci'\in\crs~|~\ci'(\li,\ti)=\ci(\li,\ti)~\forall (\li,\ti)\in\sht\}.$$
\end{definition}

We note that the sets $\{\was{\gti}:\gti\in\na{\gtu}\}$ in both instances of the Specialist Allocation model are related as follows:

\begin{lemma}\label{wasrellem}
Given a trial $\gti\in\na{\gtu}$, a shortened circadian $\ci\in\sha$ and an expert $\ei$, we have that $(\ei,\ci)\in\was{\gti}'$ if and only if for all $\ci'\in\cset{\ci}$ we have $(\ei,\ci')\in\was{\gti}''$.
\end{lemma}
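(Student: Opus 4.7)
The plan is to reduce the biconditional to a single pointwise statement about the value of the circadian at the active coordinate $(\liv{\gti},\gltf{\gti})$, and then invoke the fact that every full-length extension in $\cset{\ci}$ agrees with $\ci$ at that coordinate. Concretely, I would first unfold the definitions of the two awake sets: by the construction in Section~\ref{ourspecsec} applied to shortened and full-length circadians respectively, $(\ei,\ci)\in\was{\gti}'$ iff $\ci(\liv{\gti},\gltf{\gti})=1$, and $(\ei,\ci')\in\was{\gti}''$ iff $\ci'(\liv{\gti},\gltf{\gti})=1$. Notice that the expert coordinate $\ei$ plays no role in either membership condition, so the biconditional is really a statement about the values of $\ci$ and of the extensions $\ci'$ at a single coordinate.

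The key step is then to verify that the active coordinate $(\liv{\gti},\gltf{\gti})$ always lies in $\sht$ (not merely in the larger set $\str$). This holds because on any real trial $\gti\in\na{T}$ the local time satisfies $1\le\gltf{\gti}\le\ltu{\liv{\gti}}$, so the pair $(\liv{\gti},\gltf{\gti})$ belongs to $\sht$. By the definition of $\cset{\ci}$, every $\ci'\in\cset{\ci}$ agrees with $\ci$ on all of $\sht$; in particular $\ci'(\liv{\gti},\gltf{\gti})=\ci(\liv{\gti},\gltf{\gti})$ for every $\ci'\in\cset{\ci}$.

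Chaining these observations gives the result: $(\ei,\ci)\in\was{\gti}'$ iff $\ci(\liv{\gti},\gltf{\gti})=1$ iff $\ci'(\liv{\gti},\gltf{\gti})=1$ for every $\ci'\in\cset{\ci}$ (since the value is the same across $\cset{\ci}$) iff $(\ei,\ci')\in\was{\gti}''$ for all $\ci'\in\cset{\ci}$. There is no real obstacle here; the main thing to be careful about is the indexing check $\gltf{\gti}\le\ltu{\liv{\gti}}$, which is what ensures the active coordinate lies in the domain $\sht$ of the shortened circadians and not just in the larger domain $\str$ that would involve the unconstrained extra coordinate $(\li,\ltu{\li}+1)$.
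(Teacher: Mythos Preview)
Your proposal is correct and follows essentially the same approach as the paper's proof: both unfold the awake-set definitions to the condition $\ci(\liv{\gti},\gltf{\gti})=1$, observe that $\gltf{\gti}\in\na{\ltu{\liv{\gti}}}$ so the active coordinate lies in $\sht$, and then use that every $\ci'\in\cset{\ci}$ agrees with $\ci$ on $\sht$. Your write-up is simply a more explicit version of the paper's one-line argument.
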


\begin{proof}
We have $(\ei,\ci)\in\was{\gti}'$ if and only if $\ci(\liv{\gti},\gltf{\gti})=1$ which, since $\gltf{\gti}\in\na{T^{\li}}$, happens if and only if $\ci'(\liv{\gti},\gltf{\gti})=1$ for all $\ci'\in\cset{\ci}$. This implies the result.
\end{proof}

We now show the equivalence of the new reduction to the old, by considering the function $\pv{\gti}$ in both instances of the Specialist Hedge algorithm. We start with the function $\pv{1}$.

\begin{lemma}\label{inpvl}
Given a shortened circadian $\ci\in\sha$ and an expert $\ei$, we have:
$$\pv{1}(\ei,\ci)=\sum_{\ci'\in\cset{\ci}}\pv{1}(\ei,\ci').$$
\end{lemma}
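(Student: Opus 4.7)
}
The plan is to expand the definition of $\pv{1}(\ei,\ci')$ for each full-length circadian $\ci'\in\cset{\ci}$, exploit the fact that the only coordinates on which $\ci'$ is not pinned down by $\ci$ are the ``tail'' values $\ci'(\li,\ltu{\li}+1)$ for $\li\in\na{\nL}$, and then collapse the resulting sum using the stochasticity of the transition matrix $\trm{\cdot}{\cdot}$ from Definition~\ref{trmd}.

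Concretely, I would proceed as follows. First, plug in $\pv{1}(\ei,\ci')=\frac{1}{\nex}\crw{\ci'}$ and expand $\crw{\ci'}$ per Definition~\ref{cirdef}:
\[
\crw{\ci'}=\prod_{\li\in\na{\nL}}\trn{\ci'(\li,1)}\left(\prod_{\ti=1}^{\ltu{\li}-1}\trm{\ci'(\li,\ti)}{\ci'(\li,\ti+1)}\right)\trm{\ci'(\li,\ltu{\li})}{\ci'(\li,\ltu{\li}+1)}.
\]
Since $\ci'\in\cset{\ci}$ means $\ci'$ agrees with $\ci$ on $\sht$, every factor except the last one in each task's product is determined by $\ci$; only $\ci'(\li,\ltu{\li}+1)\in\{0,1\}$ is free. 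Hence the set $\cset{\ci}$ is in bijection with $\{0,1\}^{\nL}$ via the tail assignment $(\ci'(\li,\ltu{\li}+1))_{\li\in\na{\nL}}$, and summing over $\cset{\ci}$ factors across tasks.

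Second, I would apply the identity $\sum_{j\in\{0,1\}}\trm{k}{j}=1$ (which holds for each $k\in\{0,1\}$ by Definition~\ref{trmd}) to each task's free tail coordinate. This collapses each factor $\sum_{j\in\{0,1\}}\trm{\ci(\li,\ltu{\li})}{j}$ to $1$, leaving exactly
\[
\sum_{\ci'\in\cset{\ci}}\crw{\ci'}=\prod_{\li\in\na{\nL}}\trn{\ci(\li,1)}\prod_{\ti=1}^{\ltu{\li}-1}\trm{\ci(\li,\ti)}{\ci(\li,\ti+1)},
\]
which, after dividing by $\nex$, matches $\pv{1}(\ei,\ci)$ under the shortened-circadian initialization. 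This yields the claim.

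There is no real obstacle here; the only thing requiring care is bookkeeping around the index ranges ($\ti\in\na{\ltu{\li}}$ for $\sht$ versus $\ti\in\na{\ltu{\li}+1}$ for $\str$), and verifying that the only indices at which $\ci'$ can differ while still belonging to $\cset{\ci}$ are $\{(\li,\ltu{\li}+1):\li\in\na{\nL}\}$. Once the sum is correctly factorized across tasks, the stochasticity of the transition matrix does the rest in a single step.
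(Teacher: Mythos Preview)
Your proposal is correct and follows essentially the same approach as the paper: expand $\pv{1}(\ei,\ci')=\frac{1}{\nex}\crw{\ci'}$, observe that for $\ci'\in\cset{\ci}$ only the tail values $\ci'(\li,\ltu{\li}+1)$ are free, factor the sum over $\cset{\ci}\cong\{0,1\}^{\nL}$ across tasks, and collapse each tail factor using $\trm{k}{0}+\trm{k}{1}=1$. The paper carries out exactly this computation line by line.
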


\begin{proof}
We have: 
\begin{align*}
&\sum_{\ci'\in\cset{\ci}}\pv{1}(\ei,\ci')\\
=&\sum_{\ci'\in\cset{\ci}}\frac{1}{\nex}\prod_{\li\in\na{\nL}}\trn{\ci'(\li,1)}\prod_{\ti\in\na{\ltu{\li}}}\trm{\ci'(\li,\ti)}{\ci'(\li,\ti+1)}\\
=&\sum_{\ci'\in\cset{\ci}}\frac{1}{\nex}\prod_{\li\in\na{\nL}}\trn{\ci'(\li,1)}\left(\prod_{\ti\in\na{\ltu{\li}-1}}\trm{\ci'(\li,\ti)}{\ci'(\li,\ti+1)}\right)\trm{\ci'(\li,\ltu{\li})}{\ci'(\li,\ltu{\li}+1)}\\
=&\sum_{\ci'\in\cset{\ci}}\frac{1}{\nex}\left(\prod_{\li\in\na{\nL}}\trn{\ci'(\li,1)}\prod_{\ti\in\na{\ltu{\li}-1}}\trm{\ci'(\li,\ti)}{\ci'(\li,\ti+1)}\right)\left(\prod_{\li\in\na{\nL}}\trm{\ci'(\li,\ltu{\li})}{\ci'(\li,\ltu{\li}+1)}\right)\\
=&\sum_{\ci'\in\cset{\ci}}\frac{1}{\nex}\left(\prod_{\li\in\na{\nL}}\trn{\ci(\li,1)}\prod_{\ti\in\na{\ltu{\li}-1}}\trm{\ci(\li,\ti)}{\ci(\li,\ti+1)}\right)\left(\prod_{\li\in\na{\nL}}\trm{\ci'(\li,\ltu{\li})}{\ci'(\li,\ltu{\li}+1)}\right)\\
=&\sum_{\ci'\in\cset{\ci}}\frac{1}{\nex}\pv{1}(\ei,\ci)\left(\prod_{\li\in\na{\nL}}\trm{\ci'(\li,\ltu{\li})}{\ci'(\li,\ltu{\li}+1)}\right)\\
=&\frac{1}{n}\pv{1}(\ei,\ci)\sum_{\ci'\in\cset{\ci}}\prod_{\li\in\na{\nL}}\trm{\ci'(\li,\ltu{\li})}{\ci'(\li,\ltu{\li}+1)}\\
=&\frac{1}{n}\pv{1}(\ei,\ci)\sum_{\bs{f}\in\{0,1\}^{\nL}}\prod_{\li\in\na{\nL}}\trm{\ci'(\li,\ltu{\li})}{f_{\li}}\\
=&\frac{1}{n}\pv{1}(\ei,\ci)\prod_{\li\in\na{\nL}}\sum_{f_{\li}\in\{0,1\}}\trm{\ci'(\li,\ltu{\li})}{f_{\li}}\\
=&\frac{1}{n}\pv{1}(\ei,\ci)\prod_{\li\in\na{\nL}}\left(\trm{\ci'(\li,\ltu{\li})}{0}+\trm{\ci'(\li,\ltu{\li})}{1}\right)\\
=&\frac{1}{n}\pv{1}(\ei,\ci)\prod_{\li\in\na{\nL}}1\\
=&\frac{1}{n}\pv{1}(\ei,\ci).
\end{align*}
\end{proof}

We now extend to all $\gti\in\na{\gtu}$.

\begin{lemma}\label{wassuml}
Given a trial $\gti\in\na{\gtu}$, a shortened circadian $\ci\in\sha$ and an expert $\ei$ we have:
$$\pv{\gti}(\ei,\ci)=\sum_{\ci'\in\cset{\ci}}\pv{\gti}(\ei,\ci').$$
\end{lemma}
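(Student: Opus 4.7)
The plan is to prove Lemma \ref{wassuml} by induction on $\gti$, with Lemma \ref{inpvl} serving as the base case $\gti=1$. The inductive step will exploit the fact that the Specialist Hedge update depends on the global awake-set only through a single normalizing ratio, which will turn out to be identical in both instantiations (the one built from shortened circadians and the one built from full-length circadians).

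For the inductive step, assume the claim at trial $\gti$. I would first prove an auxiliary equality between the two normalizers appearing inside the update of Algorithm~\ref{alg:sh}. Concretely, by partitioning $\was{\gti}''$ over shortened circadians and applying Lemma~\ref{wasrellem} together with the inductive hypothesis,
\begin{equation*}
\sum_{(\ei',\ci'')\in\was{\gti}''}\pv{\gti}(\ei',\ci'')
=\sum_{\ei'}\sum_{\ci\in\sha:(\ei',\ci)\in\was{\gti}'}\sum_{\ci''\in\cset{\ci}}\pv{\gti}(\ei',\ci'')
=\sum_{(\ei',\ci)\in\was{\gti}'}\pv{\gti}(\ei',\ci),
\end{equation*}
and the same argument with the extra factor $\exp(-\lr\lo{\gti}{\ei'})$ (which depends only on $\ei'$, not on the circadian) shows that the weighted sum in the denominator matches too. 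Hence the normalizing ratio inside the update of Algorithm~\ref{alg:sh} is the same in both runs; call it $Z^{\gti}$.

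Now split on whether $(\ei,\ci)\in\was{\gti}'$. If $(\ei,\ci)\notin\was{\gti}'$, then by Lemma~\ref{wasrellem} no $\ci'\in\cset{\ci}$ belongs to $\was{\gti}''$, so in both algorithms every weight is unchanged and the inductive hypothesis at $\gti$ immediately gives the equality at $\gti+1$. If $(\ei,\ci)\in\was{\gti}'$, then again by Lemma~\ref{wasrellem} every $\ci'\in\cset{\ci}$ lies in $\was{\gti}''$, so each such weight is multiplied by the common factor $\exp(-\lr\lo{\gti}{\ei})\,Z^{\gti}$; summing and using the inductive hypothesis gives
\begin{equation*}
\sum_{\ci'\in\cset{\ci}}\pv{\gti+1}(\ei,\ci')
=\exp(-\lr\lo{\gti}{\ei})\,Z^{\gti}\sum_{\ci'\in\cset{\ci}}\pv{\gti}(\ei,\ci')
=\exp(-\lr\lo{\gti}{\ei})\,Z^{\gti}\,\pv{\gti}(\ei,\ci)
=\pv{\gti+1}(\ei,\ci),
\end{equation*}
as required.

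The only delicate point is the normalizer equality; once that is in hand, both update rules are uniform on $\cset{\ci}$ (a consequence of Lemma~\ref{wasrellem}) and the induction propagates trivially. I expect the bookkeeping for the case split over $\was{\gti}'$ versus its complement to be the only place one has to be careful, since the loss $\lo{\gti}{\ei}$ depends only on the expert, not on the circadian, which is exactly what makes the per-slice factor common across $\cset{\ci}$.
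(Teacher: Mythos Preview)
Your proposal is correct and follows essentially the same approach as the paper: induction on $\gti$ with Lemma~\ref{inpvl} as the base case, the inductive step handled by showing the normalizing ratios in the two runs coincide (via Lemma~\ref{wasrellem} and the inductive hypothesis) and then observing the update factor is uniform across $\cset{\ci}$. If anything, your write-up is slightly more careful than the paper's, which only writes out the awake case explicitly and leaves the asleep case implicit.
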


\begin{proof}
We prove by induction on $\gti$. By Lemma \ref{inpvl} it holds in the case that $\gti=1$. Now suppose it holds for $\gti=\gti'$ for some $\gti'\in\na{\gtu-1}$. We will now show that it holds for $\gti=\gti'+1$. We have:
\begin{align}
\notag\pv{\gti'+1}(\ei,\ci)&=\frac{\pv{\gti'}(\ei,\ci)\exp(-\lr\lo{\gti'}{\ei})\sum_{(\ei'',\ci'')\in\was{\gti'}'}\pv{\gti'}(\ei'',\ci'')}{\sum_{(\ei'',\ci'')\in\was{\gti'}'}\pv{\gti'}(\ei'',\ci'')
\exp(-\lr\lo{\gti'}{\ei''})}\\
\notag&=\frac{\left(\sum_{\ci'\in\cset{\ci}}\pv{\gti'}(\ei,\ci')\right)\exp(-\lr\lo{\gti'}{\ei})\sum_{(\ei'',\ci'')\in\was{\gti'}'}\sum_{\ci'''\in\cset{\ci''}}\pv{\gti'}(\ei'',\ci''')}{\sum_{(\ei'',\ci'')\in\was{\gti'}'}\left(\sum_{\ci'''\in\cset{\ci''}}\pv{\gti'}(\ei'',\ci''')\right)\exp(-\lr\lo{\gti'}{\ei''})}\\
\notag&=\sum_{\ci'\in\cset{\ci}}\frac{\pv{\gti'}(\ei,\ci')\exp(-\lr\lo{\gti'}{\ei})\sum_{(\ei'',\ci'')\in\was{\gti'}'}\sum_{\ci'''\in\cset{\ci''}}\pv{\gti'}(\ei'',\ci''')}{\sum_{(\ei'',\ci'')\in\was{\gti'}'}\left(\sum_{\ci'''\in\cset{\ci''}}\pv{\gti'}(\ei'',\ci''')\right)\exp(-\lr\lo{\gti'}{\ei''})}\\
\label{nrline1}&=\sum_{\ci'\in\cset{\ci}}\frac{\pv{\gti'}(\ei,\ci')\exp(-\lr\lo{\gti'}{\ei})\sum_{(\ei'',\ci''')\in\was{\gti'}''}\pv{\gti'}(\ei'',\ci''')}{\sum_{(\ei'',\ci''')\in\was{\gti'}''}\left(\pv{\gti'}(\ei'',\ci''')\right)\exp(-\lr\lo{\gti'}{\ei''})}\\
\notag&=\sum_{\ci'\in\cset{\ci}}\pv{\gti'+1}(\ei,\ci')
\end{align}
where Equation \eqref{nrline1} comes from Lemma \ref{wasrellem}. This proves the inductive hypothesis.
\end{proof}

We then have:
\begin{align}
\notag\sum_{(\ei,\ci)\in\was{\gti}'}\arv{\gti}(\ei,\ci)\arc{\gti}(\ei,\ci)&=\sum_{(\ei,\ci)\in\was{\gti}'}\frac{\pv{\gti}(\ei,\ci)}{\sum_{(\ei'',\ci'')\in\was{\gti}'}\pv{\gti}(\ei'',\ci'')}\arc{\gti}(\ei,\ci)\\
\notag&=\sum_{(\ei,\ci)\in\was{\gti}'}\frac{\pv{\gti}(\ei,\ci)}{\sum_{(\ei'',\ci'')\in\was{\gti}'}\pv{\gti}(\ei'',\ci'')}\lo{\gti}{\ei}\\
\label{nrline3}&=\sum_{(\ei,\ci)\in\was{\gti}'}\frac{\sum_{\ci'\in\cset{\ci}}\pv{\gti}(\ei,\ci')}{\sum_{(\ei'',\ci'')\in\was{\gti}'}\sum_{\ci'''\in\cset{\ci''}}\pv{\gti}(\ei'',\ci''')}\lo{\gti}{\ei}\\
\notag&=\frac{\sum_{(\ei,\ci)\in\was{\gti}'}\sum_{\ci'\in\cset{\ci}}\pv{\gti}(\ei,\ci')}{\sum_{(\ei'',\ci'')\in\was{\gti}'}\sum_{\ci'''\in\cset{\ci''}}\pv{\gti}(\ei'',\ci''')}\lo{\gti}{\ei}\\
\label{nrline2}&=\frac{\sum_{(\ei,\ci')\in\was{\gti}''}\pv{\gti}(\ei,\ci')}{\sum_{(\ei'',\ci''')\in\was{\gti}''}\pv{\gti}(\ei'',\ci''')}\lo{\gti}{\ei}\\
\notag&=\sum_{(\ei,\ci')\in\was{\gti}''}\frac{\pv{\gti}(\ei,\ci')}{\sum_{(\ei'',\ci''')\in\was{\gti}''}\pv{\gti}(\ei'',\ci''')}\lo{\gti}{\ei}\\
\notag&=\sum_{(\ei,\ci')\in\was{\gti}''}\frac{\pv{\gti}(\ei,\ci')}{\sum_{(\ei'',\ci''')\in\was{\gti}''}\pv{\gti}(\ei'',\ci''')}\arc{\gti}(\ei,\ci')\\
\notag&=\sum_{(\ei,\ci')\in\was{\gti}''}\arv{\gti}(\ei,\ci')\arc{\gti}(\ei,\ci')
\end{align}
where Equation \eqref{nrline3} and Equation \eqref{nrline2} come from Lemma \ref{wassuml} and Lemma \ref{wasrellem} respectively. So the outputs, $\inprod{\arv{\gti}}{\arc{\gti}}$ of Specialist Hedge, when run with shortened circadians and full-length circadians are the same.

From Theorem \ref{equivth} we then have that the loss, on trial $\gti$, of Algorithm \ref{alg:fin} is equal to  
$$\lot{\gti}=\bal{\gti}\cdot\blo{\gti}=\inprod{\arv{\gti}}{\arc{\gti}}=\CBT$$ 
when Specialist Hedge is run with shortened circadians. From here on, we will consider only the case when Specialist Hedge is run on the Specialist Allocation model with shortened circadians.

\subsection{Our Comparator Distribution}\label{cdss}

We assume we have an arbitrary comparator sequence $\com = (\sh{1}{1},\ldots, \sh{1}{T^1},\ldots,\sh{\nL}{1},\ldots,\sh{\nL}{T^{\nL}})\in[\nex]^T$ for the Multitask Allocation model (cf. Figure~\ref{fig:map}) with switching, and we consider the Specialist Allocation model (cf. Figure~\ref{fig:sap}) formed from the Multitask Allocation model via the reduction given in Subsection~\ref{Shcsec}.

We now define, from $\com$, our comparator distribution $\cop{}$ that appears in Theorem \ref{allospecth} and quantify the values $\inprod{\copgti}{\arc{\gti}}$ and $\musgt$ that also appear in Theorem \ref{allospecth}.

\begin{definition}
We have an arbitrary comparator $\com = (\sh{1}{1},\ldots, \sh{1}{T^1},\ldots,\sh{\nL}{1},\ldots,\sh{\nL}{T^{\nL}})\in[\nex]^T$. For all $\gti\in\na{T}$ we define $\sh{\gti}{}:=\sh{\liv{\gti}}{\gltf{\gti}}$
Given $\ei\in\ran$, we define its ``respective shortened circadian'' $\rcr{\ei}\in\sha$ by:
$$\rcr{\ei}(\li,\ti)=\idf{\sh{\li}{\ti}=\ei}$$
for all $(\li,\ti)\in\sht$. We define $\ufu:\hye\rightarrow[0,1]$ by:
$$\upd{\ei,\ci}=\frac{1}{\siz{\com}}\idf{\ei\in\ran\wedge\ci=\rcr{\ei}}.$$
Note that $\ufu$ is a probability distribution on $\hye$.
\end{definition}

We define $\copgti$ and $\musgt$ as in Theorem \ref{allospecth}:

\begin{definition}
On any trial $\gti$ we define $\musgt:=\sum_{\hi\in\was{\gti}}\cop{}(\hi)$ and for all $\hi\in\was{\gti}$ define $\copgti(\hi):=\cop{}(\hi)/\musgt$.
\end{definition}

We now quantify $\musgt$:

\begin{lemma}\label{musize}
On any trial $\gti$ we have $\musgt=1/\siz{\com}$.
\end{lemma}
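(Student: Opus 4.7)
The plan is to unfold the definition of $\musgt$ and observe that the only specialists with nonzero mass under $\cop{}$ are the pairs $(\ei,\rcr{\ei})$ for $\ei\in\ran$. First I would write
\[
\musgt \;=\; \sum_{(\ei,\ci)\in\hye} \idf{(\ei,\ci)\in\was{\gti}}\,\cop{}(\ei,\ci) \;=\; \frac{1}{\siz{\com}} \sum_{(\ei,\ci)\in\hye} \idf{(\ei,\ci)\in\was{\gti}} \idf{\ei\in\ran\wedge\ci=\rcr{\ei}},
\]
so the sum over $\ci\in\sha$ collapses to the single term $\ci=\rcr{\ei}$ whenever $\ei\in\ran$, giving
\[
\musgt \;=\; \frac{1}{\siz{\com}} \sum_{\ei\in\ran} \idf{(\ei,\rcr{\ei})\in\was{\gti}}.
\]

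Next I would use the definition of $\was{\gti}$ (for the shortened-circadian reduction), under which $(\ei,\ci)\in\was{\gti}$ iff $\ci(\liv{\gti},\gltf{\gti})=1$. Combined with $\rcr{\ei}(\li,\ti)=\idf{\sh{\li}{\ti}=\ei}$, this gives
\[
\idf{(\ei,\rcr{\ei})\in\was{\gti}} \;=\; \idf{\rcr{\ei}(\liv{\gti},\gltf{\gti})=1} \;=\; \idf{\sh{\liv{\gti}}{\gltf{\gti}}=\ei} \;=\; \idf{\sh{\gti}{}=\ei}.
\]
Substituting back,
\[
\musgt \;=\; \frac{1}{\siz{\com}} \sum_{\ei\in\ran} \idf{\sh{\gti}{}=\ei}.
\]

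Finally, since $\sh{\gti}{}=\sh{\liv{\gti}}{\gltf{\gti}}$ is itself an element of the comparator sequence $\com$, it belongs to $\ran = \dis(\com)$, so exactly one $\ei\in\ran$ matches and the sum equals $1$, yielding $\musgt=1/\siz{\com}$ as required. No step poses a real obstacle here; the only thing to be careful about is ensuring the unique matching $\ei\in\ran$, which follows because the $\ei$'s in $\ran$ are distinct and $\sh{\gti}{}$ is one of them by construction.
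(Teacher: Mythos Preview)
Your proposal is correct and follows essentially the same approach as the paper's proof: both unfold the definition of $\musgt$, use the support of $\cop{}$ to collapse to the specialists $(\ei,\rcr{\ei})$ for $\ei\in\ran$, and then use the definitions of $\was{\gti}$ and $\rcr{\ei}$ to see that exactly the term with $\ei=\sh{\gti}{}$ survives. The only cosmetic difference is that the paper evaluates the sum directly to $\cop{}(\sh{\gti}{},\rcr{\sh{\gti}{}})=1/\siz{\com}$, whereas you keep the indicator sum and argue uniqueness explicitly.
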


\begin{proof}
We have:
\begin{align}
\notag\musgt&=\sum_{\hi\in\was{\gti}}\cop{}(\hi)\\
\notag&=\sum_{\hi\in\hye}\idf{\hi\in\was{\gti}}\cop{}(\hi)\\
\notag&=\sum_{\ei\in\ran}\idf{(\ei,\rcr{\ei})\in\was{\gti}}\cop{}(\rcr{\ei})\\
\notag&=\sum_{\ei\in\ran}\idf{\ei=\sh{\gti}{}}\cop{}(\ei,\rcr{\ei})\\
\notag&=\cop{}\left(\sh{\gti}{},\rcr{\sh{\gti}{}}\right)\\
\label{progl6}&=1/\siz{\com}.
\end{align}
\end{proof}

We also quantify $\inprod{\copgti}{\arc{\gti}}$:

\begin{lemma}\label{musize2}
Given the reduction in Subsection~\ref{ourspecsec} 
from the Multitask Allocation model \ref{fig:map} with switching to the Specialist Allocation model (cf. Figure~\ref{fig:sap}), on any trial $\gti\in\na{T}$ we have:
$$\inprod{\copgti}{\arc{\gti}}=\lo{\gti}{\sh{\gti}{}}.$$
\end{lemma}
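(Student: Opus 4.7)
The plan is to unpack the definitions of $\copgti$ and $\arc{\gti}$ and observe that the support of $\cop{}$, intersected with $\was{\gti}$, collapses to a single specialist, namely $(\sh{\gti}{},\rcr{\sh{\gti}{}})$. First I would write
\[
\inprod{\copgti}{\arc{\gti}} \;=\; \sum_{\hi\in\was{\gti}}\frac{\cop{}(\hi)}{\musgt}\,\arc{\gti}(\hi)
\;=\;\frac{1}{\musgt}\sum_{\ei\in[\nex]}\sum_{\ci\in\sha}\idf{(\ei,\ci)\in\was{\gti}}\,\cop{}(\ei,\ci)\,\lo{\gti}{\ei},
\]
using the definition of $\arc{\gti}$ from Section~\ref{ourspecsec}.

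Next, by the definition of $\ufu$, only pairs $(\ei,\rcr{\ei})$ with $\ei\in\ran=\dis(\com)$ contribute, each with weight $1/\siz{\com}$. For such a pair, membership in $\was{\gti}$ is equivalent to $\rcr{\ei}(\liv{\gti},\gltf{\gti})=1$, i.e.\ to $\sh{\liv{\gti}}{\gltf{\gti}}=\ei$, which by definition of $\sh{\gti}{}$ means $\ei=\sh{\gti}{}$. Since $\sh{\gti}{}$ is itself an element of $\ran$, exactly one term survives the double sum, namely the one indexed by $\ei=\sh{\gti}{}$, contributing $\lo{\gti}{\sh{\gti}{}}/\siz{\com}$.

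Finally I would invoke Lemma~\ref{musize}, which gives $\musgt=1/\siz{\com}$, so the $1/\siz{\com}$ factor cancels and we obtain $\inprod{\copgti}{\arc{\gti}}=\lo{\gti}{\sh{\gti}{}}$, as claimed. There is no real obstacle here beyond careful bookkeeping of the indicators: the whole lemma is a direct consequence of the fact that the comparator distribution $\ufu$ is supported on the ``canonical'' specialists $(\ei,\rcr{\ei})$, and exactly one of these is awake on any given trial.
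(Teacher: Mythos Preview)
Your proposal is correct and follows essentially the same approach as the paper: both arguments unpack the definitions, observe that the support of $\cop{}$ restricted to $\was{\gti}$ is the single specialist $(\sh{\gti}{},\rcr{\sh{\gti}{}})$, and then invoke Lemma~\ref{musize} to cancel the $1/\siz{\com}$ factor.
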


\begin{proof}
We have:
\begin{align}
\notag \inprod{\copgti}{\arc{\gti}}&=\sum_{\hi\in\was{\gti}}\copgti(\hi)\barc{\gti}{\hi}\\
\notag&=\sum_{(\ei,\ci)\in\was{\gti}}\copgti(\ei,\ci)\barc{\gti}{\ei,\ci}\\
\label{ccreq}&=\sum_{(\ei,\ci)\in\was{\gti}}\copgti(\ei,\ci)\lo{\gti}{\ei}\\
\notag&=\sum_{(\ei,\ci)\in\hye}\idf{(\ei,\ci)\in\was{\gti}}\copgti(\ei,\ci)\lo{\gti}{\ei}\\
\notag&=\sum_{(\ei,\ci)\in\hye}\idf{(\ei,\ci)\in\was{\gti}}\idf{\ci=\rcr{\ei}}\copgti(\ei,\ci)\lo{\gti}{\ei}\\
\notag&=\sum_{\ei\in\na{\nex}}\idf{(\ei,\rcr{\ei})\in\was{\gti}}\copgti(\ei,\rcr{\ei})\lo{\gti}{\ei}\\
\notag&=\sum_{\ei\in\na{\nex}}\idf{\rcr{\ei^{\liv{\gti}}_{\gltf{\gti}}}=1}\copgti(\ei,\rcr{\ei})\lo{\gti}{\ei}\\
\notag&=\sum_{\ei\in\na{\nex}}\idf{\ei=\sh{{\liv{\gti}}}{{\gltf{\gti}}}}\copgti(\ei,\rcr{\ei})\lo{\gti}{\ei}\\
\notag&=\sum_{\ei\in\na{\nex}}\idf{\ei=\sh{\gti}{}}\copgti(\ei,\rcr{\ei})\lo{\gti}{\ei}\\
\notag&=\copgti(\sh{\gti}{},\rcr{\ei})\lo{\gti}{\sh{\gti}{}}\\
\notag&=\frac{\cop{}(\sh{\gti}{},\rcr{\ei})}{\musgt}\lo{\gti}{\sh{\gti}{}}\\
\notag&=\frac{1}{\siz{\com}\musgt}\lo{\gti}{\sh{\gti}{}}\\
\end{align}
where Equation \eqref{ccreq} comes from the reduction. Substituting in the equality in Lemma \ref{musize} then gives us the result.
\end{proof}

\subsection{The Initial Relative Entropy}\label{iress}

In this subsection, we quantify the term $\rel{\cop{}}{\pv{1}}$ that appears in Theorem \ref{allospecth} when analysing the reduction of Subsection \ref{Shcsec} given our comparator sequence $\com = (\sh{1}{1},\ldots, \sh{1}{T^1},\ldots,\sh{\nL}{1},\ldots,\sh{\nL}{T^{\nL}})\in[\nex]^T$ for the Multitask Allocation model (see Figure~\ref{fig:map}) with switching. Specifically, we prove the following theorem:

\begin{theorem}\label{tuneth}
Setting $\sip:=1/\siz{\com}$, $\ww:=1-\nsw{\com}/(\gtu-\nL)$ and $\sw:=\nsw{\com}/((\siz{\com}-1)(\gtu-\nL))$ we have:
\begin{align}
\notag\siz{\com}\rel{\cop{}}{\pv{1}}&\leq\siz{\com}\log\left(\frac{\nex}{\siz{\com}}\right)+\nL\siz{\com}\Bbine{\frac{1}{\siz{\com}}}\\
\notag&+(\gtu-\nL)\Bbine{\frac{\nsw{\com}}{\gtu-\nL}}+(\siz{\com}-1)(\gtu-\nL)\Bbine{\frac{\nsw{\com}}{(\siz{\com}-1)(\gtu-\nL)}}
\end{align}
and
\begin{align}
\notag\siz{\com}\rel{\cop{}}{\pv{1}}&\leq\siz{\com}\log\left(\frac{\nex}{\siz{\com}}\right)+\nL(\log(\siz{\com})+1)\\
\notag&+\nsw{\com}\left(\log(\siz{\com}-1)+2\log\left(\frac{\gtu-\nL}{\nsw{\com}}\right)+2\right).
\end{align}
\end{theorem}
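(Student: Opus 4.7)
The plan is to show that the first inequality in fact holds with equality under the prescribed tuning of $\sip$, $\ww$ and $\sw$, and then to obtain the second inequality by a standard upper bound on the binary entropy.

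First I would expand $\rel{\cop{}}{\pv{1}}$ using the fact that $\cop{}$ is supported on the pairs $(\ei,\rcr{\ei})$ with $\ei\in\ran$, each of mass $1/\siz{\com}$. Multiplying through by $\siz{\com}$ gives
$$\siz{\com}\rel{\cop{}}{\pv{1}} = \sum_{\ei\in\ran} \log\frac{1/\siz{\com}}{\pv{1}(\ei,\rcr{\ei})}.$$
Combined with the product form of $\pv{1}$ (taking into account the $1/\nex$ factor made visible in Lemma~\ref{inpvl}), this splits into three additive pieces: a prior contribution $\siz{\com}\log(\nex/\siz{\com})$, a sum of $-\log\trn{\rcr{\ei}(\li,1)}$ terms indexed by $(\ei,\li)$, and a sum of $-\log\trm{\rcr{\ei}(\li,\ti)}{\rcr{\ei}(\li,\ti+1)}$ terms indexed by $(\ei,\li,\ti)$ with $\ti\in\na{\ltu{\li}-1}$.

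Next I would handle the initial and transition pieces by counting. For the initial piece, in each task exactly one expert in $\ran$ starts in state $1$ and the remaining $\siz{\com}-1$ start in state $0$, so the inner sum at each task equals $\log\sip+(\siz{\com}-1)\log(1-\sip)$, which with $\sip=1/\siz{\com}$ reduces to $-\siz{\com}\Bbine{1/\siz{\com}}$; summing over tasks contributes $\nL\siz{\com}\Bbine{1/\siz{\com}}$ to $\siz{\com}\rel{\cop{}}{\pv{1}}$. For the transition piece I would partition the $T-\nL$ positions into the $\nsw{\com}$ switch positions and the $(T-\nL)-\nsw{\com}$ non-switch positions. At a non-switch position one expert in $\ran$ follows the $1\to 1$ transition and the other $\siz{\com}-1$ follow $0\to 0$; at a switch one follows $1\to 0$, one follows $0\to 1$, and the remaining $\siz{\com}-2$ follow $0\to 0$. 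Regrouping by transition type and substituting $1-\ww=\nsw{\com}/(T-\nL)$ and $\sw=\nsw{\com}/((\siz{\com}-1)(T-\nL))$ makes each total coefficient agree with the matching probability, so the sum reassembles into $(T-\nL)\Bbine{\nsw{\com}/(T-\nL)}+(\siz{\com}-1)(T-\nL)\Bbine{\nsw{\com}/((\siz{\com}-1)(T-\nL))}$. Summing the three pieces yields the first inequality (in fact as an equality).

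For the second inequality I would invoke the elementary bound $\Bbine{p}\le p\log(1/p)+p$, which follows from $(1-p)\log(1/(1-p))\le p$. Applying this to the three entropy terms gives $\nL\siz{\com}\Bbine{1/\siz{\com}}\le\nL(\log\siz{\com}+1)$, $(T-\nL)\Bbine{\nsw{\com}/(T-\nL)}\le\nsw{\com}[\log((T-\nL)/\nsw{\com})+1]$, and $(\siz{\com}-1)(T-\nL)\Bbine{\nsw{\com}/((\siz{\com}-1)(T-\nL))}\le\nsw{\com}[\log(\siz{\com}-1)+\log((T-\nL)/\nsw{\com})+1]$. Adding these reproduces the closed-form bound. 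The main obstacle I anticipate is the combinatorial bookkeeping: one has to verify that the coefficients of $\log\ww$, $\log(1-\ww)$, $\log\sw$, $\log(1-\sw)$ realign, under the chosen tuning, into precisely scaled binary entropies. Degenerate cases such as $\nsw{\com}=0$, or $\siz{\com}=1$ (excluded by the $\dis>1$ hypothesis of Theorem~\ref{thm:fin}), simply drop the corresponding entropy term and require no extra work.
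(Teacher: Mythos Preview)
Your proposal is correct and follows essentially the same approach as the paper. The paper organizes the computation through a sequence of intermediate lemmas (Lemmas~\ref{sl1}--\ref{sl3} for the per-position counts, Theorem~\ref{re1th} for the untuned expression, and Lemmas~\ref{bef1l}--\ref{bef2l} for the tuning and the entropy bound), but the underlying argument---expand the relative entropy over the support of $\cop{}$, count the initial and transition contributions by type, and substitute the tuned parameters to collapse each group into a scaled binary entropy---is exactly what you describe, and your observation that the first display is in fact an equality under the stated tuning is also what the paper's proof establishes.
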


Our first lemma quantifies the sum of $\log\left(\trn{\rcr{\ei}(\li,1)}\right)$ over all experts $\ei$ in $\ran$.

\begin{lemma}\label{sl1}
For all tasks $\li\in\na{\nL}$ we have:
$$\sum_{\ei\in\ran}\log\left(\trn{\rcr{\ei}(\li,1)}\right)=\log(\sip)+(\siz{\com}-1)\log(1-\sip).$$
\end{lemma}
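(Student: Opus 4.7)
The plan is to observe that for each fixed task $\li$, the indicator $\rcr{\ei}(\li,1) = \idf{\sh{\li}{1}=\ei}$ partitions $\ran$ into exactly one expert on which it takes value $1$ and the remaining $\siz{\com}-1$ experts on which it takes value $0$. Concretely, since $\sh{\li}{1}$ appears in the comparator sequence $\com$, it belongs to $\ran = \dis(\com)$, and it is the unique $\ei \in \ran$ with $\rcr{\ei}(\li,1) = 1$.

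Given this partition, I would split the sum according to the value of $\rcr{\ei}(\li,1)$:
\[
\sum_{\ei\in\ran}\log\!\left(\trn{\rcr{\ei}(\li,1)}\right) = \log\!\left(\trn{1}\right) + (\siz{\com}-1)\log\!\left(\trn{0}\right),
\]
and then substitute $\trn{1} = \sip$ and $\trn{0} = 1-\sip$ from Definition~\ref{trmd}, which immediately yields the claimed identity.

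There is no real obstacle here; the statement is essentially a counting identity combined with the definitions of $\rcr{\ei}$ and $\trn{\cdot}$. The only thing worth being careful about is verifying that $\sh{\li}{1}$ really is in $\ran$, which is immediate from the definition $\ran = \dis(\com) = \{\sh{\li'}{\ti'} : (\li',\ti')\in\str\}$, so that the unique summand contributing $\log(\sip)$ is accounted for.
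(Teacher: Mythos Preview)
Your proposal is correct and follows essentially the same approach as the paper: observe that $\sh{\li}{1}\in\ran$ is the unique expert with $\rcr{\ei}(\li,1)=1$, split the sum into one term contributing $\log(\trn{1})$ and $\siz{\com}-1$ terms contributing $\log(\trn{0})$, then substitute the values from Definition~\ref{trmd}. The paper's proof is the same counting argument, stated slightly more tersely.
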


\begin{proof}
For $\ei:=\com(\li,1)$ we have  $\rcr{\ei}(\li,1)=1$ and for $\ei\in\ran$ with $\ei\neq\com(\li,1)$ we have $\rcr{\ei}(\li,1)=0$. This implies that:
\begin{align}
\notag\sum_{\ei\in\ran}\log\left(\trn{\rcr{\ei}(\li,1)}\right)&=\log(\trn{1})+(|\ran|-1)\log(\trn{0})\\
\notag&=\log(\sip)+(\siz{\com}-1)\log(1-\sip).
\end{align}
\end{proof}

The next two lemmas quantify, given some $(\li, t)\in\na{\nL}\times\na{\ltu{\li}-1}$, the sum of $\log\left(\trm{\rcr{\ei}(\li,\ti)}{\rcr{\ei}(\li,\ti+1)}\right)$ over all experts $\ei$ in $\ran$. The first lemma is in the case that $\com(\li,\ti)=\com(\li,\ti+1)$.

\begin{lemma}\label{sl2}
For all tasks $\li\in\na{\nL}$ and all $\ti\in\na{\ltu{\li}-1}$ with $\com(\li,\ti)=\com(\li,\ti+1)$ we have:
$$\sum_{\ei\in\ran}\log\left(\trm{\rcr{\ei}(\li,\ti)}{\rcr{\ei}(\li,\ti+1)}\right)=\log(\ww)+(\siz{\com}-1)\log(1-\sw).$$
\end{lemma}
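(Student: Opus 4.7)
The plan is a direct case split on the identity of $\ei\in\ran$ relative to the common comparator value. Let $\ei^\star:=\com(\li,\ti)=\com(\li,\ti+1)$, which by assumption is a single well-defined element of $\ran$. From the definition $\rcr{\ei}(\li,\ti'):=\idf{\sh{\li}{\ti'}=\ei}$, I would observe that for $\ei=\ei^\star$ both $\rcr{\ei^\star}(\li,\ti)$ and $\rcr{\ei^\star}(\li,\ti+1)$ equal $1$, while for any other $\ei\in\ran$ (distinct experts give disjoint indicator support at a fixed slot) both $\rcr{\ei}(\li,\ti)$ and $\rcr{\ei}(\li,\ti+1)$ equal $0$.

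Consequently, consulting Definition~\ref{trmd}, the single summand corresponding to $\ei^\star$ contributes $\log\trm{1}{1}=\log\ww$, and each of the remaining $|\ran|-1=\siz{\com}-1$ summands contributes $\log\trm{0}{0}=\log(1-\sw)$. Summing gives
\[
\sum_{\ei\in\ran}\log\left(\trm{\rcr{\ei}(\li,\ti)}{\rcr{\ei}(\li,\ti+1)}\right)=\log(\ww)+(\siz{\com}-1)\log(1-\sw),
\]
which is the claim. This step is essentially bookkeeping; the only thing to be careful about is the uniqueness of $\ei^\star$ in $\ran$, which is immediate from $\sh{\li}{\ti}$ being a function. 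I anticipate no genuine obstacle, and the analogous ``switch'' case ($\com(\li,\ti)\ne\com(\li,\ti+1)$) presumably follows in the companion lemma by the same split, producing $\trm{1}{0}$ and $\trm{0}{1}$ contributions from $\com(\li,\ti)$ and $\com(\li,\ti+1)$ respectively and $\trm{0}{0}$ from the remaining $\siz{\com}-2$ experts.
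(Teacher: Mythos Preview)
Your proposal is correct and follows essentially the same approach as the paper: a case split on whether $\ei$ equals the common comparator value $\com(\li,\ti)=\com(\li,\ti+1)$, yielding one $\trm{1}{1}=\ww$ term and $\siz{\com}-1$ terms equal to $\trm{0}{0}=1-\sw$. Your anticipation of the companion lemma is also accurate.
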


\begin{proof}
For $\ei:=\com(\li,\ti)$ we have, since also $\ei=\com(\li,\ti+1)$, that $\rcr{\ei}(\li,\ti)=\rcr{\ei}(\li,\ti+1)=1$. For $\ei\in\ran$ with $\ei\neq\com(\li,\ti)$, since also $\ei\neq\com(\li,\ti+1)$, that $\rcr{\ei}(\li,\ti)=\rcr{\ei}(\li,\ti+1)=0$. This implies that:
\begin{align}
\notag\sum_{\ei\in\ran}\log\left(\trm{\rcr{\ei}(\li,\ti)}{\rcr{\ei}(\li,\ti+1)}\right)&=\log\left(\trm{1}{1}\right)+(|\ran|-1)\log\left(\trm{0}{0}\right)\\
\notag&=\log(\ww)+(\siz{\com}-1)\log(1-\sw).
\end{align}
\end{proof}

The second lemma is in the case that $\com(\li,\ti)\neq\com(\li,\ti+1)$.

\begin{lemma}\label{sl3}
For all tasks $\li\in\na{\nL}$ and all $\ti\in\na{\ltu{\li}-1}$ with $\com(\li,\ti)\neq\com(\li,\ti+1)$, we have:
$$\sum_{\ei\in\ran}\log\left(\trm{\rcr{\ei}(\li,\ti)}{\rcr{\ei}(\li,\ti+1)}\right)=\log(1-\ww)+\log(\sw)+(\siz{\com}-2)\log(1-\sw).$$
\end{lemma}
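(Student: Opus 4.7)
The plan is to mimic the case analysis used to prove Lemma~\ref{sl2}, but now split into three cases rather than two because the two comparator values at local times $\ti$ and $\ti+1$ are distinct.

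First I would introduce the shorthand $a := \com(\li,\ti)$ and $b := \com(\li,\ti+1)$. By hypothesis $a \neq b$, and by the definition of $\ran = \dis(\com)$ both $a$ and $b$ lie in $\ran$. I then partition $\ran$ as the disjoint union of $\{a\}$, $\{b\}$, and $\ran \setminus \{a,b\}$, noting $|\ran \setminus \{a,b\}| = \siz{\com}-2$.

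Next I would read off the values of $\rcr{\ei}(\li,\ti)$ and $\rcr{\ei}(\li,\ti+1)$ in each case using $\rcr{\ei}(\li,\ti') = [\sh{\li}{\ti'} = \ei]$. For $\ei = a$ we get the pair $(1,0)$, so the relevant transition is $\trm{1}{0} = 1-\ww$. For $\ei = b$ we get $(0,1)$, yielding $\trm{0}{1} = \sw$. For every other $\ei \in \ran$ both indicators are $0$, giving $\trm{0}{0} = 1-\sw$.

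Summing the logarithms over the three subsets then produces
\[
\sum_{\ei\in\ran}\log\!\left(\trm{\rcr{\ei}(\li,\ti)}{\rcr{\ei}(\li,\ti+1)}\right) = \log(1-\ww) + \log(\sw) + (\siz{\com}-2)\log(1-\sw),
\]
which is exactly the claim. There is no real obstacle here; the only thing to be careful about is invoking $a,b \in \ran$ (so that the two singled-out experts actually appear in the sum over $\ran$) and that $a \neq b$ (so the singletons are disjoint and the residual set has cardinality $\siz{\com}-2$ rather than $\siz{\com}-1$ as in Lemma~\ref{sl2}).
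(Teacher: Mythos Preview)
Your proposal is correct and follows essentially the same approach as the paper's proof: both single out the two distinguished experts $\com(\li,\ti)$ and $\com(\li,\ti+1)$, compute the respective indicator pairs $(1,0)$ and $(0,1)$, observe that all remaining $\siz{\com}-2$ experts give $(0,0)$, and then sum the corresponding $\log\trm{\cdot}{\cdot}$ values.
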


\begin{proof}
Let $\ei':=\com(\li,\ti)$ and $\ei'':=\com(\li,\ti+1)$. Note that $\ei'\neq\ei''$. Since $\ei'=\com(\li,\ti)$, we have $\ei'\neq\com(\li,\ti+1)$, so $\rcr{\ei'}(\li,\ti)=1$ and $\rcr{\ei'}(\li,\ti+1)=0$. Since $\ei''=\com(\li,\ti+1)$, we have $\ei'\neq\com(\li,\ti)$, so $\rcr{\ei'}(\li,\ti)=0$ and $\rcr{\ei'}(\li,\ti+1)=1$. For $\ei\in\ran\setminus\{\ei',\ei''\}$ we have $\ei\neq\com(\li,\ti)$ and $\ei\neq\com(\li,\ti+1)$, so $\rcr{\ei}(\li,\ti)=0$ and $\rcr{\ei}(\li,\ti+1)=0$. Hence, we have:
\begin{align}
\notag&\sum_{\ei\in\ran}\log\left(\trm{\rcr{\ei}(\li,\ti)}{\rcr{\ei}(\li,\ti+1)}\right)\\
\notag=&\log\left(\trm{\rcr{\ei'}(\li,\ti)}{\rcr{\ei'}(\li,\ti+1)}\right)+\log\left(\trm{\rcr{\ei''}(\li,\ti)}{\rcr{\ei''}(\li,\ti+1)}\right)+\sum_{\ei\in\ran\setminus\{\ei',\ei''\}}\log\left(\trm{\rcr{\ei}(\li,\ti)}{\rcr{\ei}(\li,\ti+1)}\right)\\
\notag=&\log(\trm{1}{0})+\log(\trm{0}{1})+\sum_{\ei\in\ran\setminus\{\ei',\ei''\}}\log\left(\trm{0}{0}\right)\\
\notag=&\log(\trm{1}{0})+\log(\trm{0}{1})+(|\ran-2|)\log\left(\trm{0}{0}\right)\\
\notag=&\log(1-\ww)+\log(\sw)+(\siz{\com}-2)\log(1-\sw).
\end{align}
\end{proof}

With the above lemmas in hand, we now quantify the relative entropy between the distribution $\cop{}$ and the initial weights of the specialists, in terms of $\sip$, $\ww$ and $\sw$.
\begin{theorem}\label{re1th}
Given that $\sip$, $\ww$ and $\sw$ are all in $[0,1]$, we have:
\begin{align}
\notag\siz{\com}\rel{\cop{}}{\pv{1}}&\leq \siz{\com}\log\left(\frac{\nex}{\siz{\com}}\right)-\nL\left(\log(\sip)+(\siz{\com}-1)\log(1-\sip)\right)\\
\notag&\quad-\log(\ww)\left(\gtu-\nL-\nsw{\com}\right)-\log(1-\ww)\nsw{\com}-\log(\sw)\nsw{\com}\\
\notag&\quad-((\siz{\com}-1)(\gtu-\nL)-\nsw{\com})\log(1-\sw).
\end{align}
\end{theorem}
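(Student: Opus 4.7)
My plan is to compute $\siz{\com}\rel{\cop{}}{\pv{1}}$ essentially in closed form, using the fact that $\cop{}$ has very sparse support and then invoking the three sum lemmas just proved. The inequality in the statement will turn out to be an equality; the $\leq$ is presumably stated for the convenience of later use.

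First I would unfold the relative entropy. Because $\cop{}(\ei,\ci)$ is nonzero only on the $\siz{\com}$ pairs $(\ei,\rcr{\ei})$ with $\ei\in\ran$, and $\cop{}(\ei,\rcr{\ei})=1/\siz{\com}$, we get
\[
\siz{\com}\rel{\cop{}}{\pv{1}} \;=\; \sum_{\ei\in\ran}\log\frac{1/\siz{\com}}{\pv{1}(\ei,\rcr{\ei})} \;=\; -\siz{\com}\log\siz{\com} \;-\; \sum_{\ei\in\ran}\log\pv{1}(\ei,\rcr{\ei}).
\]
Next I would expand $\log\pv{1}(\ei,\rcr{\ei})$ using its definition from Subsection~\ref{Shcsec}:
\[
\log\pv{1}(\ei,\rcr{\ei}) \;=\; -\log\nex \;+\; \sum_{\li=1}^{\nL}\log\trn{\rcr{\ei}(\li,1)} \;+\; \sum_{\li=1}^{\nL}\sum_{\ti=1}^{\ltu{\li}-1}\log\trm{\rcr{\ei}(\li,\ti)}{\rcr{\ei}(\li,\ti+1)}.
\]

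I would then swap the order of summation so that the outer sums are over $\li$ (and $\ti$) and the inner sum is over $\ei\in\ran$. The $\ei$-sum in the initial-distribution term is handled directly by Lemma~\ref{sl1}, contributing $\nL\bigl(\log(\sip)+(\siz{\com}-1)\log(1-\sip)\bigr)$ in total. For each transition pair $(\li,\ti)$ with $\ti\in\na{\ltu{\li}-1}$ I split into two cases: when $\com(\li,\ti)=\com(\li,\ti+1)$ I invoke Lemma~\ref{sl2}, and when they differ I invoke Lemma~\ref{sl3}. By the definition of $\nsw{\com}$ the number of such pairs is respectively $(\gtu-\nL-\nsw{\com})$ and $\nsw{\com}$, and the total number of transition pairs is $\sum_{\li}(\ltu{\li}-1)=\gtu-\nL$.

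The only piece that requires a moment of care is consolidating the coefficient of $\log(1-\sw)$, which appears with multiplicity $(\siz{\com}-1)$ in the non-switching lemma and $(\siz{\com}-2)$ in the switching lemma. Summing,
\[
(\gtu-\nL-\nsw{\com})(\siz{\com}-1) + \nsw{\com}(\siz{\com}-2) \;=\; (\siz{\com}-1)(\gtu-\nL)-\nsw{\com},
\]
which is precisely the coefficient appearing in the theorem. Collecting the $-\siz{\com}\log\siz{\com}+\siz{\com}\log\nex = \siz{\com}\log(\nex/\siz{\com})$ term with the remaining pieces (with an overall minus sign from the formula above) yields exactly the right-hand side of the statement. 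The only potential obstacle is purely bookkeeping, namely tracking signs and ensuring that every $(\li,\ti)$ transition pair is counted exactly once in the appropriate case; this is verified directly from the definition of $\nsw{\com}$ in the Multitask Allocation setting.
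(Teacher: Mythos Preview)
Your proposal is correct and follows essentially the same route as the paper's proof: expand the relative entropy over the sparse support of $\cop{}$, unfold $\pv{1}(\ei,\rcr{\ei})$ into the $1/\nex$ factor and the product of $\lambda$-terms, swap the order of summation, and invoke Lemmas~\ref{sl1}, \ref{sl2}, \ref{sl3} according to whether a switch occurs, combining the $\log(1-\sw)$ coefficients exactly as you do. Your observation that the derivation actually yields an equality is also accurate; the paper's chain of steps are all equalities as well.
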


\begin{proof}
We have
\begin{align}
\notag&\siz{\com}\rel{\cop{}}{\pv{1}}\\
\notag&=\siz{\com}\sum_{(\ei,\ci)\in\hye}\cop{}(\ei,\ci)\log\left(\frac{\cop{}(\ei,\ci)}{\pv{1}(\ei,\ci)}\right)\\
\notag&=\siz{\com}\sum_{(\ei,\ci)\in\hye}\cop{}(\ei,\ci)\log\left(\frac{\nex\cop{}(\ei,\ci)}{\crw{\ci}}\right)\\
\notag&=\siz{\com}\sum_{\ei\in\ran}\cop{}(\ei,\rcr{\ei})\log\left(\frac{\nex\cop{}(\ei,\rcr{\ei})}{\crw{\rcr{\ei}}}\right)\\
\notag&=\siz{\com}\sum_{\ei\in\ran}\frac{1}{\siz{\com}}\log\left(\frac{\nex/\siz{\com}}{\crw{\rcr{\ei}}}\right)\\
\notag&=\sum_{\ei\in\ran}\log\left(\frac{\nex/\siz{\com}}{\crw{\rcr{\ei}}}\right)\\
\notag&=\sum_{\ei\in\ran}\log\left(\frac{\nex}{\siz{\com}}\right)-\sum_{\ei\in\ran}\log\left(\crw{\rcr{\ei}}\right)\\
\notag&=\siz{\com}\log\left(\frac{\nex}{\siz{\com}}\right)-\sum_{\ei\in\ran}\log\left(\crw{\rcr{\ei}}\right)\\
\notag&=\siz{\com}\log\left(\frac{\nex}{\siz{\com}}\right)-\sum_{\ei\in\ran}\log\left(\prod_{\li\in\na{\nL}}\trn{\rcr{\ei}(\li,1)}\prod_{\ti\in\na{\ltu{\li}-1}}\trm{\rcr{\ei}(\li,\ti)}{\rcr{\ei}(\li,\ti+1)}\right)\\
\notag&=\siz{\com}\log\left(\frac{\nex}{\siz{\com}}\right)-\sum_{\ei\in\ran}\sum_{\li\in\na{\nL}}\left(\log\left(\trn{\rcr{\ei}(\li,1)}\right)+\sum_{\ti\in\na{\ltu{\li}-1}}\log\left(\trm{\rcr{\ei}(\li,\ti)}{\rcr{\ei}(\li,\ti+1)}\right)\right)\\
\notag&=\siz{\com}\log\left(\frac{\nex}{\siz{\com}}\right)-\sum_{\li\in\na{\nL}}\left(\sum_{\ei\in\ran}\log\left(\trn{\rcr{\ei}(\li,1)}\right)+\sum_{\ti\in\na{\ltu{\li}-1}}\sum_{\ei\in\ran}\log\left(\trm{\rcr{\ei}(\li,\ti)}{\rcr{\ei}(\li,\ti+1)}\right)\right)\\
\label{re1eq}&=\siz{\com}\log\left(\frac{\nex}{\siz{\com}}\right)-\nL\left(\log(\sip)+(\siz{\com}-1)\log(1-\sip)\right)-\sum_{\li\in\na{\nL}}\sum_{\ti\in\na{\ltu{\li}-1}}\sum_{\ei\in\ran}\log\left(\trm{\rcr{\ei}(\li,\ti)}{\rcr{\ei}(\li,\ti+1)}\right)
\end{align}
where Equation \eqref{re1eq} comes from Lemma \ref{sl1}. By Lemma \ref{sl2} we have:
\begin{align}
\notag&\sum_{\li\in\na{\nL}}\sum_{\ti\in\na{\ltu{\li}-1}}\idf{\sh{\li}{\ti}=\sh{\li}{\ti+1}}\sum_{\ei\in\ran}\log\left(\trm{\rcr{\ei}(\li,\ti)}{\rcr{\ei}(\li,\ti+1)}\right)\\
\notag=&\sum_{\li\in\na{\nL}}\sum_{\ti\in\na{\ltu{\li}-1}}\idf{\sh{\li}{\ti}=\sh{\li}{\ti+1}}\left(\log(\ww)+(\siz{\com}-1)\log(1-\sw)\right)\\
\notag=&\left(\log(\ww)+(\siz{\com}-1)\log(1-\sw)\right)\sum_{\li\in\na{\nL}}\sum_{\ti\in\na{\ltu{\li}-1}}\idf{\sh{\li}{\ti}=\sh{\li}{\ti+1}}\\
\notag=&\left(\log(\ww)+(\siz{\com}-1)\log(1-\sw)\right)\sum_{\li\in\na{\nL}}\sum_{\ti\in\na{\ltu{\li}-1}}(1-\idf{\sh{\li}{\ti}\neq\sh{\li}{\ti+1}})\\
\notag=&\left(\log(\ww)+(\siz{\com}-1)\log(1-\sw)\right)\left(\sum_{\li\in\na{\nL}}(\ltu{\li}-1)-\sum_{\li\in\na{\nL}}\sum_{\ti\in\na{\ltu{\li}-1}}\idf{\sh{\li}{\ti}\neq\sh{\li}{\ti+1}}\right)\\
\notag=&\left(\log(\ww)+(\siz{\com}-1)\log(1-\sw)\right)\left(\gtu-\nL-\sum_{\li\in\na{\nL}}\sum_{\ti\in\na{\ltu{\li}-1}}\idf{\sh{\li}{\ti}\neq\sh{\li}{\ti+1}}\right)\\
\notag=&\left(\log(\ww)+(\siz{\com}-1)\log(1-\sw)\right)\left(\gtu-\nL-\nsw{\com}\right)
\end{align}
and by Lemma \ref{sl3} we have:
\begin{align}
\notag&\sum_{\li\in\na{\nL}}\sum_{\ti\in\na{\ltu{\li}-1}}\idf{\sh{\li}{\ti}\neq\sh{\li}{\ti+1}}\sum_{\ei\in\ran}\log\left(\trm{\rcr{\ei}(\li,\ti)}{\rcr{\ei}(\li,\ti+1)}\right)\\
\notag=&\sum_{\li\in\na{\nL}}\sum_{\ti\in\na{\ltu{\li}-1}}\idf{\sh{\li}{\ti}\neq\sh{\li}{\ti+1}}\left(\log(1-\ww)+\log(\sw)+(\siz{\com}-2)\log(1-\sw)\right)\\
\notag=&\left(\log(1-\ww)+\log(\sw)+(\siz{\com}-2)\log(1-\sw)\right)\sum_{\li\in\na{\nL}}\sum_{\ti\in\na{\ltu{\li}-1}}\idf{\sh{\li}{\ti}\neq\sh{\li}{\ti+1}}\\
\notag=&\left(\log(1-\ww)+\log(\sw)+(\siz{\com}-2)\log(1-\sw)\right)\nsw{\com}
\end{align}
so:
\begin{align}
\notag&\sum_{\li\in\na{\nL}}\sum_{\ti\in\na{\ltu{\li}-1}}\sum_{\ei\in\ran}\log\left(\trm{\rcr{\ei}(\li,\ti)}{\rcr{\ei}(\li,\ti+1)}\right)\\
\notag=&\sum_{\li\in\na{\nL}}\sum_{\ti\in\na{\ltu{\li}-1}}(\idf{\sh{\li}{\ti}\neq\sh{\li}{\ti+1}}+\idf{\sh{\li}{\ti}\neq\sh{\li}{\ti+1}})\sum_{\ei\in\ran}\log\left(\trm{\rcr{\ei}(\li,\ti)}{\rcr{\ei}(\li,\ti+1)}\right)\\
\notag=&\left(\left(\log(\ww)+(\siz{\com}-1)\log(1-\sw)\right)\left(\gtu-\nL-\nsw{\com}\right)\right)+\left(\left(\log(1-\ww)+\log(\sw)+(\siz{\com}-2)\log(1-\sw)\right)\nsw{\com}\right)\\
\notag=&\log(\ww)\left(\gtu-\nL-\nsw{\com}\right)+\log(1-\ww)\nsw{\com}+\log(\sw)\nsw{\com}+((\siz{\com}-1)(\gtu -\nL)-\nsw{\com})\log(1-\sw).
\end{align}
Substituting into Equation \eqref{re1eq} gives us the result.
\end{proof}

The next lemma will assist us in tuning $\sip$, $\ww$ and $\sw$.
\begin{lemma}\label{bef1l}
Given $x,y\in\mathbb{R}^+$, if we set $z:=\frac{x}{x+y}$ then:
$$-x\log(z)-y\log(1-z)=(x+y)\Bbine{\frac{x}{x+y}}$$
and
$$-x\log(z)-y\log(1-z)\leq x\log\left(\frac{x+y}{x}\right)+x.$$
\end{lemma}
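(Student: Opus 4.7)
The plan is to tackle the two claims separately, both of which reduce to direct algebraic manipulations once $z$ and $1-z$ are expanded. With $z = x/(x+y)$, note that $1-z = y/(x+y)$, and each of $\log z$ and $\log(1-z)$ can be rewritten in the form $\log(\text{numerator}) - \log(x+y)$.

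For the equality, I would simply substitute $z = x/(x+y)$ and $1-z = y/(x+y)$ into $-x\log z - y\log(1-z)$, obtaining
\[
-x\log\!\left(\frac{x}{x+y}\right) - y\log\!\left(\frac{y}{x+y}\right).
\]
Factoring out $(x+y)$ and recognizing the result as $(x+y)\bigl[-z\log z - (1-z)\log(1-z)\bigr] = (x+y)\Bbine{x/(x+y)}$, using the definition $H(p) = p\log(1/p)+(1-p)\log(1/(1-p))$, completes this step.

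For the inequality, I would split the left-hand side as $-x\log z - y\log(1-z)$ and observe that the first term is exactly $x\log((x+y)/x)$, matching the leading term on the right-hand side. It then suffices to show $-y\log(1-z) \le x$, i.e., $y\log((x+y)/y) \le x$. Writing $(x+y)/y = 1 + x/y$ and applying the elementary bound $\log(1+t) \le t$ for $t \ge 0$ with $t = x/y$ yields $y\log(1+x/y) \le y\cdot(x/y) = x$, which is precisely what is needed.

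Neither step presents any real obstacle; the only minor care is in keeping the two logarithms separated so that the $\log(1+t)\le t$ bound is applied to the correct term. The lemma is essentially a bookkeeping statement that rewrites the ``KL-like'' expression $-x\log z - y\log(1-z)$ in entropy form and supplies a clean upper bound in terms of $\log((x+y)/x)$, which will be applied in Theorem~\ref{tuneth} with $(x,y)$ chosen as the various switch/mode/task counts.
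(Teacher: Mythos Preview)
Your proposal is correct and essentially matches the paper's argument. The equality is handled identically by substitution and factoring out $(x+y)$; for the inequality, the paper first passes through the entropy form and invokes the inequality $\tfrac{1}{z}H(z)\le \log(1/z)+1$, whereas you split the two log terms directly and apply $\log(1+t)\le t$ to the second---these are the same bound in different clothing, and your route is marginally more direct.
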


\begin{proof}
We have $x=(x+y)z$ and $y=(x+y)(1-z)$ so 
\begin{align}
-\label{bineline1}x\log(z)-y\log(1-z)&=(x+y)(-z\log(z)-(1-z)\log(1-z))\\
\notag&=(x+y)\bine{z}\\
\notag&=(x+y)\Bbine{\frac{x}{x+y}}.
\end{align}

We recall the standard inequality: 
\[ \frac{1}{z}\bine{z}\leq\log(1/z)+1. \]
Plugging into the Equation \eqref{bineline1} gives us:
$$-x\log(z)-y\log(1-z)\leq (x+y)(z\log(1/z)+z).$$
Substituting in the value of $z$ then gives us the result.
\end{proof}

The next lemma utilizes the inequality in Lemma \ref{bef1l} to give us the values of certain quantities in Theorem \ref{re1th} when $\sip$, $\ww$ and $\sw$ are tuned.
\begin{lemma}\label{bef2l}
Setting $\sip:=1/\siz{\com}$, $\ww:=1-\nsw{\com}/(\gtu-\nL)$ and $\sw:=\nsw{\com}/((\siz{\com}-1)(\gtu-\nL))$ we have:
\begin{enumerate}
\item $ \begin{aligned}[t]
-\log(\sip)-(\siz{\com}-1)\log(1-\sip)&=\siz{\com}H\bigg({\frac{1}{\siz{\com}}}\bigg)\\
&\leq\log(\siz{\com})+1
\end{aligned}$
\item $ \begin{aligned}[t]
-(\gtu-\nL-\nsw{\com})\log(\ww)-\nsw{\com}\log(1-\ww)&=(\gtu-\nL)\Bbine{\frac{\nsw{\com}}{\gtu-\nL}}\\
&\leq\nsw{\com}\log\left(\frac{\gtu-\nL}{\nsw{\com}}\right)+\nsw{\com}
\end{aligned}$
\item $ \begin{aligned}[t]
-\nsw{\com}\log(\sw)+((\siz{\com}-1)(\gtu-\nL)&-\nsw{\com})\log(1-\sw) \\
&=(\siz{\com}-1)(\gtu-\nL)\Bbine{\frac{\nsw{\com}}{(\siz{\com}-1)(\gtu-\nL)}}\\
&\leq\nsw{\com}\log\left(\frac{(\siz{\com}-1)(\gtu-\nL)}{\nsw{\com}}\right)+\nsw{\com} \\
\end{aligned}$
\end{enumerate}
\end{lemma}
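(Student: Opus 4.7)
The plan is to apply the preceding Lemma~\ref{bef1l} three times, once for each of the three identities, by selecting suitable nonnegative reals $x,y$ so that the stipulated tuning of $\sip$, $\ww$, or $\sw$ equals $x/(x+y)$ (or its complement). Both the binary-entropy equality and the $x\log((x+y)/x)+x$ upper bound in each part will then follow immediately from the two conclusions of that lemma, with only elementary algebra required to match expressions.

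For part (1), set $x=1$ and $y=\siz{\com}-1$, so $x/(x+y)=1/\siz{\com}=\sip$. Lemma~\ref{bef1l} gives $-\log(\sip)-(\siz{\com}-1)\log(1-\sip)=\siz{\com}\Bbine{1/\siz{\com}}\le\log(\siz{\com})+1$, which is exactly the claim. For part (2), note that the tuning makes $1-\ww=\nsw{\com}/(\gtu-\nL)$, so I would set $x=\nsw{\com}$ and $y=\gtu-\nL-\nsw{\com}$ and apply Lemma~\ref{bef1l} with the auxiliary variable $z$ in that lemma instantiated as $z=1-\ww$; the expression $-x\log z-y\log(1-z)$ of the lemma then coincides with $-\nsw{\com}\log(1-\ww)-(\gtu-\nL-\nsw{\com})\log(\ww)$, and the conclusions of Lemma~\ref{bef1l} yield both $(\gtu-\nL)\Bbine{\nsw{\com}/(\gtu-\nL)}$ and $\nsw{\com}\log((\gtu-\nL)/\nsw{\com})+\nsw{\com}$.

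For part (3), set $x=\nsw{\com}$ and $y=(\siz{\com}-1)(\gtu-\nL)-\nsw{\com}$, so that $x/(x+y)=\nsw{\com}/((\siz{\com}-1)(\gtu-\nL))=\sw$. Substituting into Lemma~\ref{bef1l} with $z=\sw$ produces $-\nsw{\com}\log(\sw)-((\siz{\com}-1)(\gtu-\nL)-\nsw{\com})\log(1-\sw)=(\siz{\com}-1)(\gtu-\nL)\Bbine{\nsw{\com}/((\siz{\com}-1)(\gtu-\nL))}$, together with the desired upper bound $\nsw{\com}\log((\siz{\com}-1)(\gtu-\nL)/\nsw{\com})+\nsw{\com}$.

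There is essentially no real obstacle here: the lemma is pure bookkeeping once Lemma~\ref{bef1l} is in hand. The only minor care needed is (i) correctly matching $z$ versus $1-z$ in part (2), since the tuning of $\ww$ assigns the staying probability rather than the switching probability to the first variable, and (ii) verifying nonnegativity of $y$ in part (3), which requires $\nsw{\com}\le(\siz{\com}-1)(\gtu-\nL)$; this is automatic since otherwise $\sw\notin[0,1]$ and the tuning would be inadmissible.
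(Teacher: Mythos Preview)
Your proposal is correct and matches the paper's proof exactly: the paper also applies Lemma~\ref{bef1l} three times with the same choices $(x,y,z)=(1,\siz{\com}-1,\sip)$, $(\nsw{\com},\gtu-\nL-\nsw{\com},1-\ww)$, and $(\nsw{\com},(\siz{\com}-1)(\gtu-\nL)-\nsw{\com},\sw)$ for the three items.
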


\begin{proof}
All three items follow from Lemma \ref{bef1l} with $x$, $y$ and $z$ defined, for each item, as follows:
\begin{enumerate}
\item $z:=\sip$, $x:=1$ and $y:=\siz{\com}-1$
\item $z:=1-\ww$, $x:=\nsw{\com}$ and $y:=\gtu-\nL-\nsw{\com}$
\item $z:=\sw$, $x:=\nsw{\com}$ and $y:=(\siz{\com}-1)(\gtu-\nL)-\nsw{\com}$.
\end{enumerate}
\end{proof}

We are now ready  to prove Theorem \ref{tuneth}. 
Combining Theorem \ref{re1th} and Lemma \ref{bef2l}, we have both:
\begin{align}
\notag\siz{\com}\rel{\cop{}}{\pv{1}}&\leq\siz{\com}\log\left(\frac{\nex}{\siz{\com}}\right)+\nL\siz{\com}\Bbine{\frac{1}{\siz{\com}}}\\
\notag&+(\gtu-\nL)\Bbine{\frac{\nsw{\com}}{\gtu-\nL}}+(\siz{\com}-1)(\gtu-\nL)\Bbine{\frac{\nsw{\com}}{(\siz{\com}-1)(\gtu-\nL)}}
\end{align}
\begin{align}
\notag\siz{\com}\rel{\cop{}}{\pv{1}}&\leq \siz{\com}\log\left(\frac{\nex}{\siz{\com}}\right)+\nL(\log(\siz{\com})+1)\\
\notag&+\left(\nsw{\com}\log\left(\frac{\gtu-\nL}{\nsw{\com}}\right)+\nsw{\com}\right)+\left(\nsw{\com}\log\left(\frac{(\siz{\com}-1)(\gtu-\nL)}{\nsw{\com}}\right)+\nsw{\com}\right)
\end{align}
from which the result follows.\hfill \qed
\subsection{Regret for Switching Multitask Allocation with Long-term Memory}\label{marss}
We are now ready to prove our regret bound. Theorem \ref{allospecth} gives us:
$$\sum_{\gti=1}^{T}\musgt\inprod{\arv{\gti}-\copgti}{\arc{\gti}}\leq\sqrt{2\rel{\cop{}}{\pv{1}}\sum_{\gti=1}^T\musgt}$$
for the reduction in Subsection \ref{Shcsec}. Substituting in the equalities in Lemmas \ref{musize} and \ref{musize2} gives us:
$$\sum_{\gti=1}^{T}\frac{1}{\siz{\com}}(\inprod{\arv{\gti}}{\arc{\gti}}-\lo{\gti}{\sh{\gti}{}})\leq\sqrt{2\rel{\cop{}}{\pv{1}}\sum_{\gti=1}^T\frac{1}{\siz{\com}}}.$$
Rearranging gives:
$$\sum_{\gti=1}^{T}(\inprod{\arv{\gti}}{\arc{\gti}}-\lo{\gti}{\sh{\gti}{}})\leq\sqrt{2\siz{\com}\rel{\cop{}}{\pv{1}}T}\,.$$
Theorem \ref{equivth} then gives us:
$$\sum_{\gti=1}^{T}(\lot{\gti}-\lo{\gti}{\sh{\gti}{}})\leq\sqrt{2\siz{\com}\rel{\cop{}}{\pv{1}}T}\,.$$
Finally, we substitute in Theorem \ref{tuneth} and set $\dis \ge \siz{\com} $ and $\swi := \nsw{\com}$ to obtain that the regret, $\sum_{\gti=1}^{T}(\lot{\gti}-\lo{\gti}{\sh{\gti}{}})$ that is bounded above by:
\begin{equation}\label{eq:reb}
\sqrt{2\gtu\left(\dis\log\left(\frac{\nex}{\dis}\right)+\nL(\log(\dis)+1)+\swi\left(\log(\dis-1)+2\log\left(\frac{\gtu-\nL}{\swi}\right)+2\right)\right)}
\end{equation}
and also, more tightly, bounded above by:
\begin{equation}\label{eq:hreb}
\sqrt{2T}
\sqrt{
\dis\log\left(\frac{\nex}{\dis}\right)+\nL\dis\Bbine{\frac{1}{\dis}}
+(\gtu-\nL)\Bbine{\frac{\swi}{\gtu-\nL}}+(\dis-1)(\gtu-\nL)\Bbine{\frac{\swi}{(\dis-1)(\gtu-\nL)}}.
}
\end{equation}
Note that if we don't know $\siz{\com}$ we can use an upper bound $m$ in the algorithm as, trivially, we can add arbitrary experts to $\ran$.

\subsection{Reducing to Multitask Allocation}\label{redss}
We now reduce the finite hypothesis class setting to the Multitask Allocation model, which will prove Theorem \ref{thm:fin}. First, we let $\nex:=|\hfin|$, define a bijection $\ehb:\na{\nex}\rightarrow\hfin$, and define $\com$ by $\sh{t}{i}=\ehb^{-1}(h_t^i)$.  On trial~$\gti$ the Learner randomly draws $\ei^{\gti}\in\na{\nex}$  with probability $\al{t}{\ei^{\gti}}$ and predicts with $\hat{y}^{\gti}:=[\ehb(\ei^{\gti})](x^{\gti})$. We then define $\blo{\gti}$ by $\lo{\gti}{\ei}:=\lzo(y^{\gti},[\ehb(\ei)](x^{\gti}))$. 
\begin{proposition}We have the following equivalence,
\[ R_T = \sum_{\li\in\na{\nL}}\sum_{t\in\na{T^{\li}}} \E [\lzo(y^{\li}_t,\hat{y}^{\li}_t)] - \lzo(y^i_t,h^{\li}_t(x^{\li}_t))=\sum_{\li\in\na{\nL}}\sum_{t\in\na{T^{\li}}}\tbc{t}{\li}-\lc{t}{\li}{\sh{\li}{t}}\,. \]
\end{proposition}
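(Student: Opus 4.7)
The plan is to prove both equalities by unpacking the reduction definitions and rearranging the summation order via the bijection between global trials $\tau\in[T]$ and task-local indices $(i,t)$ with $i=\liv{\gti}$ and $t=\gltf{\gti}$. Since every $(i,t)$ with $i\in\na{\nL}$ and $t\in\na{T^i}$ corresponds to a unique $\tau$, we may freely swap $\sum_{\tau=1}^T$ with $\sum_{i}\sum_{t}$, so it suffices to verify the two summands trial-by-trial.

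First I would establish the expected-loss identity $\E[\lzo(\ytau,\yhtau)]=\tbc{t}{i}$. By the randomized prediction rule of the reduction, $\hat y^{\tau}=[\ehb(e^{\tau})](\xtau)$ where $e^\tau\sim\bal{\gti}$, hence
\[
\E[\lzo(\ytau,\yhtau)]
=\sum_{e\in[\nex]}\al{\gti}{e}\,\lzo\!\bigl(\ytau,[\ehb(e)](\xtau)\bigr)
=\sum_{e\in[\nex]}\al{\gti}{e}\,\lo{\gti}{e}
=\bal{\gti}\cdot\blo{\gti}=\lot{\gti},
\]
by the definition of $\lo{\gti}{e}$ and of $\lot{\gti}$ in the Multitask Allocation model. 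The convention $\tbc{\gltf{\gti}}{\liv{\gti}}:=\lot{\gti}$ then gives $\E[\lzo(\yl,\yhl)]=\tbc{t}{i}$ whenever $i=\liv{\gti},\,t=\gltf{\gti}$.

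Second I would verify the comparator-loss identity $\lzo(\yl,\hit(\xit))=\lc{t}{i}{\sh{i}{t}}$. By construction of the reduction, $\sh{i}{t}:=\ehb^{-1}(\hit)$, so $\ehb(\sh{i}{t})=\hit$, and therefore
\[
\lzo\!\bigl(\yl,\hit(\xit)\bigr)
=\lzo\!\bigl(\ytau,[\ehb(\sh{\gti}{})](\xtau)\bigr)
=\lo{\gti}{\sh{\gti}{}}=\lc{t}{i}{\sh{i}{t}},
\]
using the definition $\blc{\gltf{\gti}}{\liv{\gti}}=\blo{\gti}$.

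Combining the two pointwise identities under the double sum and re-indexing from $(i,t)$ back to $\tau$ (or vice versa) yields the stated equivalence. There is no real obstacle here: the proposition is a bookkeeping statement that records the reduction is loss-preserving in expectation, both for the learner and for the comparator. The only care needed is to make the index translation $\tau\leftrightarrow(i,t)$ explicit so that sums over $\gti\in[T]$ and sums over pairs $(i,t)$ with $t\in[T^i]$ can be interchanged without double-counting.
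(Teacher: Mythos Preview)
The proposal is correct and follows essentially the same approach as the paper: both verify trial-by-trial that $\E[\lzo(\ytau,\yhtau)]=\lot{\gti}=\tbc{t}{i}$ by expanding the randomized prediction and the definition of $\blo{\gti}$, and that $\lzo(\yl,\hit(\xit))=\lc{t}{i}{\sh{i}{t}}$ via $\sh{i}{t}=\ehb^{-1}(\hit)$, then assemble the double sum. Your explicit mention of the bijection $\tau\leftrightarrow(i,t)$ is slightly more careful than the paper, which leaves that re-indexing implicit.
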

\begin{proof}
We have:
\begin{align*} 
\lot{\gti}&=\bal{\gti}\cdot\blo{\gti}\\
&=\sum_{\ei\in\na{n}}\al{\gti}{\ei}\lo{\gti}{\ei}\\
&=\sum_{\ei\in\na{n}}\mathbb{P}(\ei^{\gti}=\ei)\lo{\gti}{\ei}\\
&=\sum_{\ei\in\na{n}}\mathbb{P}(\ei^{\gti}=\ei)\lzo(y^{\gti},[\ehb(\ei)](x_t))\\
&=\mathbb{E}\left(\lzo(y^{\gti},[\ehb(\ei^{\gti})](x_t))\right)\\
&=\mathbb{E}\left(\lzo(y^{\gti},\hat{y}^{\gti})\right)\\
\end{align*} 
Noting that $\lzo(y^i_t,h^{\li}_t(x^{\li}_t))=\lzo(y^i_t,[\ehb(\sh{\li}{t})](x^i_t))=\lc{t}{\li}{\sh{\li}{t}}$ then gives us the result.
\end{proof}
Theorem~\ref{thm:fin} follows from this proposition and the regret bounds in Equations~\eqref{eq:reb} and~\eqref{eq:hreb}. \hfill $\blacksquare$

\section{Proofs for Section~\ref{sec:ker}}
\label{app:ker}
We prove Theorem~\ref{thm:ker} and give a proof sketch of Proposition~\ref{prop:lb} in this section.  We first provide a brief overview of the proof of Theorem~\ref{thm:ker} and a discussion of Theorem~\ref{thm:itdit}, which is a key result in the proof of the theorem.

\subsection*{Sketch of Theorem~\ref{thm:ker} and Proof of Theorem~\ref{thm:itdit}}
In the proof, we give a reduction of Algorithm~\ref{alg:ker} to~\cite[Alg. 2]{\mcside} (\IMCSI).  Two necessary additional results that we need include Theorem~\ref{thm:itdit} and Corollary~\ref{cor:PTKT}.  In Corollary~\ref{cor:PTKT}, we bound a normalized margin-like quantity of the  multitask-path-tree kernel used in the algorithm.  Then in Theorem~\ref{thm:itdit}, we 
bound the quasi-dimension which indicates how to set the parameters of Algorithm~\ref{alg:ker} as well as determines the value of $C(\gbh)$ in the main theorem.  As this bound of the quasi-dimension is a key element of our proof, we contrast it to a parallel result proved in~\cite[Thm. 3]{\mcside}.

We recall that the regret (see~\cite[Thm. 1]{\mcside}) of \IMCSI\ is $\cOT(\sqrt{({\upD}/{\gamma^{2}}){T}})$ where $1/\gamma^2\ge\maxnormsqr{\bU}$ and $\upD\ge\qD(\bU)$.   We will prove in our setting $1/\gamma^2 = \dis\ge\maxnormsqr{\bU}$ in the discussion following~\eqref{eq:ra}.

We now contrast our bound on the quasi-dimension (Theorem~\ref{thm:itdit}) to the bound of~\cite[Thm.~3]{\mcside}. 

 The quasi-dimension depends on $\gamma$ so that if $\gamma\le\gamma'$, then  $\qD(\bU)\le\qDA(\bU)$.  In~\cite[Thm.~3]{\mcside} the given bound on quasi-dimension is independent of the value of $\gamma$.  Thus to minimize the regret bound of $\cOT(\sqrt{({\upD}/{\gamma^{2}}){T}})$, it is sensible in~\cite{\mcside} to select the smallest  possible $1/\gamma^2 = \maxnormsqr{\bU}$.  The situation in this paper is essentially reversed.  In the following theorem, it is required that 
 $1/{\marh^2}=m\ge\maxnormsqr{\bU}$.  In fact, $m$ is the maximum possible value of the squared max norm in the case that $\dis = |\dis(\gbh)|$ with respect to all possible comparators $\gbh$ (see~\eqref{eq:ra}).
 Thus in contrast to~\cite[Thm.~3]{\mcside},  our result trade-offs a potentially larger value in $1/\gamma^2$ for a smaller possible~$\upD$.
If we were instead to use the bound of~\cite[Thm 3]{\mcside}, then the term in this paper $\sum_{h\in \dis(\gbh)} \norm{h}^2_K X^2_K$ would gain a leading multiplicative factor of $\dis^2$ (terrible!).

We introduce the following notation. We recall the class of $m \times d$ {\em row-normalized} as $\RNM^{m,d} := \{\hat{\bP} \subset \Re^{m \times d} : \norm{\hat{\bP}_{i}} =1, i\in [m]\}$ and 
denote the class of {\em block expansion} matrices as
$\BEM^{m,d} := \{\bR \subset \{0,1\}^{m \times d} : \norm{\bR_i} =1\ \text{for}\ i\in [m],\operatorname{rank}(\bR) = d\}$.  Block expansion matrices may be seen as a generalization of permutation matrices, additionally duplicating rows (columns) by left (right) multiplication.  
\togglefalse{shrink}
The class of $(k,\ell)$-binary-biclustered matrices is defined as
\iftoggle{shrink}{ 
$
\bikl = \{\bU = \bR \Ustar \bC^{\trans} \in \Uset :  \bU^* \in \{-1,1\}^{k\times\ell}\,, \bR\in\BEM^{m,k}\,,\bC\in\BEM^{n,\ell}  \} \,.
$
}{
\begin{equation*}
\bikl = \{\bU = \bR \Ustar \bC^{\trans} \in \Uset :  \bU^* \in \{-1,1\}^{k\times\ell}\,, \bR\in\BEM^{m,k}\,,\bC\in\BEM^{n,\ell}  \} \,.
\end{equation*} 
}
\begin{theorem}\label{thm:itdit}
If  $\U\in\biml$, ${\marh}={1}/\sqrt{\dis}$  and if
\begin{equation}\label{eq:dit}
\DITF(\U) := 
\marh^2 \trc( ({\Ustar})^{\trans} \RN\Ustar)\RRN + \trc( \bC^{\trans} \CN \bC)\RCN
\end{equation}
is defined as the minimum over all decompositions of $\U = \Ustar \bC^{\trans}$ for
 $\Ustar \in \{-1,1\}^{\ld \times \dis}$ and $\bC\in \BEM^{T,\dis}$ then   
\begin{equation*}
\qD(\U)  \le \DITF(\U)\quad\quad ({\marh}={1}/\sqrt{\dis})\,.
\end{equation*}
\end{theorem}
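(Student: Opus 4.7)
The plan is to upper bound $\qD(\U)$ by exhibiting an explicit admissible pair $(\nP,\nQ)$ in the minimization defining the quasi-dimension. Fix a decomposition $\U=\Ustar \bC^\trans$ with $\Ustar\in\{-1,1\}^{\ld\times\dis}$ and $\bC\in\BEM^{T,\dis}$ that achieves the infimum in $\DITF(\U)$. The natural candidate is simply $\nP := \marh \Ustar$ and $\nQ := \bC$, since the row structures of $\Ustar$ and $\bC$ are tailor-made to satisfy the row-normalization constraint once we set $\marh = 1/\sqrt{\dis}$.

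First I would verify that this pair is admissible in Definition~\ref{def:qd}. Each row of $\Ustar$ lives in $\{-1,1\}^{\dis}$, so $\vn{\Ustar_i}=\sqrt{\dis}$, and scaling by $\marh=1/\sqrt{\dis}$ yields $\nP\in\RNM^{\ld,\dis}$. For $\nQ$, membership in $\BEM^{T,\dis}\subset\RNM^{T,\dis}$ is immediate from the definition of block-expansion matrices, whose rows are standard basis vectors. The required factorization condition then holds by construction:
\begin{equation*}
\nP\,\nQ^\trans \;=\; \marh\, \Ustar\, \bC^\trans \;=\; \marh\, \U\,.
\end{equation*}

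Next I would substitute into the objective of~\eqref{eq:defpdim}. Linearity of trace gives
\begin{equation*}
\trc(\nP^\trans \RN \nP)\RRN + \trc(\nQ^\trans \CN \nQ)\RCN
\;=\; \marh^2\,\trc((\Ustar)^\trans \RN \Ustar)\RRN + \trc(\bC^\trans \CN \bC)\RCN\,,
\end{equation*}
which is precisely $\DITF(\U)$ for this particular decomposition. Since $\qD(\U)$ is defined as an infimum over \emph{all} admissible $(\nP,\nQ)$, this value is an upper bound on $\qD(\U)$. Finally, taking the infimum over decompositions $\U=\Ustar\bC^\trans$ of the stated form yields $\qD(\U)\le\DITF(\U)$, as claimed.

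There is no real obstacle here; the content of the theorem is the identification of the correct choice of $(\nP,\nQ)$ and the matching scaling $\marh=1/\sqrt{\dis}$, which together align $\Ustar$'s row norms with the unit-norm requirement while forcing $\nP\nQ^\trans=\marh\U$. The only subtlety worth checking explicitly is that the infimum in $\DITF$ is attained (or at least approached) in the stated class of decompositions so that the upper bound transfers cleanly to $\qD$; this follows from the finite discrete nature of $\{-1,1\}^{\ld\times\dis}\times\BEM^{T,\dis}$ once $\dis$ is fixed, combined with the fact that $\U\in\biml$ guarantees at least one such decomposition exists.
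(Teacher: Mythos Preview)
Your proposal is correct and follows essentially the same approach as the paper: choose $\nP=\marh\Ustar$ and $\nQ=\bC$, verify feasibility in~\eqref{eq:defpdim}, and substitute. You in fact include more detail than the paper does (explicit row-norm checks and the remark on attainment over the finite set of decompositions), but the underlying argument is identical.
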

\begin{proof}
Recall by supposition $\marh = 1/\sqrt{\dis}$.  Set ${\nP'} := \marh \Ustar$ and $\nQ' := \bC$ hence $\nP' \in \RNM^{\ld,\dis}$, $\nQ' \in \RNM^{T,\dis}$ and $\nP' \nQ'^{\trans} = \marh \U$.

Recall~\eqref{eq:defpdim},
\begin{equation}\label{eq:recab}
\qD(\bU) := \min_{\nP \nQ^\trans = {\marh}\U} \trc\left(\nP^{\trans}\RN\nP  \right) \RRN
+
\trc\left(\nQ^{\trans}\CN\nQ \right)\RCN\,.
\end{equation}
Observe that $(\nP',\nQ')$ is in the feasible set of the above optimization.  Hence
\begin{align*}
\qD(\bU) & \le \trc\left(\nP'^{\trans}\RN\nP'  \right)\RRN 
+ \trc\left(\nQ'^{\trans}\CN\nQ' \right)\RCN\\
& = \marh^2 \trc( ({\Ustar})^{\trans} \RN\Ustar)\RRN + \trc( \bC^{\trans} \CN \bC)\RCN\,.
\end{align*}
\end{proof}
\subsection*{Proof of Corollary~\ref{cor:PTKT}}
In this section, we prove Corollary~\ref{cor:PTKT}, which is utilized in the proof of Theorem~\ref{thm:ker}.

We recall the notions of {\em effective resistance} between vertices in a graph and the {\em resistance diameter} of a graph.
A graph may naturally interpreted as an resistive network where each
edge in the graph is viewed as a unit resistor. Thus the {\em effective 
resistance} between two vertices is the potential difference needed to induce a unit current flow between them and the {\em resistance diameter} is the maximum effective resistance between all pairs of vertices.

To prove the corollary, we will need to bound the diagonal element of the Laplacian pseudo-inverse by the resistance diameter. In the following Lemma, we will improve upon~\cite[Eq. (9)]{herbster2006prediction} by a factor of $\onehalf$ for the special case of fully complete trees.  \begin{lemma}
For the graph Laplacian $\bL \in \Re^{N \times N}$ of a fully complete tree graph,
\[\RAD_{\bL}  = \max_{i\in[N]} L^+_{ii} \leq \frac{1}{2} \RAD_{diam}(\bL),\]
where $R_{diam}(\bL)$ is the resistance diameter of the graph described by $\bL$.
\label{lem:Rdim}
\end{lemma}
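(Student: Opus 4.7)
The plan is to work with the standard identity, valid for any tree,
\begin{equation*}
L^+_{ii} \;=\; \frac{M(i)}{N} - \frac{W}{N^2},\qquad M(i) := \sum_{v} d(i,v),\quad W := \sum_{u<v} d(u,v),
\end{equation*}
where effective resistance equals graph distance. Since $W/N^2$ is independent of $i$, maximising $L^+_{ii}$ reduces to maximising $M(i)$. A one-line subtree-vs.-complement computation shows that for any parent--child pair $(u,v)$ in the complete binary tree, $M(v)-M(u)=N-2|T_v^{\mathrm{orig}}|>0$, so $M$ is strictly increasing with depth and the maximum is attained at a leaf $i^*$. Writing $h$ for the depth of the tree (so $\RAD_{\mathrm{diam}}(\bL)=2h$ and $N=2^{h+1}-1$), it suffices to prove $L^+_{i^*i^*}\le h$.

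Next I would re-root the tree at $i^*$ to obtain a rooted tree $T'$ and use the tree identity $d(u,i^*)+d(v,i^*)-d(u,v)=2\,\mathrm{depth}_{T'}(\mathrm{LCA}_{T'}(u,v))$. Summing over all ordered $(u,v)\in V^2$, the left side becomes $2(NM(i^*)-W)$, while the right side becomes $2\sum_{w\neq i^*}|T'_w|^2$ by Fubini (count, for each non-root $w$, the pairs $(u,v)$ with $w$ as a $T'$-ancestor of both). This yields the clean identity
\begin{equation*}
L^+_{i^*i^*} \;=\; \frac{1}{N^2}\sum_{w\neq i^*} |T'_w|^2,
\end{equation*}
reducing the lemma to an enumeration of squared subtree sizes in $T'$.

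Finally I would compute this sum explicitly using the geometry of $T'$. It has a ``spine'' $i^*=v_0,v_1,\dots,v_h$ (the reversed original root-path), and for each $1\le j\le h$ a sibling subtree $S_j$ hangs off $v_j$ which is itself a complete binary tree of depth $j-1$ on $2^j-1$ vertices. Elementary counting yields $|T'_{v_j}|=2^{h+1}-2^j$ for the spine, and a finite geometric sum $g(j)=2^{2j+1}-(2j+1)2^j-1$ for the squared subtree sizes inside $S_j$. Writing $P:=2^{h+1}$ and using standard identities for $\sum j\,2^j$, $\sum 2^j$, $\sum 4^j$, the total collapses to
\begin{equation*}
\sum_{w\neq i^*}|T'_w|^2 \;=\; (h-1)P^2 + (5-2h)P - (h+6),
\end{equation*}
from which $hN^2-\sum_{w\neq i^*}|T'_w|^2=(P-2)(P-3)+2h\ge 0$ for every $h\ge 1$. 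This gives $L^+_{i^*i^*}\le h=\tfrac12 \RAD_{\mathrm{diam}}(\bL)$, as required. The only real obstacle is this last algebraic grind: correctly enumerating subtree sizes along the spine and inside each $S_j$ without arithmetic slips. The conceptual content---that re-rooting at a peripheral vertex turns the $W/N^2$ correction into a clean sum of squared subtree sizes---is what makes the factor-of-$\tfrac12$ improvement over the general-graph bound $L^+_{ii}\le \RAD_{\mathrm{diam}}$ of \cite{herbster2006prediction} possible.
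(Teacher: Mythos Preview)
Your argument is correct, and the re-rooting identity $L^+_{i^*i^*}=\frac{1}{N^2}\sum_{w\neq i^*}|T'_w|^2$ together with the explicit subtree enumeration checks out (I verified the closed forms $g(j)=2^{2j+1}-(2j+1)2^j-1$, the spine sizes $|T'_{v_j}|=P-2^j$, the total $(h-1)P^2+(5-2h)P-(h+6)$, and the final identity $hN^2-\text{total}=(P-2)(P-3)+2h\ge 0$).

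The paper, however, takes a much shorter and more conceptual route that avoids the entire explicit computation. After reaching the same starting point---that the maximum of $L^+_{ii}$ is attained at a leaf $i$---it picks the antipodal leaf $j$ and uses level-symmetry of the complete tree to get $L^+_{ii}=L^+_{jj}$, whence $\tfrac12\RAD_{\mathrm{diam}}=L^+_{ii}-L^+_{ij}$. A short argument (comparing $L^+_{ij}$ for parent--child pairs) shows that $j$ actually minimises $L^+_{ik}$ over all $k$, so $L^+_{ij}=\min_k L^+_{ik}\le \tfrac{1}{N}\sum_k L^+_{ik}=0$ by the row-sum-zero property of $\bL^+$. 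That's it: $L^+_{ii}\le \tfrac12\RAD_{\mathrm{diam}}$ drops out in three lines. What the paper's approach buys is brevity and robustness (no geometric-series grind, and it would adapt trivially to complete $k$-ary trees). What your approach buys is an exact closed form for $L^+_{i^*i^*}$, which is strictly more information than the inequality requires; but for the purpose at hand it is substantially more work.
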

\begin{proof}
Before proving the result, we shall recall 4 general facts about graphs, trees and Laplacians. We also denote the set of vertices at a given depth a {\em level.}  The root is at level 0.
\begin{enumerate}
\item The effective resistance between vertices $i$ and $j$ is given by (see \cite{Klein1993}),
\begin{equation}
\label{eq:Klein_eq}
 \RAD(i,j) =  L^+_{ii} +L^+_{jj} - 2 L^+_{ij}.  \end{equation}
\item The diagonal element of $L^+$ is given by  (see eg. \cite{herbster2009fast})
\begin{equation}
\label{eq:pred_tree}
L^+_{ii} = \frac{\RAD(i)}{N} - \frac{\RAD_{tot}}{N^2},
\end{equation}
where $\RAD(i) = \sum_{j=1}^N \RAD(i,j)$ and $\RAD_{tot} = \sum_{i,j<i} \RAD(i,j)$.
\item For fully complete trees, we have that $\RAD(i)= \RAD(j)$ and $L^+_{ii} = L^+_{jj}$ if $i$ and $j$ are in the same level due to symmetry.
\item For trees, the effective resistance between vertices $i$ and $j$ is given by the geodesic distance (path length) between the two vertices.
\end{enumerate}

Next, we prove the following intermediate result.
\begin{quote}
{\bf Lemma:} For a given vertex $i$, the vertex $j$ that minimizes $L^+_{ij}$ is the leaf vertex with the largest geodesic distance from $i$.
\begin{proof}
Define $h$ to be the height of the tree. 
We take vertex $i'$ to be at level $k \in [h-1]$ and vertex $j'$ at level $k+1$. Recalling that  $\RAD(i) = \sum_{j=1}^N \RAD(i,j)$, we will consider the individual summands that compose $\RAD(j')$ and $\RAD(i')$, given by the geodesic distances between $i'$ and $j'$ respectively and the other vertices due to fact 4. From fact 3 (with respect to the summands), we can assume without loss of generality that vertex $j'$ is the child of $i'$. Going from the summation of $\RAD(j')$ to the summation of $\RAD(i')$, there are 3 possible changes to the geodesic distances in the summation: \begin{enumerate} \item the descendants of $j'$ will have a geodesic distance reduced by 1
\item the geodesic distance between $i'$ and $j'$ remains constant \item all the other vertices will have a geodesic distance increased by 1. \end{enumerate} Hence, defining $\mathcal{D}_{j'}$ to be the set of descendants of node $j'$,
\begin{align*}
\RAD(j') &= \RAD(i') - \sum_{i \in \mathcal{D}_{j'}} 1 + \sum_{i' \in [N] \backslash \mathcal{D}_{j'} \cup \{j'\}} 1\\
 &=  \RAD(i') - |\mathcal{D}_{j'}| + N-(|\mathcal{D}_{j'}| +1)\\
 &= \RAD(i') +N-2|\mathcal{D}_{j'}| -1.
\end{align*}
This gives that $\RAD(j') - \RAD(i')\leq N$, and 
\begin{equation}
\label{eq:diam_next}
\frac{\RAD(j')-\RAD(i')}{N}\leq 1.
\end{equation} 
We show that vertex $j$ that minimizes $L^+_{ij}$ must be a leaf vertex by contradiction. Suppose $j$ is not a leaf vertex then there exists a child of $j\ne i$. Call the child $j'$ which thus  satisfies $\RAD(i,j') - \RAD(i,j)=1$. Hence, Equations~\eqref{eq:Klein_eq} and~\eqref{eq:pred_tree} give
  \begin{align}
    L^+_{ij'} - L^+_{ij}  &= \frac{1}{2} \left( \frac{\RAD(j')}{N} - \frac{\RAD(j)}{N}- \RAD(i,j') + \RAD(i,j) \right)\\
    & \leq 0, 
    \end{align}
where the inequality is due to~\eqref{eq:diam_next} for which we let $i'=j$. Hence, we have that $L^+_{ij} \geq L^+_{ij'}$ which is a contradiction.

Then, using Equations~\eqref{eq:Klein_eq} and~\eqref{eq:pred_tree}, we have
 \[\argmin_j L^+_{ij} = \argmin_j \frac{\RAD(j)}{N} - \RAD(i,j).\]
Since all leaf vertices have the same $\RAD(i)$, the leaf vertex that minimizes must be the one with the largest geodesic distance from $i$.
\end{proof}
\end{quote}

Recall that for a tree, the resistance diameter is equal to its geodesic diameter, and hence the vertices that maximize the effective resistance are given by the two leaf vertices with the largest geodesic distance. We therefore proceed by considering $i$ and $j$ to be any of the vertices that maximize the effective resistance, giving the resistance diameter. Due to fact 3, we have $L^+_{ii} = L^+_{jj}$. Then, from~\eqref{eq:Klein_eq}, we obtain,
\begin{align}
 \frac{1}{2} \RAD_{diam}(\bL) &=L^+_{ii}- L^+_{ij} \notag \\
 &= L^+_{ii}- \min_{k}  L^+_{ik} \label{eq:intermediate_result}\\
 &\geq L^+_{ii} -  \frac{1}{N} \sum_{k=1}^N L^+_{ik}\notag \\
 &\geq L^+_{ii} \label{eq:final_resdis}
 \end{align}
 where~\eqref{eq:intermediate_result} comes from the intermediate lemma, and~\eqref{eq:final_resdis} comes from the fact that  $\sum_{j=1}^N L^+_{ij}=0$ for all $i \in [N]$ since $\bL\bone = \bzero$ for connected graphs.
\end{proof}

The following Lemma is essentially a simplification of the argument in~\cite[Section 6]{herbster2009online} for Laplacians,

\begin{lemma}(See \cite[Section 6]{herbster2009online}.)\label{lem:PTK}
If $f\in\cH_\PTK\cap \{0,1\}^T$ then
\begin{align}
\max_{\tau\in [T]} \PTK(t,t) & \le  2 \lceil \log_2 T \rceil \notag\\
\norm{f}^2_{\PTK} \max_{t\in [T]} \PTK(t,t) & \le  \swi(f) \lceil \log_2 T \rceil^2 +2\label{eq:ppropb},, 
\end{align}
where $k(f) := \sum_{t=1}^{T-1} [f(t) \ne f(t+1)]$.
\end{lemma}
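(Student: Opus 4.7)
The plan hinges on the identity
\[
\bLb^{-1} \;=\; \bL^+ \,+\, \RAD_{\bL}\,\bone\bone^\trans,
\]
which I would verify by direct multiplication using $\bL\bone=\bzero$, $\bL^+\bone=\bzero$, $\bL\bL^+=\bI-\tfrac{1}{N}\bone\bone^\trans$, and $\bone^\trans\bone=N$. For the first bound, reading off the diagonal of this identity gives $\PTK(\tau,\tau)=L^+_{\tau\tau}+\RAD_{\bL}$. The fully complete binary tree on $N=2^{\lceil\log_2 T\rceil+1}-1$ vertices has resistance diameter exactly $2\lceil\log_2 T\rceil$, attained between any two deepest leaves via the root, so Lemma~\ref{lem:Rdim} bounds both $L^+_{\tau\tau}$ and $\RAD_{\bL}$ above by $\lceil\log_2 T\rceil$, yielding $\PTK(\tau,\tau)\le 2\lceil\log_2 T\rceil$.

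For the second bound, I would use the Schur-complement characterization
\[
\norm{f}^2_{\PTK}\;=\;\min_{g\in\Re^N,\;g|_{[T]}=f}\;g^\trans\bLb g,
\]
which holds because $\PTK$ is the $[T]\times[T]$ block of $\bLb^{-1}$. Decomposing $g^\trans\bLb g=g^\trans\bL g+(\bone^\trans g)^2/(N^2\RAD_{\bL})$ separates a cut term from a rank-one correction that is bounded by $1/\RAD_{\bL}$ for any $g\in\{0,1\}^N$. I would exhibit such a $g$ extending $f$ by setting every unused leaf $v_{T+1},\dots,v_{2^{\lceil\log_2 T\rceil}}$ equal to $f_T$ (introducing no additional switches) and then assigning internal nodes by the recursive tree-routing construction from~\cite{herbster2009online}, which controls $g^\trans\bL g$ in terms of $\swi(f)$ and $\lceil\log_2 T\rceil$. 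Combining this with the first bound and multiplying by $\max_\tau\PTK(\tau,\tau)\le 2\RAD_{\bL}$ collapses the factors of $\RAD_{\bL}$ and produces the claimed product inequality $\swi(f)\lceil\log_2 T\rceil^2+2$, where the $+2$ exactly tracks the contribution $2\RAD_{\bL}\cdot(1/\RAD_{\bL})$ of the rank-one correction.

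The main technical obstacle is the sharp cut bound on the binary extension. A direct argument routing each switch $(i,i+1)$ independently along the tree path between $v_i$ and $v_{i+1}$ (of length at most $2\lceil\log_2 T\rceil$) is too crude, since nearby switches share the upper portions of their paths through common lowest-common-ancestor subtrees. The sharp constant is obtained in~\cite{herbster2009online} by a careful recursive labeling of the internal nodes that amortizes shared tree edges across switches, avoiding double counting; invoking that construction closes the proof. This is also where the choice of a balanced tree rather than a path becomes essential: on a path the naive cut would already be $\swi(f)$ but the resistance diameter is linear in $T$, whereas the tree substitutes a $\lceil\log_2 T\rceil$ factor into the cut in exchange for a $\lceil\log_2 T\rceil$ factor in $\RAD_{\bL}$, and the product lands at $\lceil\log_2 T\rceil^2$.
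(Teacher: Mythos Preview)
Your proposal is correct and mirrors the paper's proof: both combine the identity $\bLb^{-1}=\bL^+ + \RAD_{\bL}\bone\bone^\trans$ (the paper uses only its scalar consequences $\RAD_{\bL^\circ}=2\RAD_{\bL}$ and $\bu^\trans\bLb\bu\le\bu^\trans\bL\bu+1/\RAD_{\bL}$) with a tree extension $g$ of $f$ achieving small Laplacian cut, and then multiply through exactly as you describe. One minor correction: the extension the paper actually uses---each internal node inherits the \emph{average} of its children's labels---produces $g\in[0,1]^N$, not $g\in\{0,1\}^N$ as you wrote; your rank-one bound $(\bone^\trans g)^2/(N^2\RAD_{\bL})\le 1/\RAD_{\bL}$ nonetheless holds for all $g\in[0,1]^N$, so nothing breaks. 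Rather than citing, the paper proves the key cut estimate $\min_{\bu:\bu|_{[T]}=f}\bu^\trans\bL\bu\le\tfrac12 k(f)\lceil\log_2 T\rceil$ directly via two observations about the averaging construction: the ``path-cut'' at each level is nonincreasing as one moves up the tree, and the squared cut between any two adjacent levels is at most half the path-cut of the lower level (this half, coming from $2x^2\le x$ for $x\in[0,\tfrac12]$, is precisely the sharpening over naive binary routing that you flag as the main obstacle).
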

\begin{proof}
First we recall the following standard fact about the graph Laplacian $\bL$ of an unweighted graph $\cG=(V,E)$,
\[
\bu^{\trans} \bL \bu = \sum_{(i,j)\in E} (u_i - u_j)^2\,,
\]
 where $V$ is the set of vertices and $E$ is the set of edges in the graph.  Call this quantity the {\em cut} of the labeling $\bu$.  Consider a fully complete binary tree with a depth of $\ceil{\log_2 T}+1$.  For simplicity now assume that there are exactly $T$ leaf nodes, i.e., $\log_2 T\in \N$.  Assume some natural linear ordering\footnote{Given every three vertices in ordering $(a,b,c)$ we have that $d(a,b)\le d(a,c)$ where $d(p,q)$ is the path length between $p$ and $q$.} of the leaves.   This ordering then defines our {\em path}.  We call each set of vertices at a given depth a ``level'' and they inherit a natural linear ordering from their children.   Suppose that there are $n$ vertices at a given level $\ell$, and define $w^\ell_i :=u_{v^{\ell}_i}$, where $v^{\ell}_i$ is the $i$th vertex on level $\ell$. The {\em path-cut} at this level is given by $\sum_{i=1}^{n-1} |w^{\ell}_i - w^{\ell}_{i+1}|$.  
 
We now proceed to argue that for a given binary labeling of a path with associated path-cut $k(f)$, we can identify a (real-numbered) labeling of the tree, such that: a. the labeling of the tree leaves is binary and consistent with that of the path and b. the tree has a cut of no more than $\frac{1}{2} \swi(f) \ceil{\log T}$.
The  construction is as follows: {\em each parent inherits the average of the labels                                                                                                                                                                                                                                                                                                                                                                                                                                                                                                                                                                                                                                                                                                                                                                                                                                                                                                                                                                                                                                                                                                                                                                                                                                                                                                                                                                                                                                                                                                                                                                                                                                                                                                                                                                                                                                                                                                                                                                                                                                                                                                                                                                                                                                                                                                                                                                                                                                                                                                                                                                                                                                                                                                                                                                                                                                                                                                                                                                                                                                                                       of its children.} We make two observations about the constructed labeling: 
\begin{enumerate}
\item The path-cut at a higher level cannot be more than the level below. Consider two adjacent levels with the lower level $\ell$ having $n$ vertices. Denote the set of odd numbers that is a subset of $[n-1 ]$ as $I_{odd}$, and the set of even number that is a subset of $[n-2]$ as $I_{even}$.  Recall that the path-cut of the lower level is $\sum_{i=1}^{n-1} |w^\ell_i- w^{\ell}_{i+1}|$. This can be upper bounded as follows:
\begin{align}
&\sum_{i=1}^{n-1} |w^\ell_i- w^\ell_{i+1}|\notag \\
& = \sum_{i \in I_{odd}}  \left|w^\ell_i- \frac{w^\ell_{i}+w^\ell_{i+1}}{2}\right|  + \left| \frac{w^\ell_{i}+w^\ell_{i+1}}{2} -  w^\ell_{i+1}\right| + \sum_{i \in I_{even}}  |w^\ell_i- w^\ell_{i+1}|\notag   \\
&= \sum_{i \in I_{odd}}  \left|w^\ell_i- \frac{w^\ell_{i}+w^\ell_{i+1}}{2}\right|  + \left| \frac{w^\ell_{i}+w^\ell_{i+1}}{2} -  w^\ell_{i+1}\right| + \sum_{i \in I_{odd}\backslash\{n -1\}}  |w^\ell_{i+1} - w^\ell_{i+2}|\notag  \\
&\geq \sum_{i \in I_{odd}}  \left|w^\ell_i- \frac{w^\ell_{i}+w^\ell_{i+1}}{2}\right|  + \sum_{i \in I_{odd}\backslash\{n -1\}} \left| \frac{w^\ell_{i}+w^\ell_{i+1}}{2} -  w^\ell_{i+1}\right| + |w^\ell_{i+1} - w^\ell_{i+2}|\notag  \\
&\geq \sum_{i\in I_{odd}}  \left|w^\ell_i- \frac{w^\ell_{i}+w^\ell_{i+1}}{2}\right| + \sum_{i \in I_{odd} \backslash\{n-1\}}\left| \frac{w^\ell_{i}+w^\ell_{i+1}}{2} -  w^\ell_{i+2}\right| \label{eq:triangle_ineq1}\\
&  \geq  \sum_{i\in I_{odd}\backslash \{1\}}  \left|u_{i} - \frac{u_{i}+w^\ell_{i+1}}{2}\right|  +\sum_{i \in I_{odd} \backslash\{n-1\}} \left| \frac{w^\ell_{i}+w^\ell_{i+1}}{2} - w^\ell_{i+2}\right| \notag \\
 &=  \sum_{i\in I_{odd}\backslash \{n -1 \}}  \left|w^\ell_{i+2} - \frac{w^\ell_{i+2}+w^\ell_{i+3}}{2}\right|  + \left| \frac{w^\ell_{i}+w^\ell_{i+1}}{2} -  w^\ell_{i+2}\right|\notag \\
& \geq \sum_{i\in I_{odd}\backslash \{n -1 \}} \left| \frac{w^\ell_{i}+w^\ell_{i+1}}{2} - \frac{w^\ell_{i+2}+w^\ell_{i+3}}{2}\right| \label{eq:triangle_ineq2}
\end{align}
 where~\eqref{eq:triangle_ineq1} and~\eqref{eq:triangle_ineq2} follow from $|a-b| + |b-c| \geq |a-c|$ (triangle inequality). Observing that the R.H.S. of~\eqref{eq:triangle_ineq2} is the path cut of the upper level, we are then done.
\item If we denote the set of edges between two adjacent levels by $\tilde{E}$, we have that $\sum_{(i,j)\in \tilde{E}} (u_i - u_j)^2$ is at most half the path-cut of the lower level. This can be seen by considering the edges between a given parent $i$ and its two children $j$ and $j'$. Let us define $x$ as half the path cut due to the children, i.e. $\frac{1}{2} |u_j - u_{j'}|$. Since all labelings are in $[0,1]$, we have that $x\in [0,1/2]$. The cut made due to the parent and the children, i.e. $(u_i - u_{j})^2 + (u_i-u_{j'})^2$ is then given by $2x^2$. Using the inequality $x-2x^2 \geq 0 $ for $x\in [0, 1/2]$, and applying this to all the parents on the same level as vertex $i$, we then prove the statement.
\end{enumerate}
Hence, combining the two above observations and recalling that there are $\ceil{\log_2 T}+1$ levels (and therefore $\ceil{\log_2 T}$ transitions between the levels), we have that  
\[ \sum_{(i,j)\in E}(u_i - u_j)^2 \leq \sum_{\ell=1}^{\ceil{\log_2 T}} \frac{1}{2} p(\ell) \leq \sum_{k=1}^{\ceil{\log_2 T}} \frac{1}{2}k(f),\] where $p(\ell)$ is the path-cut of the tree at level $\ell$, the first inequality is due to observation 2, and the second inequality is due to observation 1.  Hence  we have shown our premise that the cut is upper bounded by $ \frac{1}{2}k(f) \ceil{\log_2 T}$.  Observe that our premise still holds if there are more than $T$ leaf nodes, as we can treat any additional leaves on the bottom level as being labeled with the last label on their level; thus the cut will not increase.  Hence we have  shown the following inequality where $\bL$ is the Laplacian of a fully complete binary tree with $N$ vertices and a path of $T$ leaves labeled by an $f\in\{0,1\}^{[T]}$.
\begin{equation}\label{eq:ia}
\min_{\bu\in\R^{[N]} : u_{i} = f(i), i\in [T]} \bu^{\trans} \bL \bu \le \frac{1}{2} \swi(f) \ceil{\log_2 T}\,.
\end{equation}
We next observe that
\begin{equation}\label{eq:ib}
\RAD_{\bL} \le \ceil{\log_2 T}\,.
\end{equation}
This follows from Lemma~\ref{lem:Rdim}, where $\RAD_{\bL} = \max_{i\in [N]} L^+_{ii}$ is bounded by half the resistance diameter, which is then just bounded by half the geodesic diameter. 
Furthermore if $\bL$ is the Laplacian of a connected graph and
$\bLb := \la+\left(\frac{\one}{\m}\right)\left(\frac{\one}{\m}\right)^\trans\RAD_{\bL}^{-1}$ then if $\bu\in [-1,1]^m$ we have
\begin{align*}
\RAD_{\bL^{\circ}} & = 2 \RAD_{\bL}\,, \\
\bu^{\trans} \bLb \bu & \le \bu^{\trans} \bL \bu + \frac{1}{ \RAD_{\bL}}\,.
\end{align*}
Thus combining the above with~\eqref{eq:ia} and~\eqref{eq:ib}, we have, 
\begin{align*}
\RAD_{\bL^{\circ}} & \le 2\ceil{\log_2 T}\,, \\
\min_{\bu\in\R^{[N]} : u_{i} = f(i), i\in [T]} (\bu^{\trans} \bLb \bu)  \RAD_{\bL^{\circ}}& \le   \swi(f) \ceil{\log_2 T}^2+  2\,,
\end{align*}
which proves the Lemma.
\end{proof}
Observe that the left hand side in~\eqref{eq:ppropb} is up to constant factors, the normalized margin of $f$ in the sense of Novikoff's Theorem~\cite{\novikoff}.  The construction is somewhat counterintuitive as one may expect that one can use a path graph directly in the construction of the kernel. However, then $\max_{t\in [T]} \PTK(t,t) \in \Theta(T)$ which would lead to a vacuous regret bound.   Also one may wonder if one can reduce the term $(\log T)^2$ while maintaining a linear factor in $\swi(f)$.  In fact the term $(\log T)^2$ is known~\cite[Theorem 6.1]{Forster2001} to be required when $\swi(f) =1$.

As a straightforward corollary to Lemma~\ref{lem:PTK}, we have 
\begin{corollary}\label{cor:PTKT}
If $f\in\cH_{\PTKT}\cap \{0,1\}^T$ then
\begin{equation}\label{eq:pprop}
\norm{f}^2_{\PTKT} \max_{\tau\in [T]} \PTKT(\tau,\tau) \le  (\swi(f)+\nL(f))  \ceil{\log_2 T}^2 +2\,,
\end{equation}
where $\PTKT = \PTKTT$, $\swi(f) := \sum_{i=1}^{\nL} \sum_{t=1}^{T^i-1} [f(\tmit) \neq f(\tmits)]$ and $\nL(f) := \sum_{i=1}^{s-1}  [f({{\smash{{}^i_{T^i}}}})\neq f({{\smash{{}^{i+1}_{1}}}})]$.
\end{corollary}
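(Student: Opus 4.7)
The plan is to reduce Corollary~\ref{cor:PTKT} to Lemma~\ref{lem:PTK} via the canonical bijection between global trial indices and path positions induced by the task vector $\tv$ and the task lengths $T^1,\ldots,T^\nL$.

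First, I would define the bijection $\phi:[T]\to[T]$ by $\phi(\tau):=\sum_{i=1}^{\ltau-1}T^i+\gltF(\tau)$, which is exactly the map used in the construction of $\PTKT$. By the definition
\[
\PTKT^{\tv,T^1,\ldots,T^{\nL}}(\tau,\upsilon)=\PTK(\phi(\tau),\phi(\upsilon)),
\]
the kernel $\PTKT$ is just $\PTK$ re-indexed by $\phi$. Consequently, if we define $g:[T]\to\{0,1\}$ by $g(\phi(\tau)):=f(\tau)$, then $g\in\cH_\PTK\cap\{0,1\}^T$ with
\[
\norm{g}_{\PTK}^2=\norm{f}_{\PTKT}^2\qquad\text{and}\qquad\max_{t\in[T]}\PTK(t,t)=\max_{\tau\in[T]}\PTKT(\tau,\tau).
\]

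Next, I would count the path switches of $g$. Because $\phi$ sends the local trials of task $i$ to the contiguous block of integers $\{\sum_{j<i}T^j+1,\ldots,\sum_{j\le i}T^j\}$ in order, the adjacencies of the underlying path decompose into two disjoint groups: the within-task adjacencies, corresponding to pairs $(\tmit,\tmits)$ for $i\in[\nL]$ and $t\in[T^i-1]$, and the $\nL-1$ inter-task boundary adjacencies, corresponding to consecutive pairs (last trial of task $i$, first trial of task $i{+}1$). Counting label changes separately on the two groups yields exactly
\[
k(g)\;=\;\sum_{i=1}^{\nL}\sum_{t=1}^{T^i-1}[f(\tmit)\neq f(\tmits)]\;+\;\sum_{i=1}^{\nL-1}[f({{\smash{{}^i_{T^i}}}})\neq f({{\smash{{}^{i+1}_{1}}}})]\;=\;\swi(f)+\nL(f).
\]

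Finally, applying Lemma~\ref{lem:PTK} to $g$ gives
\[
\norm{f}^2_{\PTKT}\max_{\tau\in[T]}\PTKT(\tau,\tau)=\norm{g}^2_{\PTK}\max_{t\in[T]}\PTK(t,t)\le k(g)\lceil\log_2 T\rceil^2+2=(\swi(f)+\nL(f))\lceil\log_2 T\rceil^2+2,
\]
which is the desired bound. There is no real obstacle here: the corollary is essentially a bookkeeping exercise in which the multitask-path-tree kernel is recognized as a single path-tree kernel after the reindexing $\phi$, and the ``extra'' $\nL(f)$ term simply captures the switches at the $\nL-1$ artificial boundaries where the single path is cut into task segments.
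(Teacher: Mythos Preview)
Your proof is correct and takes essentially the same approach as the paper: you recognize that $\PTKT$ is just $\PTK$ re-indexed by the bijection $\phi$, so that the path switches of the re-indexed function $g$ decompose into the within-task switches $\swi(f)$ plus the inter-task boundary switches $\nL(f)$, and then Lemma~\ref{lem:PTK} applies directly. The paper's proof is the one-line version of exactly this argument (``each task is laid out contiguously along the bottom layer, we pay the path-cut for each task individually and we pay $\nL(f)$ for the intertask boundaries''), and your write-up simply makes the reindexing and the switch-counting explicit.
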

\begin{proof}
Since each task is laid out contiguously along the bottom layer, we pay the path-cut for each task individually and we pay $\nL(f)$ for the intertask boundaries. 
\end{proof}
\subsection*{Proof of Theorem~\ref{thm:ker}}
We first recall some of the notation introduced earlier in the section. The {\em block expansion} matrices are defined as
$\BEM^{m,d} := \{\bR \subset \{0,1\}^{m \times d} : \norm{\bR_i} =1\ \text{for}\ i\in [m],\operatorname{rank}(\bR) = d\}$.  
The class of $(k,\ell)$-binary-biclustered matrices is defined as
$
\bikl = \{\bU = \bR \Ustar \bC^{\trans} \in \Uset :  \bU^* \in \{-1,1\}^{k\times\ell}\,, \bR\in\BEM^{m,k}\,,\bC\in\BEM^{n,\ell}  \} \,.
$
Next, we recall Theorem~\ref{thm:ker} and provide a proof.

{\bf Theorem~\ref{thm:ker}.}
{\em 
The expected regret of Algorithm~\ref{alg:ker} with upper estimates, $\swi \ge \swi(\gbh)$, $\dis \ge |\dis(\gbh)|$,
\[
\CIH\ge \CI(\gbh) := \left(\sum_{h\in \dis(\gbh)} \norm{h}^2_K X^2_K + 2(\nL+{\swi}-1)\dis \lceil\log_2 T\rceil^2+2m^2\right)\,,
\]
$\KRH\ge\max_{\gti\in[T]} K(\xtau,\xtau)$, and learning rate $\lr = \sqrt{\frac{\CIH \log(2T) }{2T\dis}}$ is bounded  by
\begin{equation}
\sum_{i=1}^\nL \sum_{t=1}^{T^i} \Exp[\lzo(\yl,\yhl)] - \lzo(\yl,h^i_t(x^i_t)) \le 4\sqrt{2\CIH\, T\log(2T)}
\end{equation}
with received instance sequence $\bxx \in\cX^T$ for any $\gbh \in{\hKbin}^T$.
}
\begin{proof}
Algorithm~\ref{alg:ker} is the same as ~\cite[Algorithm 2]{\mcside} (which we call \IMCSI) except for some redefinitions in the notation.  For convenience we recall \IMCSI\ below\footnote{Since we are only concerned with regret bound we have set the parameter $\mbox{\bf NON-CONSERVATIVE} =1$ in our restating of the algorithm.}

\begin{quote}
\begin{algorithmic} 
\renewcommand{\algorithmicrequire}{\textbf{Algorithm 2 (\cite{\mcside}).}}
\REQUIRE
\renewcommand{\algorithmicrequire}{\textbf{Parameters:}} 
\REQUIRE Learning rate: $0<\lr$ \, quasi-dimension estimate: $1\le \scp$, margin estimate: $0 <\marh\leq 1$ and side-information kernels $\RNfunc:\mathcal{I}\times\mathcal{I}\rightarrow\Re$, $\CNfunc:\mathcal{J}\times\mathcal{J}\rightarrow\Re$,  
with $\IRRN := \max_{i\in\mathcal{I}} \RNfunc(i,i)$ and $\IRCN := \max_{j\in\mathcal{J}} \CNfunc(j,j)$,
and maximum distinct rows $m$ and columns $n$, where $m+n \geq 3$.
\\  \vspace{.1truecm}
\renewcommand{\algorithmicrequire}{\textbf{Initialization:}}   \vspace{.1truecm} \vspace{.1truecm}
\REQUIRE $\nmset \leftarrow \emptyset\,,\uset \leftarrow \emptyset\,, \cI^1 \leftarrow \emptyset\,, \,\cJ^1 \leftarrow \emptyset\, \,.$   \vspace{.1truecm}
\renewcommand{\algorithmicrequire}{\textbf{For}}
\REQUIRE $t =1,\dots,T$  \vspace{.1truecm}
\STATE $\bullet$ Receive pair $(i_t,j_t) \in \mathcal{I} \times \mathcal{J}.$ \\ \vspace{.1truecm}
\STATE $\bullet$ Define 
\begin{align*}
&\quad(\RN^t)^+   := (\RNfunc(i_r,i_s))_{r,s\in \cI^t\cup \{i_t\}}\,; \quad (\CN^t)^+  := (\CNfunc(j_r,j_s))_{r,s\in \cJ^t\cup \{j_t\}}\,,\\
& \quad \XT(s) := \con{\frac{(\sqrt{(\RN^t)^+})\be^{i_s}}{\sqrt{2\IRRN}}}{\frac{(\sqrt{(\CN^t)^+})\be^{j_s}}{\sqrt{2\IRCN}}} \con{\frac{(\sqrt{(\RN^t)^+})\be^{i_s}}{\sqrt{2\IRRN}}}{\frac{(\sqrt{(\CN^t)^+})\be^{j_s}}{\sqrt{2\IRCN}}}^{\trans}\,, \\
&\quad \wem{t} \leftarrow \exp \left(\log\left(\frac{\scp}{m+n}\right) \id^{|\cI^t|+|\cJ^t| +2} + \sum_{s\in\uset}  \lr\y{s}  \XT(s)\right)\,.\vspace{-.17in}
\end{align*}

\STATE $\bullet$  Predict 
\begin{equation*} \Yrv \sim \mbox{\sc Uniform}(-\marh,\marh) \,;\ybt \leftarrow \tr{\wem{t}\XT}-1 \,;\quad\yht \leftarrow \sign(\ybt-\Yrv)\,.\vspace{-.17in}
\end{equation*}
\STATE $\bullet$ Receive label $\y{t} \in \{-1,1\}$\,.\vspace{.1truecm}
\STATE $\bullet$ If $y_t\ybt \leq  \marh$ then 
\begin{equation*}
\uset \leftarrow \uset \cup \{t\}\,,\ \ \cI^{t+1} \leftarrow \cI^{t} \cup \{i_t\}, \text{ and } \cJ^{t+1} \leftarrow \cJ^{t} \cup \{j_t\}\,.\vspace{-.17in}
\end{equation*}
\STATE $\bullet$  Else $\cI^{t+1} \leftarrow \cI^{t}$ and $\cJ^{t+1} \leftarrow \cJ^{t}$\,.
\end{algorithmic}
\end{quote}
The following table summarizes the notational changes between the two algorithms.
\begin{center}
\begin{tabular}{l|c|c}
Description  & \IMCSI\  & Algorithm 2 \\ \hline
Row space & $\cI$  & $\cX$ \\
Column space & $\cJ$  & $[T]$  \\
Row kernel & $\RNfunc$  & $K$ \\
Column kernel & $\CNfunc$  & $\PTK := \PTKT^{\tv,T^1,\ldots,T^{\nL}}$  \\
Row squared radius & $\IRRN$  & $\KRH$ \\
Column squared  radius& $\IRCN$  & $\KPH$ \\
Margin estimate&  $\gamma^{-2}$ & $\dis$ \\
Complexity Estimate & $\scp \gamma^{-2}$ & $\CIH$ \\
Dimensions\footnote{Note $T$ is an upper bound known in advance for numbers rows.  We will use $\ld$ to denote the number of distinct $x$ values seen over the algorithm.} & $m,n$ & $T,T$ \\
Time & $t$ & $\tau$ \\
Instance&  $(i_t,j_t)$  & $(\xtau,\tau)$
\end{tabular}
\end{center}
We now recall the following regret bound for \IMCSI.
\begin{quote}
{\bf Theorem 1} (~\cite[Theorem 1/Proposition 4]{\mcside})
{\em The expected regret of~\cite[Algorithm~2]{\mcside}  with  parameters $\mar \in (0,1]$, $\upD \geq \qD(\bU)$ ,
$\lr = \sqrt{\frac{\upD \log(m+n) }{2 T}}$, p.d. matrices $\RN\in \SPDM^m$ and $\CN\in \SPDM^n$  is bounded  by
\begin{equation}\label{eq:baseregret}
\sum_{t\in [T]} \Exp[\lzo(y_t,\yht)] - \lzo(y_t,U_{i_t j_t}) \le  4 \sqrt{2 \frac{\upD}{\mar^2}\log(m+n) T}
\end{equation}
for all $\bU\in \sm$ with $\maxnorm{\bU} \le 1/\mar$.
}
\end{quote}
We introduce the following notation: the matrix $\bH := (h^{\tau}(x): {x\in\cXfin,\tau\in [T])}$, the set $\cXfin := \cup_{\tau\in [T]} \{\xtau\}$, the matrices $\RKD = [(K(x,x') : {x,x'\in\cXfin})]^{-1}$, and $\CKD = [(\PTKT(\tau,\upsilon) :{\tau,\upsilon\in [T]})]^{-1}$.

Initially we note that we very minorly extend the algorithm and thus its analysis~\cite[Theorem 1]{\mcside} in so far as  we use the upper bounds and $\KRH\ge \RAD_{\RKD}$ and $\KPH\ge \RAD_{\CKD}$.

It now remains that in order to complete the reduction of Theorem~\ref{thm:ker} to~\cite[Theorem 1]{\mcside} we need to demonstrate the following two inequalities,
\begin{align}
\maxnorm{\bH} & \le \sqrt{\dis} \label{eq:ra} \\
\qDH(\bH) & \le \frac{1}{\dis} \CI(\gbh)\label{eq:rb}\,. 
\end{align}

First we show~\eqref{eq:ra}.  
Initially we derive the following simple inequality,
\begin{equation}\label{eq:maxdb} 
\maxnorm{\U}\le \min(\sqrt{m},\sqrt{n})\,, 
\end{equation}
which follows since we may decompose $\U = \U \bI^n$ or as $\U = \bI^m \U$.  
Let $\ld=|\cXfin|$.  Recall by definition $\dis \ge |\dis(\gbh)|$ and thus there are only at most $m$ distinct columns which implies $\bH = \bI^p  \Hstar  \bC^{\trans}$ where $\Hstar \in \{-1,1\}^{\ld \times \dis}$ and $\bC\in \BEM^{T,\dis}$ hence $\bH\in\biml$.  We now show, 
\begin{equation}\label{eq:iia}
\maxnorm{\Hstar} \ge \maxnorm{\bH}\,.
\end{equation}
For every factorization $\Hstar = \bP^* {\bQ^*}^\trans$ there exists a factorization $\bH = \bP^* ({\bQ^*}^\trans\bC^{\trans})$ for some $\bC\in \BEM^{T,\dis}$ such that 
\[
\max_{1\leq i\leq \ld} \vn{\bP^*_{i}} ~\max_{1\leq j\leq \dis}\vn{\bQ^*_{j}} =  
\max_{1\leq i'\leq \ld} \vn{\bP^*_{i'}} ~\max_{1\leq j'\leq T}\vn{(\bC\bQ^*)_{j'}}
\,,
\]
since for every row vector  $\bQ^*_{j}$ there exists a row vector $(\bC\bQ^*)_{j'}$ such that $\bQ^*_{j}=(\bC\bQ^*)_{j'}$ and vice versa.  Therefore since the max norm (see~\eqref{eq:maxnorm}) is the minimum over all factorizations we have shown~\eqref{eq:iia}.  Since $\Hstar\in \{-1,1\}^{\ld \times \dis}$ we have $\maxnorm{\bH} \le \maxnorm{\Hstar}\le \min(\sqrt{\ld},\sqrt{\dis})$ by~\eqref{eq:maxdb} and thus we have demonstrated~\eqref{eq:ra}.

We now show~\eqref{eq:rb}.  
We  recall the following useful equality,
\begin{equation}\label{eq:qd}
\bu^{\trans} \bK^{-1} \bu = \argmin_{f\in\cH_K : f(x) = u_x : x\in X} \norm{f}^2_K\,.
\end{equation}
where $\bK = (K(x,x'))_{x,x'\in X}$, $\bu \in \Re^X$ and $\bK$ is invertible and $K$ is a kernel.
By Theorem~\ref{thm:itdit} we have
\[
\qDH(\bH) \le \frac{1}{\dis}  \trc( ({\Hstar})^{\trans} \RKD \Hstar)\KRH + \trc( \bC^{\trans} \CKD  \bC)\KPH
\]
where $\bH = \Hstar  \bC^{\trans}$ with $\Hstar:=(h(x))_{x\in \cXfin,h\in\dis(\gbh)}$ and $\bC:=([{h}^\tau=h])_{\tau\in [T],h\in\dis(\gbh)}$ (note $\bC\in \BEM^{T,\dis}$).  

Simplifying and using~\eqref{eq:qd} we have,
\begin{equation}
\qDH(\bH) \le \frac{1}{\dis}  \sum_{h\in \dis(\gbh)} \norm{h}^2_K  \KRH + \trc( \bC^{\trans} \CKD  \bC)\KPH\,.
\label{eq:before_twice}
\end{equation}
From~\eqref{eq:qd} we have,
\begin{equation}\label{eq:iaa}
\trc( \bC^{\trans} \CKD\  \bC)
= \sum_{h\in\dis(\gbh)} \bc_h^\trans \CKD \bc_h = \sum_{h\in\dis(\gbh)} \norm{f_h}^2_{\PTKT} 
\end{equation}
where  $\bc_h$ is the column vector formed by taking the $h^{th}$ column of C. The vector $\bc_h \in \{0,1\}^T$ indicates if hypothesis $h$ is ``active'' on trial $\tau$, i.e., $c^\tau_h = [\htau = h]$.
Next we define $f_h(\tau):= c^\tau_h$ for $\tau=1,\cdots,T$. Recalling $\tau \equiv {{\smash{{}^{\ell^\tau}_{\sigma(\tau)}}}}$, we also have $f_h(\tau)\equiv f_h \left({{\smash{{}^{\ell^\tau}_{\sigma(\tau)}}}}\right)$.

From~\eqref{eq:iaa} and Corollary~\ref{cor:PTKT} we have,
\begin{align}
\trc( \bC^{\trans} \CKD  \bC) \KPH
& = \sum_{h\in \dis(\gbh)} \norm{f_h}_{\PTKT} \KPH \\
 &\le\sum_{h\in \dis(\gbh)} \left(\swi(f_h)+s(f_h)) \ceil{\log_2 T}^2 +2\right)  \notag \\
&\le\sum_{h\in \dis(\gbh)} ({k}(f_h)+{s}(f_h)) \ceil{\log_2 T}^2 +2 \dis(\gbh) \notag\\
&\leq\sum_{h\in \dis(\gbh)} ({k}(f_h)+{s}(f_h)) \ceil{\log_2 T}^2 +2m \notag \\
&\le 2(s+k-1) \ceil{\log_2T}^2 + 2m \label{eq:twice}
\end{align}
where 
\[ {k}(f)= \sum_{i=1}^s \sum_{t=1}^{T^i-1} [f({{\smash{{}^i_t}}}) \neq f({{\smash{{}^i_{t+1}}}})]\, , \,
 {s}(f)=\sum_{i=1}^{s-1}  [f({{\smash{{}^i_{T^i}}}})\neq f({{\smash{{}^{i+1}_{1}}}})]\,,\]
and where \eqref{eq:twice} comes from using $\sum_{h\in\dis(\gbh)} {k}(f_h) = k(\gbh) \le 2 k$ and  
 $\sum_{h\in\dis(\gbh)} {s}(h)  \le 2(s-1)$, where the factors of two are due to each switch of $f_h$ on successive time steps as well as intertask boundaries being counted twice.

Substituting \eqref{eq:twice} into \eqref{eq:before_twice}, we have
\[
\qDH(\bH) \le \frac{1}{\dis}  \sum_{h\in \dis(\gbh)} \norm{h}^2_K  \KRH  + 2 (\nL+{\swi}-1) \ceil{\log_2 T}^2 + 2 m\,, 
\]  This demonstrates~\eqref{eq:rb} thus completing the reduction. 
\end{proof}
\subsection*{Proof Sketch of Proposition~\ref{prop:lb}}
First we recall and then give a proof sketch of Proposition~\ref{prop:lb}.

{\bf Proposition~\ref{prop:lb}.}\ 
{\em  For any (randomized) algorithm and any $\nL,\swi,\dis,\Gamma\in \N$, with $\swi +\nL \ge \dis> 1$ and $\Gamma \ge \dis \log_2 \dis$, there exists a kernel $K$ and a $T_0\in\mathbb{N}$ such that for every $T\ge T_0$:
\[
\sum_{\tau=1}^{T} \Exp[\lzo(\ytau,\yhtau)] -\lzo(\ytau,\htau(\xtau)) \in  \Omega\left(\sqrt{ \left(\Gamma +\nL \log\dis + \swi \log \dis \right)T}\right)\,,
\]
for some multitask sequence $(x^1,y^1),\ldots,(x^T,y^T)\in(\cX\times\{-1,1\})^T$ and some $\gbh\in[{\hKbin}]^T$ such that $\dis \ge |\dis(\gbh)|$, $\swi \ge \swi(\gbh)$, $\sum_{h\in \dis(\gbh)} \norm{h}^2_K X^2_K\ge |\dis(\gbh)| \log_2 \dis$, where $X^2_K=\max_{\tau\in [T]} K(\xtau,\xtau)$.}

{\em Proof Sketch.}
We recall the following online learning terminology.   A sequence of examples $(x_1,y_1),\ldots,(x_T,y_T)$ is {\em realizable} with respect to a hypothesis class $\cH$ if there exists an $h\in\cH$, such that $\sum_{t=1}^T \lzo(y_t,h(x_t)) =0$.
The {\em optimal mistake bound} ($\text{Ldim}(\cH)$) with respect to a hypothesis class~$\cH$ also known as the {\em Littlestone dimension}~\cite{\littleopt,\msktoregret} is, informally speaking, the minimum over all deterministic learning algorithms, of the maximum over all realizable example sequences  of the number of mistaken predictions.

We will apply the following useful result~\cite[Lemma 14]{\msktoregret} which we quote below for convenience,
\begin{quote}
{\bf Lemma 14} (Lower Bound). {\em Let $\cH$ be any hypothesis class with a finite $\text{Ldim}(\cH)$. For any (possibly randomized) algorithm, exists a sequence $(x_1, y_1), \ldots , (x_T , y_T )$ such that }
\[ 
\Exp \sum_{t=1}^T \lzo(y_t,\hat{y}_t) - \min_{h\in\cH} \lzo(y_t,h(x_t)) \ge \sqrt{\frac{\text{Ldim}(\cH) T}{8}}\,.
\]
\end{quote}
In essence, this allows one to go from a lower bound on mistakes in the {\em realizable} case  to a lower bound in the non-realizable case.  However, Lemma~14 only applies directly to the standard  single-task model. To circumvent this, we recall as discussed in Section~\ref{sec:formalmodel}, that the switching multitask model may be reduced to the single-task model with a domain $\cX' = \cX\times [T]\times [s]$ and hypothesis class $\cH'$.  Therefore a lower bound in the switching multitask model with respect to $\cH$ implies a lower bound in the single-task non-switching case for $\cH'$ via the reduction.  
There are some slight technical issues over the fact that ``time'' is now part of the domain $\cX'$ and thus e.g., valid example sequences cannot be permuted.  We gloss over these issues in this proof sketch noting that they do not in fact impact our arguments.   The argument proceeds by demonstrating that there exists for any $\nL,\swi,\dis,\Gamma\in \N$   a kernel~$K$ and a realizable multitask sequence $(x^1,y^1),\ldots,(x^T,y^T)$ for which
\begin{equation}\label{eq:lbm}
\sum_{\tau=1}^{T} \lzo(\ytau,\yhtau) \in  \Omega\left(\Gamma +\nL \log\dis + \swi \log \dis \right)\,,
\end{equation}
where $\bxx \in\cX^T$, $X^2_K=\max_{\tau\in [T]} K(\xtau,\xtau)$, $\Gamma \ge \sum_{h\in \dis(\gbh)} \norm{h}^2_K X^2_K\ge \dis \log_2 \dis$, $\swi \ge \swi(\gbh)$, $\dis \ge |\dis(\gbh)|$ and $\swi+\nL  \ge \dis>1$.  After demonstrating that there exists such an example sequence we can apply~\cite[Lemma 14]{\msktoregret} to demonstrate the proposition.  Since the lower bound is in the form $\Omega(P+Q+R)$ which is equivalent to $\Omega(\max(P,Q,R))$, we may treat $P$, $Q$ and $R$, independently to prove the bound. Before we treat the individual cases, we give a straightforward result for a simplistic hypothesis class.

Define $\cX_d := [d]$ and $\cH_d := \{-1,1\}^d$ (i.e., the set of functions that map $[d]\into\bset$).  Observe that $\text{Ldim}(\cH_d) =d$, as an algorithm can force a mistake for every component and then no more.  Also, observe that if we define a kernel $K_d(x,x') := 2[x=x']-1$ over the domain $\cX_d$ that $\cH_d = \cH^{([d])}_{K_d}$,  $\max_{x\in[d]} K_d(x,x)=1$ and that $\norm{h}^2_{K_d} =d$ for all $h\in\cH_d$.  Finally, note that if $m=|\cH_d|$ then $\sum_{h\in\cH_d} \norm{h}^2_{K_d} X^2_{K_d} = m \log_2 m $.

We proceed by sketching an adversary for each of the three cases.

Case $\Gamma$ is the max.

To force $\Gamma$ mistakes, we choose $K=K_d$ and set $d=\Gamma/\dis$  and without loss of generality assume that~$d$ is an integer and recall that $\swi +\nL \ge \dis$.   Since $\ldim(\cH_d)=d$, an adversary may force $d$ mistakes within a single task in the first $d$ trials.  This strategy may repeated $\swi$ more times within a single task thus forcing $(\swi+1)d$ mistakes.  If $\swi+1\ge \dis$, we are done.  Otherwise, the constraint $\swi+\nL\ge \dis$ implies that we may force $d$ mistakes per task in $\dis-(\swi+1)$ other tasks.
Thus after $md$ trials, $md=\Gamma$ mistakes have been forced while maintaining the condition $m\ge |\dis(\gbh)|$.

Case $\swi \log_2\dis$ is the max.

Set $d=\log_2\dis$  and without loss of generality assume $d$ is positive integer.  Using $\cH_d$ we force $k d$ mistakes by first forcing $d$ mistakes within a single task then ``switching'' $\swi-1 $ times forcing $k d = k \log_2 m$ mistakes, while maintaining the conditions $m\ge |\dis(\gbh)|$ and $k\ge \swi(\gbh)$.

Case $\nL \log_2\dis$ is the max.   Same instance as the above case, except we force $d$ mistakes per task.\hfill $\qed$

\section{Proofs and Details for Section~\ref{sec:formalmodel}}
For the reader's convenience, we collect some standard well-known online learning results or minor extensions thereof in this appendix.
\subsection{Proof the \MW\ Bound}
The algorithm and analysis corresponds essentially to the classic weighted majority algorithm introduced in~\cite{\wmref}.
In the following, we will denote $|\hfin|$ as $n$. We introduce the MW algorithm and give the corresponding regret. \begin{algorithm}\caption{MW Algorithm}
\begin{algorithmic}
\renewcommand{\algorithmicrequire}{\textbf{Parameters:}} 
\REQUIRE Learning rate $\lr$; finite hypothesis set $\{h^1,\ldots,h^n\}=\hfin \subset \{-1,1\}^\cX$ \vspace{.1truecm}
\renewcommand{\algorithmicrequire}{\textbf{Initialization:}}
\REQUIRE Initialize $\bmv_1 = \frac{1}{n}\bm{1}^n$  \vspace{.1truecm}
\renewcommand{\algorithmicrequire}{\textbf{For}}
\REQUIRE $t =1,\dots,T$  \vspace{.1truecm}
\STATE $\bullet$ Receive instance $\bxx_t \in \mathcal{X}$.
\STATE $\bullet$ Set $\bh_t = (h^1(\bxx_t) \ldots h^n(\bxx_t)) \in \{-1,1\}^n$.
\STATE  $\bullet$ Predict 
\begin{equation*} i_t \sim v_t \,;\  
  \yht\! \leftarrow\!  h^{i_t}_t \,.\vspace{-.2in}
\end{equation*}
\STATE $\bullet$ Receive label $y_{t} \in \{-1,1\}$\,.\vspace{.1truecm}
\STATE $\bullet$ Update 
\begin{align*}
 &\lv_t  \leftarrow \onehalf | \bh_t - \yht \bm{1}| \\
 &\bw_{t+1}  \leftarrow  \bw_{t} \odot \exp\left(-\lr \lv_t \right) \\
 &\bmv_{t+1}  \leftarrow  \frac{\bw_{t+1}}{\sum^n_{i=1} w_{t+1,i }} 
\end{align*}
\end{algorithmic}
\label{alg:mw}
\end{algorithm}

\begin{theorem}
For Algorithm~\ref{alg:mw}, setting $\lr= \sqrt{(2 {\log n})/{T}}$ 
\begin{equation}
\sum_{t=1}^T \E [\lzo(y_t,\yht)] - \lzo(y_t,h(x_t)) \leq \sqrt{2 \log(n) T} 
\end{equation}
for any $h\in\hfin$.
\end{theorem}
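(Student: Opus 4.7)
The plan is to run the standard Hedge potential-function argument, using the weight sum $\Phi_t := \sum_{i=1}^n w_{t,i}$ as potential. Since the initial weights are $w_{1,i}=1/n$, we have $\Phi_1 = 1$, and the expected $0$-$1$ loss on trial $t$ can be rewritten as $\E[\lzo(y_t,\yht)] = \sum_i v_{t,i}\,\ell_{t,i}$, where $\ell_{t,i} = \onehalf|h^i(x_t)-y_t|\in\{0,1\}$ is the $i$th coordinate of $\lv_t$.

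First I would derive an upper bound on $\log\Phi_{T+1}$ by tracking its one-step evolution. Writing $\Phi_{t+1} = \Phi_t \sum_i v_{t,i}\exp(-\lr\ell_{t,i})$ and applying the elementary inequality $\exp(-\lr x) \le 1 - \lr x + \tfrac{\lr^2}{2}x^2$ valid for $x\in[0,1]$, together with $\ell_{t,i}^2\le 1$, yields
\begin{equation*}
\log\Phi_{t+1} - \log\Phi_t \;\le\; \log\!\Bigl(1 - \lr\,\E[\lzo(y_t,\yht)] + \tfrac{\lr^2}{2}\Bigr) \;\le\; -\lr\,\E[\lzo(y_t,\yht)] + \tfrac{\lr^2}{2}.
\end{equation*}
Telescoping from $t=1$ to $T$ gives $\log\Phi_{T+1} \le -\lr\,L_A + \tfrac{\lr^2 T}{2}$, where $L_A := \sum_t \E[\lzo(y_t,\yht)]$.

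Next I would derive the matching lower bound: for any fixed expert $h = h^i$, the final weight satisfies $w_{T+1,i} = \tfrac{1}{n}\exp\bigl(-\lr \sum_t \ell_{t,i}\bigr)$, so $\log\Phi_{T+1} \ge \log w_{T+1,i} = -\log n - \lr\, L_T^i$, where $L_T^i := \sum_t \lzo(y_t,h^i(x_t))$. Chaining the two bounds gives
\begin{equation*}
-\log n - \lr L_T^i \;\le\; -\lr L_A + \tfrac{\lr^2 T}{2} \quad\Longleftrightarrow\quad L_A - L_T^i \;\le\; \frac{\log n}{\lr} + \frac{\lr T}{2}.
\end{equation*}
Substituting the stated tuning $\lr = \sqrt{2\log(n)/T}$ makes both terms on the right equal to $\sqrt{\tfrac{\log(n)T}{2}}$, giving the claimed bound $\sqrt{2\log(n)T}$.

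No step here is a real obstacle; the only care needed is in picking the right pointwise exponential inequality. The looser bound $e^{-\lr x}\le 1-(1-e^{-\lr})x$ would produce a slightly different (not strictly worse, but awkward) constant and would require tuning $\lr$ via a different closed form; the quadratic bound $e^{-\lr x}\le 1-\lr x + \lr^2 x^2/2$ I use above is what cleanly produces $\sqrt{2\log(n)T}$ with the proposed $\lr$. Finally, this is a regret bound in expectation over the algorithm's internal randomization $i_t\sim v_t$; nothing probabilistic is assumed about $(x_t,y_t)$, which matches the adversarial setting of Section~\ref{sec:formalmodel}.
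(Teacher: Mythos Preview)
Your proof is correct and is essentially the same argument as the paper's: both bound the normalization factor $Z_t = \sum_i v_{t,i}\exp(-\lr\ell_{t,i})$ via the same pointwise inequality $e^{-x}\le 1 - x + x^2/2$, telescope, and tune $\lr$. The only cosmetic difference is that the paper packages the telescoping as a relative-entropy ``progress'' $d(\be^i,\bmv_t)-d(\be^i,\bmv_{t+1})$ while you track the log total weight $\log\Phi_t$; since $d(\be^i,\bmv_t) = -\log v_{t,i} = -\log w_{t,i} + \log\Phi_t$, these are the same quantity up to the fixed cumulative term $-\lr\sum_{s<t}\ell_{s,i}$, so the two derivations coincide line by line.
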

\begin{proof}
Recalling that $\lv_t = \frac{| \bh_t - \yht \bm{1}|}{2}$, we have that $\bmv_t \cdot \lv_t = \E [\lzo(y_t,\yht)]$ and that $\be^i \cdot \lv_t = \lzo(y_t,h^i(x_t))$. In what follows, we will therefore bound $\bmv_t \cdot \lv_t -\be^i \cdot \lv_t $.
We first prove the following ``progress versus regret'' inequality.
\begin{equation}\label{eq:penhed}
\bmv_t \cdot \lv_t - \be^i \cdot \lv_t \le \frac{1}{\eta} \left(d(\be^i,\bmv_t) - d(\be^i,\bmv_{t+1})\right) + \frac{\eta}{2}  \sum_{i=1}^n v_{t,i} \ell^2_{t,i} .
\end{equation}
Let $Z_t := \sum_{i=1}^n v_{t,i} \exp(-\eta \ell_{t,i})$.  Defining $d(\bu,\bmv)$ as the relative entropy between $\bu$ and $\bmv \in \Delta_n$, observe that from the algorithm 
\begin{align}
d(\be^i,\bmv_t) - d(\be^i,\bmv_{t+1}) & = \sum_{j=1}^n e^i_j \log \frac{v_{t+1,j}}{v_{t,j}} \notag\\
& = -\eta \sum_{j=1}^n e^i_j  \ell_{t,j} -  \log Z_t \notag\\
& = -\eta  \be^i  \cdot \lv_t - \log \sum_{i=1}^n v_{t,i} \exp(-\eta \ell_{t,i}) \notag\\
& \ge -\eta  \be^i  \cdot \lv_t -  \log \sum_{i=1}^n v_{t,i} (1 - \eta \ell_{t,i} +\frac{1}{2}\eta^2 \ell^2_{t,i})\label{eq:paa} \\
& = -\eta  \be^i \cdot \lv_t -  \log (1 - \eta \bmv_t \cdot \lv_t +\frac{1}{2}\eta^2\sum_{i=1}^n v_{t,i} \ell^2_{t,i})  \notag\\
& \ge \eta ( \bmv_t \cdot \lv_t -  \be^i  \cdot \lv_t ) -\frac{1}{2}\eta^2\sum_{i=1}^n v_{t,i} \ell^2_{t,i} \label{eq:pbb}
\end{align}
using inequalities $e^{-x} \le 1 - x + \frac{x^2}{2} \text{ for } x\ge 0$ and $\log(1+x) \le x$ for~\eqref{eq:paa} and~\eqref{eq:pbb} respectively.

Summing over $t$ and rearranging we have 
\begin{align}
\sum_{t=1}^m   \left( \bmv_t \cdot \lv_t -  \be^i \cdot \lv_t\right) & \le \frac{1}{\eta} \left(d( \be^i,\bmv_1) - d( \be^i,\bmv_{m+1})\right)+\frac{\eta}{2} \sum_{t=1}^T\sum_{i=1}^n v_{t,i} \ell^2_{t,i} \notag\\
& \le \frac{\log n}{\eta}  +\frac{\eta}{2}\sum_{t=1}^T\sum_{i=1}^n v_{t,i} \ell^2_{t,i} \label{eq:hedgefull} 
\end{align}
where \eqref{eq:hedgefull} comes from noting that $d(\bu,\bmv_1) \le \log n$, $-d(\bu,\bmv_{m+1}) \le 0$, and  $\sum_{t=1}^T\sum_{i=1}^n v_{t,i} \ell^2_{t,i} \le T$.  Finally we substitute the value of $\eta$ and obtain the theorem.
\end{proof}
\subsection{Review of Reproducing Kernel Hilbert Spaces}\label{app:rkhsrev}
For convenience we provide a minimal review of RKHS see~\cite{\RKHSaron,Cristianini2000} for more details.

A real RKHS $\hK$ is induced by a kernel $K:\cX\times\cX\into\Re$.  Where $K$ is a symmetric and positive definite function.  A function $K$ is (strictly) positive definite iff the matrix $(K(x',x''))_{x',x'' \in X}$ is (strictly) positive definite for every finite cardinality $X\subseteq \cX$.  In this paper we are only concerned with strictly positive definite kernels.
 The pre-Hilbert space induced
by kernel $K$ is the set
$H_K: = \mbox{span}(\{K(x,\cdot)\}_{\forall x\in \cX})$ with the
inner product of $f = \sum_{i=1}^m \alpha_i K(x_i,\cdot)$ and $g
= \sum_{j=1}^n \alpha'_j k(x'_j,\cdot)$ defined as $\dotp{f,g}_K := \sum_{i=1}^m
\sum_{j=1}^n \alpha_i \alpha'_j K(x_i,x'_j)$.  The completion of $H_K$ is denoted $\hK$.  Finally the fact that $K$ is positive definite implies the {\em reproducing property:} if $f\in\hK$ and $x\in\cX$ then $f(x) = \dotp{f(\cdot),K(x,\cdot)}_K$.

\subsection{Proof of the Online Gradient Descent Regret Bound}\label{app:perc}
In this section, we will prove expected regret bounds for Online Gradient Descent for both the switching and non-switching case. The proofs are adapted from the material in~\cite{Shalev-Shwartz2011, herbster2001tracking, Zinkevich} (see~\cite{CLW96} for the seminal work on worst case bounds for online gradient descent with the square loss).
Recall that we wish to proof the following for the non-switching case:
\begin{equation}\label{eq:ogd1}
\sum_{t=1}^T \E [\lzo(y_t,\yht)] - \lzo(y_t,h(x_t)) \in \cO\left(\sqrt{\norm{h}^2_K X_K^2 T}\right) \quad (\forall h\in\hKbin)
\end{equation}
where $X_K^2:= \max_{t\in [T]} K(x_t,x_t)$.
For the switching case, we wish to prove
\begin{equation}\label{eq:ogd_switch}
\sum_{t=1}^T \E [\lzo(y_t,\yht)] - \lzo(y_t,h(x_t)) \in \cO\left(\sqrt{\swi \max_t \norm{h_t}^2_K X_K^2 T}\right).
\end{equation}
For simplicity, we prove for an arbitrary inner product $\langle\cdot,\cdot\rangle$ space with induced norm $\norm{\cdot}$. 
The RKHS setting reduces to this setting by identifying $\bxx := K(x,\cdot)$, $\bu := h$, and $\dotp{\bu,\bxx} := h(x)$.
\begin{algorithm}\caption{Randomized Constrained Online Gradient Descent Algorithm}
\begin{algorithmic}
\renewcommand{\algorithmicrequire}{\textbf{Parameters:}} 
\REQUIRE Learning rate $\lr$, radius $\msize$  \vspace{.1truecm}
\renewcommand{\algorithmicrequire}{\textbf{Initialization:}} 
\REQUIRE Initialize $\bw_1 = \bm{0}$  \vspace{.1truecm}
\renewcommand{\algorithmicrequire}{\textbf{For}}
\REQUIRE $t =1,\dots,T$  \vspace{.1truecm}
\STATE $\bullet$ Receive vector $\bxx_t$.
\STATE $\bullet$ Predict 
\begin{equation*} \Yrv \sim \mbox{\sc Uniform}(-1,1) \,;\  
\ybt \!\leftarrow\!\langle \bw_{t}, \bxx_t \rangle \,;\  \yht\! \leftarrow\! \sign(\ybt - \Yrv)\,.\vspace{-.2in}
\end{equation*}
\STATE $\bullet$ Receive label $y_{t} \in \{-1,1\}$\,.\vspace{.1truecm}
\STATE $\bullet$ If {$\ybt y_t \leq 1$} then
 \[\bw_{t}^m \leftarrow  \bw_{t} + \lr y_t \bxx_t\]
\[\bw_{t+1} \leftarrow  P_\msize(\bw_{t}^m) \]
\STATE $\bullet$ Else $\bw_{t}^m \leftarrow  \bw_{t}$\,; \,
$\bw_{t+1} \leftarrow \bw_{t} $

\end{algorithmic}
\label{alg:ogd}
\end{algorithm}

In the following, we define the hinge loss $\lhi(y',y'')= [1-y'y'']_+$ for $y',y''\in \Re$. We define $\bz_t :=  -y_t \bxx_t  [1-y_t \langle \bw_t, \bxx_t \rangle \geq 0] \in \nabla_\bw \lhi(y_t, \langle \bw, \bxx_t \rangle ) $, where $\bw_t$, $\bxx_t$ and $y_t$ are as defined in Algorithm~\ref{alg:ogd}. We denote  $P_\msize(\bw)$ to be the projection into the closed origin-centered ball with radius $\msize$, so that \[ P_\msize (\bw) = \begin{cases}
\bw & \text{ if } \norm{\bw} \leq \msize \\
\msize \frac{\bw}{\norm{\bw}} & \text{ otherwise }.
 \end{cases}
\] We also present a lemma, used as a starting point for both the switching and non-switching proofs.

\begin{lemma} \label{lem:found_ogdp}
For Algorithm~\ref{alg:ogd} and any $\bu$ lying in the convex set $\{\bw:\norm{\bw} \leq \msize\}$,
\[\langle \bw_t - \bu, \bz_t \rangle \leq \frac{1}{2\lr} \left( \norm{\bw_t- \bu}^2 - \norm{\bw_{t+1}- \bu}^2 + \lr^2 \norm{\bz_t}^2 \right)\]
\end{lemma}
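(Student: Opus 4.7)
The plan is to recognize this as the standard ``progress'' inequality for projected online (sub)gradient descent specialized to the hinge-loss subgradient, and to prove it by the familiar ``expand-the-square and use non-expansiveness of the projection'' argument.

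First I would dispose of the trivial case. By definition $\bz_t = -y_t \bxx_t \, [1-y_t\langle\bw_t,\bxx_t\rangle \ge 0]$, so whenever $\ybt y_t > 1$ we have $\bz_t = \bm 0$ and, by the algorithm's \textbf{else} branch, $\bw_{t+1}=\bw_t$; both sides of the claimed inequality are then $0$, so equality holds.

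In the nontrivial case $\ybt y_t \le 1$, the indicator equals $1$ and hence $\bz_t = -y_t\bxx_t$, which gives $\bw_t^m = \bw_t + \lr y_t \bxx_t = \bw_t - \lr\bz_t$, and $\bw_{t+1} = P_\msize(\bw_t - \lr \bz_t)$. The key ingredient is the non-expansiveness of the Euclidean projection onto a convex set: for any $\bu$ with $\norm{\bu}\le \msize$ we have $P_\msize(\bu)=\bu$, so
\[
\norm{\bw_{t+1}-\bu}^2 \;=\; \norm{P_\msize(\bw_t-\lr\bz_t)-P_\msize(\bu)}^2 \;\le\; \norm{\bw_t-\lr\bz_t-\bu}^2.
\]
Expanding the square on the right gives
\[
\norm{\bw_t-\lr\bz_t-\bu}^2 \;=\; \norm{\bw_t-\bu}^2 - 2\lr\,\langle \bw_t-\bu,\bz_t\rangle + \lr^2\norm{\bz_t}^2.
\]
Combining these two displays and rearranging yields
\[
\langle \bw_t-\bu,\bz_t\rangle \;\le\; \frac{1}{2\lr}\bigl(\norm{\bw_t-\bu}^2 - \norm{\bw_{t+1}-\bu}^2 + \lr^2\norm{\bz_t}^2\bigr),
\]
which is the statement of the lemma.

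There is essentially no obstacle here: the only point that deserves care is to verify that the case split on the indicator in $\bz_t$ correctly lines up with the algorithmic case split on $\ybt y_t \le 1$, and to note that $P_\msize(\bu)=\bu$ for the comparator $\bu$ in the feasible ball, so that non-expansiveness can be applied with $\bu$ (rather than $P_\msize(\bu)$) on the right. The inequality thus follows with no loss for every $\bu$ in the constraint set, as claimed.
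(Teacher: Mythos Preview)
Your proof is correct and follows essentially the same route as the paper: expand $\norm{\bw_t-\lr\bz_t-\bu}^2$, invoke the projection property to bound $\norm{\bw_{t+1}-\bu}^2$ by $\norm{\bw_t^m-\bu}^2$, and rearrange. The only cosmetic differences are that the paper phrases the projection step as the Pythagorean inequality rather than non-expansiveness (equivalent here since $\bu$ is feasible), and it handles the update/no-update cases uniformly rather than treating the $\bz_t=\bm 0$ case separately as you do.
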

\begin{proof}
Using the update rule of the algorithm, we have
\begin{align*} 
\norm{\bw_t^m - \bu}^2 &=  \norm{\bw_{t} -\lr \bz_t - \bu}^2 \\
&= \norm{\bw_t - \bu}^2 - 2\lr \langle \bw_t - \bu, \bz_t \rangle + \lr^2 \norm{\bz_t}^2
\end{align*}
Next note that \[ \norm{\bw_{t+1} - \bu}^2\le \norm{\bw_{t+1} - \bw_t^m}^2+\norm{\bw_{t+1} - \bu}^2 \leq \norm{\bw_t^m - \bu}^2\] 
where the rightmost inequality is the Pythogorean inequality for projection onto a convex set where  $\bw_{t+1}$  is the projection of  $\bw_t^m$ on to the convex set $\{\bw:\norm{\bw} \leq \msize\}$ which  contains $\bu$. Thus,
\[
\norm {\bw_{t+1} - \bu }^2 \leq  \norm{\bw_t - \bu}^2 - 2\lr \langle \bw_t - \bu, \bz_t \rangle + \lr^2 \norm{\bz_t}^2.\]
Rearranging then results in the lemma.
\end{proof}

We will use the following lemma to upper bound the zero-one loss  of our randomized prediction by the hinge loss.
\begin{lemma}\label{lem:shinge_bound}
For $y \in \{-1,1\}$, $\yb \in \Re$, $Y\sim \mbox{\sc Uniform}(-1,1)$,  and $\yh := \sign(\yb - Y)$,
\[2\mathbb{E}[\lzo(y,\yh)] \leq \lhi(y,\yb). \]
\end{lemma}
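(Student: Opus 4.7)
The plan is to directly compute $\mathbb{E}[\lzo(y,\yh)]$ as a function of $\yb$ using the density of $Y$, and then verify the stated inequality by a short case split on where $y\yb$ falls relative to the ``hinge kink'' at $1$.

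By the symmetry $Y \stackrel{d}{=} -Y$ of the uniform distribution on $(-1,1)$, I would first reduce to the case $y=1$: if $y=-1$, then $-Y$ is also uniform on $(-1,1)$, and $\sign(\yb-Y) = -\sign(-\yb-(-Y))$, so the expectation $\mathbb{E}[\lzo(-1,\sign(\yb-Y))]$ equals $\mathbb{E}[\lzo(1,\sign((-\yb)-Y))]$, and simultaneously $\lhi(-1,\yb) = \lhi(1,-\yb)$. So it suffices to prove the inequality for $y=1$.

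With $y=1$, the loss $\lzo(1,\yh) = [\yh\neq 1] = [\yb \le Y]$ (the tie $\yb=Y$ has measure zero and does not affect the expectation). Thus
\[
\mathbb{E}[\lzo(1,\yh)] \;=\; \Pr[Y \geq \yb] \;=\; \begin{cases} 0 & \yb \geq 1, \\ (1-\yb)/2 & -1 \le \yb < 1, \\ 1 & \yb < -1. \end{cases}
\]
Meanwhile $\lhi(1,\yb)=[1-\yb]_+$ equals $0$ for $\yb\ge 1$ and $1-\yb$ otherwise. Comparing in the three cases: for $\yb\ge 1$ both sides are $0$; for $-1\le \yb<1$ we have $2\cdot(1-\yb)/2 = 1-\yb = \lhi(1,\yb)$ (equality); for $\yb<-1$ we have $2\cdot 1 = 2 < 1-\yb = \lhi(1,\yb)$. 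This verifies $2\mathbb{E}[\lzo(1,\yh)] \le \lhi(1,\yb)$ in every case, completing the proof.

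There is no real obstacle here; the only mild subtlety is the symmetry reduction to $y=1$, and making sure the measure-zero tie event $\yb=Y$ is handled correctly. The rest is an elementary three-case comparison of a piecewise linear function against the hinge.
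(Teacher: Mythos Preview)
Your proof is correct and follows essentially the same approach as the paper's: compute $\mathbb{E}[\lzo(y,\yh)]$ explicitly as a piecewise function and compare case by case with the hinge loss. Your symmetry reduction to $y=1$ is a tidy organizational choice that halves the casework, but the underlying argument is the same direct computation the paper gives.
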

\begin{proof}
We have
\[
p(\yh=1) = 
\begin{cases}
0 & \text{ if } \yb \leq -1 \\
\frac{1}{2}(1+{\yb}) & \text{ if } -1 < \yb \leq 1\\
1 & \text{ if } \ybt> 1
\end{cases}
\]
and
\[
p(\yh=-1) = 
\begin{cases}
1 & \text{ if } \yb \leq -1 \\
\frac{1}{2}(1 - \yb) & \text{ if } -1 < \yb \leq 1\\
0 & \text{ if } \yb > 1.
\end{cases}
\]
The possible cases are as follows.
\begin{enumerate}
\item If $|\yb| < 1$ then $2 \mathbb{E}[\lzo(y,\yh)] = \lhi(y,\yb)$. This is since if $y=1$ then $\mathbb{E}[\lzo(y,\yh)] = \frac{1}{2}(1 - {\yb})$ and $\lhi(y,\yb) = 1 - \yb$. Similarly if $y=-1$ then $\mathbb{E}[\lzo(y,\yh)] = \frac{1}{2}(1+\yb)$ and $\lhi(y,\yb) =(1 +\yb	) $.
\item If $|\yb| \geq 1$ and
$\mathbb{E}[\lzo(y,\yh)]=0$, then $\lhi( y,\yb)=  [1 - |\yb|]_+ = 0$. 
\item If $|\yb| \geq 1$ and $\mathbb{E}[ \lzo(y,\yh)]=1$ then,
 $\lhi( y,\yb) = [1 + |\yb|]_+ \geq 2 = 2 \mathbb{E}[\lzo(y,\yh)]. $
\end{enumerate}
\end{proof}

\subsubsection{Non-switching bound}

\begin{lemma}\label{lem:hl_bound-ns}
For Algorithm~\ref{alg:ogd}, 
given $X= \max_t \norm{ \bxx_t}$, $\norm{\bu} \leq U$ and $\lr = \frac{U }{X \sqrt{T}}$
we have that
\begin{equation}\label{eq:bdhi}
\sum_{t=1}^T \lhi(y_t,\ybt) - \lhi(y_t,\langle \bu, \bxx_t \rangle) \leq \sqrt{U^2 X^2 T}\,,
\end{equation}
for any vector $\bu$.
\end{lemma}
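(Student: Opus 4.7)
The plan is to follow the standard Zinkevich-style online convex optimization argument, using Lemma~\ref{lem:found_ogdp} as the main workhorse. The key reduction is from the hinge loss regret to a linear (gradient-based) regret via convexity of $\lhi$.

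First, I would observe that since $\lhi(y,\cdot)$ is convex and $\bz_t\in\partial_{\bw} \lhi(y_t,\langle \bw,\bxx_t\rangle)\big|_{\bw=\bw_t}$ by construction in Algorithm~\ref{alg:ogd} (note that the update is only nontrivial when $y_t\ybt\le 1$, i.e., exactly when the hinge loss is active, which matches the definition of $\bz_t$), the subgradient inequality gives
\[
\lhi(y_t,\ybt)-\lhi(y_t,\langle \bu,\bxx_t\rangle)\;\le\;\langle \bw_t-\bu,\bz_t\rangle
\]
for every $t$. Summing in $t$ reduces the lemma to bounding $\sum_{t=1}^T \langle \bw_t-\bu,\bz_t\rangle$.

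Next, I would apply Lemma~\ref{lem:found_ogdp} with $\msize=U$ (so that $\bu$, with $\norm{\bu}\le U$, lies in the feasible ball), obtaining
\[
\sum_{t=1}^T \langle \bw_t-\bu,\bz_t\rangle \;\le\; \frac{1}{2\lr}\sum_{t=1}^T\!\left(\norm{\bw_t-\bu}^2-\norm{\bw_{t+1}-\bu}^2\right)+\frac{\lr}{2}\sum_{t=1}^T \norm{\bz_t}^2.
\]
The first sum telescopes to $\norm{\bw_1-\bu}^2-\norm{\bw_{T+1}-\bu}^2\le \norm{\bu}^2\le U^2$, using $\bw_1=\bm{0}$ and discarding the nonpositive tail. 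For the second sum, I would use that $\bz_t$ is either $\bm{0}$ or $-y_t\bxx_t$ with $y_t\in\{-1,1\}$, so $\norm{\bz_t}^2\le \norm{\bxx_t}^2\le X^2$, giving $\sum_t \norm{\bz_t}^2\le X^2 T$.

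Combining, one gets $\sum_{t=1}^T \lhi(y_t,\ybt)-\lhi(y_t,\langle \bu,\bxx_t\rangle)\le \frac{U^2}{2\lr}+\frac{\lr X^2 T}{2}$. Substituting the stated choice $\lr=U/(X\sqrt{T})$ balances the two terms and yields the claimed bound $\sqrt{U^2 X^2 T}$. There is no real obstacle here: the only subtle point is verifying that $\bz_t$ as defined in Algorithm~\ref{alg:ogd} (including the conditional update) is indeed a valid subgradient of the hinge loss at $\bw_t$ and that the projection step preserves the Pythagorean property used in Lemma~\ref{lem:found_ogdp}, both of which are standard and already built into that lemma's statement.
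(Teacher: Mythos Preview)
Your proposal is correct and follows essentially the same argument as the paper: reduce the hinge-loss regret to $\sum_t\langle \bw_t-\bu,\bz_t\rangle$ via convexity, apply Lemma~\ref{lem:found_ogdp}, telescope using $\bw_1=\bm{0}$, bound $\norm{\bz_t}^2\le X^2$, and substitute the learning rate. The only cosmetic difference is that you make the choice $\msize=U$ explicit, whereas the paper leaves it implicit.
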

\begin{proof}
Using the convexity of the hinge loss (with respect to its second argument), we have
\[ 
 \lhi(y_t,\ybt) - \lhi(y_t,\langle \bu, \bxx_t \rangle) \leq \langle \bw_t - \bu, \bz_t \rangle.
\]
We may therefore proceed by bounding $ \sum_{t=1}^T \langle \bw_t - \bu, \bz_t \rangle$.
Starting with Lemma~\ref{lem:found_ogdp} and summing over $t$, we have 
\begin{align}
\sum_{t=1}^T \langle \bw_t - \bu, \bz_t \rangle &\leq \frac{1}{2\lr} \left( \norm{\bw_1- \bu}^2 - \norm{\bw_{T+1}- \bu}^2 + \lr^2 \sum_{t=1}^T \norm{\bz_t}^2 \right) \notag \\
& \leq \frac{1}{2\lr} \left( \norm{\bu}^2 + \lr^2 \sum_{t=1}^T \norm{\bz_t}^2 \right) \label{eq:w0-def}\\
 &=  \frac{1}{2\lr} \norm{\bu}^2 + \frac{\lr}{2 } \sum_{t=1}^T \norm{\bxx_t}^2 [1-y_t \langle \bw_t, \bxx_t \rangle \geq 0] \notag\\
 &\leq  \frac{1}{2\lr} \norm{\bu}^2 +  \frac{\lr}{2 }\sum_{t=1}^T \norm{\bxx_t}^2  \notag\\
 &\leq \frac{1}{2\lr} U^2 + \frac{\lr}{2 } X^2 T \notag\\
 &=   \sqrt{U^2X^2T} \notag
\end{align}
where Equation~\eqref{eq:w0-def} results from the fact that $\bw_1 = 0$.
\end{proof}

 \begin{theorem}\label{thm:ogd_static_fin}
 For Algorithm~\ref{alg:ogd}, given $X= \max_t \norm{ \bxx_t}$, $  \norm{\bu} \leq U$, $\lr = \frac{U}{X \sqrt{T}}$,  \[ \sum_{t=1}^T \Exp[\lzo(y_t,\yht)] -  \lzo (y_t ,\langle \bu, \bxx_t \rangle )\leq \frac{1}{2}  \sqrt{U^2 X^2 T}\,,\]
 \end{theorem}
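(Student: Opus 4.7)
The plan is to chain together the two preceding lemmas (Lemma~\ref{lem:shinge_bound} and Lemma~\ref{lem:hl_bound-ns}) and then convert the hinge loss of the comparator back into a $0$--$1$ loss. The key observation that makes this work is that the theorem implicitly assumes the comparator corresponds to an element of $\hKbin$, so $\langle \bu,\bxx_t\rangle \in \{-1,1\}$ for every trial~$t$. Under this constraint, a direct case analysis yields $\lhi(y_t,\langle \bu,\bxx_t\rangle) = 2\lzo(y_t,\langle \bu,\bxx_t\rangle)$: when $y_t=\langle \bu,\bxx_t\rangle$ both sides are zero, and when they disagree we have $[1-(-1)]_{+}=2=2\cdot 1$.

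First I would apply Lemma~\ref{lem:shinge_bound} pointwise and sum, giving
\[
2\sum_{t=1}^{T}\Exp[\lzo(y_t,\yht)] \;\le\; \sum_{t=1}^{T}\lhi(y_t,\ybt).
\]
Next I would invoke Lemma~\ref{lem:hl_bound-ns} with $\bu$ (which satisfies $\|\bu\|\le U$ by hypothesis) to bound the right-hand side by $\sqrt{U^2 X^2 T} + \sum_{t=1}^{T}\lhi(y_t,\langle \bu,\bxx_t\rangle)$. Using the identity $\lhi(y_t,\langle \bu,\bxx_t\rangle) = 2\lzo(y_t,\langle \bu,\bxx_t\rangle)$ from the previous paragraph, this becomes $\sqrt{U^2 X^2 T} + 2\sum_{t=1}^{T}\lzo(y_t,\langle \bu,\bxx_t\rangle)$.

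Combining the two inequalities and dividing through by $2$ would yield
\[
\sum_{t=1}^{T}\Exp[\lzo(y_t,\yht)] - \sum_{t=1}^{T}\lzo(y_t,\langle \bu,\bxx_t\rangle) \;\le\; \tfrac{1}{2}\sqrt{U^2 X^2 T},
\]
which is exactly the claimed bound.

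There is no real obstacle here: both lemmas do the heavy lifting, and the only nontrivial step is recognizing that the comparator must be binary-valued on the received instance sequence (consistent with the earlier discussion of $\hKbin$) so that one can convert hinge loss to twice the $0$--$1$ loss without slack. The tuning $\lr = U/(X\sqrt{T})$ required by Lemma~\ref{lem:hl_bound-ns} is assumed in the theorem statement, so no further optimization of parameters is needed.
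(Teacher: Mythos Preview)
Your proposal is correct and follows essentially the same argument as the paper: apply Lemma~\ref{lem:shinge_bound} to replace $2\Exp[\lzo(y_t,\yht)]$ by $\lhi(y_t,\ybt)$, invoke Lemma~\ref{lem:hl_bound-ns}, use the identity $\lhi(y_t,\langle \bu,\bxx_t\rangle)=2\lzo(y_t,\langle \bu,\bxx_t\rangle)$ under the assumption $|\langle \bu,\bxx_t\rangle|=1$, and divide by two. Your explicit justification of that identity via a case split is the same observation the paper records more tersely.
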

 for any vector $\bu$ such that $|\langle \bu, \bxx_t \rangle| = 1$ for $t= 1,\ldots, T$.
\begin{proof}
Applying the lower bound on the hinge loss from Lemma~\ref{lem:shinge_bound} to~\eqref{eq:bdhi} gives
\begin{equation*}
2  \sum_{t=1}^T \Exp[\lzo(y_t,\yht)]  \leq \sum_{t=1}^T \lhi(y_t,\langle \bu, \bxx_t \rangle) + \sqrt{U^2 X^2 T}\,,
\end{equation*}
observe that $\lhi(y_t,\langle \bu, \bxx_t \rangle) = 2 \lzo (y_t ,\langle \bu, \bxx_t \rangle )$ since we have 
the  condition $|\langle \bu, \bxx_t \rangle| = 1$ for $t= 1,\ldots, T$ dividing both sides by 2 proves the theorem.
\end{proof}
The bound for the non-switching case in~\eqref{eq:ogd1} then follows by setting $U= ||\bu||$.
\subsubsection{Switching bound}

\begin{lemma}\label{lem:hl_ogd_switch}
For Algorithm~\ref{alg:ogd}, given $X= \max_t \norm{ \bxx_t}$, $\{\bu_1, \ldots \bu_T \} \subset \{\bu:\norm{\bu} \leq \msize\}$,  $\lr = \frac{U}{X \sqrt{T}}$ 
and \, $  \sqrt{   \norm{\bu_{T}}^2 + 2 \msize\sum_{t=1}^{T-1}  \norm{\bu_{t+1} - \bu_t} } \leq U$, we have that
\[ \sum_{t=1}^T \lhi(y_t,\ybt) - \lhi(y_t,\langle \bu_t, \bxx_t \rangle) \leq  \sqrt{U^2 X^2T}.\]
\end{lemma}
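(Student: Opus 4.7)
The plan is to mimic the static proof of Lemma~\ref{lem:hl_bound-ns} but to replace the single comparator $\bu$ with the shifting sequence $\bu_1,\ldots,\bu_T$, paying a price in the telescoping step that matches the ``drift'' quantity $\sum_{t=1}^{T-1}\|\bu_{t+1}-\bu_t\|$ on which the hypothesis $U$ is defined.

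First I would invoke convexity of the hinge loss in its second argument to write, per trial,
\[
\lhi(y_t,\ybt)-\lhi(y_t,\langle\bu_t,\bxx_t\rangle)\le \langle \bw_t-\bu_t,\bz_t\rangle,
\]
and then apply Lemma~\ref{lem:found_ogdp} (which holds verbatim with $\bu$ replaced by any $\bu_t$ in the ball of radius $\msize$) to obtain
\[
\langle \bw_t-\bu_t,\bz_t\rangle\le \frac{1}{2\lr}\!\left(\|\bw_t-\bu_t\|^2-\|\bw_{t+1}-\bu_t\|^2\right)+\frac{\lr}{2}\|\bz_t\|^2.
\]

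The key step is to handle the non-telescoping sum $S:=\sum_{t=1}^T\bigl(\|\bw_t-\bu_t\|^2-\|\bw_{t+1}-\bu_t\|^2\bigr)$. I would re-index to pair $-\|\bw_{t+1}-\bu_t\|^2$ with $+\|\bw_{t+1}-\bu_{t+1}\|^2$ (the ``shifted'' Bregman trick for tracking), giving
\[
S=\|\bw_1-\bu_1\|^2-\|\bw_{T+1}-\bu_T\|^2+\sum_{t=2}^T\bigl(\|\bw_t-\bu_t\|^2-\|\bw_t-\bu_{t-1}\|^2\bigr).
\]
Expanding each bracket as $\|\bu_t\|^2-\|\bu_{t-1}\|^2-2\langle\bw_t,\bu_t-\bu_{t-1}\rangle$, the $\|\bu_t\|^2-\|\bu_{t-1}\|^2$ part telescopes to $\|\bu_T\|^2-\|\bu_1\|^2$, which combines with $\|\bw_1-\bu_1\|^2=\|\bu_1\|^2$ (since $\bw_1=\mathbf{0}$) to leave $\|\bu_T\|^2$. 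The inner-product part is bounded by Cauchy--Schwarz together with $\|\bw_t\|\le\msize$ (which is guaranteed by the projection $P_\msize$ in Algorithm~\ref{alg:ogd}), yielding $2\msize\sum_{t=1}^{T-1}\|\bu_{t+1}-\bu_t\|$. Dropping the non-positive $-\|\bw_{T+1}-\bu_T\|^2$ term, I arrive at
\[
S\le \|\bu_T\|^2+2\msize\sum_{t=1}^{T-1}\|\bu_{t+1}-\bu_t\|\le U^2,
\]
by the hypothesis on $U$.

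Plugging this into the summed per-trial bound and using $\|\bz_t\|\le X$ to dominate $\sum_t\|\bz_t\|^2\le X^2T$, I obtain
\[
\sum_{t=1}^T\lhi(y_t,\ybt)-\lhi(y_t,\langle\bu_t,\bxx_t\rangle)\le \frac{U^2}{2\lr}+\frac{\lr X^2 T}{2},
\]
and the stated tuning $\lr=U/(X\sqrt T)$ balances the two terms to give $\sqrt{U^2 X^2 T}$. The main obstacle is really only the bookkeeping in the shifted telescoping step; the use of the projection step in Algorithm~\ref{alg:ogd} (to guarantee $\|\bw_t\|\le \msize$) is what converts the naive drift cost into the clean $2\msize\|\bu_{t+1}-\bu_t\|$ bound that appears in the hypothesis on $U$.
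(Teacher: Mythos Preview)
Your proof is correct and follows essentially the same route as the paper's own proof: convexity of the hinge loss, Lemma~\ref{lem:found_ogdp} applied with the time-varying comparator, a shifted-telescoping rearrangement of $\sum_t(\|\bw_t-\bu_t\|^2-\|\bw_{t+1}-\bu_t\|^2)$, and Cauchy--Schwarz combined with $\|\bw_t\|\le\msize$ from the projection. The only cosmetic difference is that the paper pairs $\|\bw_{t+1}-\bu_t\|^2$ with $\|\bw_{t+1}-\bu_{t+1}\|^2$ (forward shift) whereas you pair $\|\bw_t-\bu_t\|^2$ with $\|\bw_t-\bu_{t-1}\|^2$ (backward shift); after re-indexing these are identical, and both lead to the same bound $S\le\|\bu_T\|^2+2\msize\sum_{t=1}^{T-1}\|\bu_{t+1}-\bu_t\|$.
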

\begin{proof}
Using the convexity of the hinge loss (with respect to its second argument), we have
\[ 
 \lhi(y_t,\ybt) - \lhi(y_t,\langle \bu_t, \bxx_t \rangle) \leq \langle \bw_t - \bu_t, \bz_t \rangle.
\]
We may therefore proceed by bounding $ \sum_{t=1}^T \langle \bw_t - \bu_t, \bz_t \rangle$.
Starting with Lemma~\ref{lem:found_ogdp} and summing over $t$, we have 
\begin{align}
\sum_{t=1}^T \langle \bw_t - \bu_t, \bz_t \rangle & \leq \frac{1}{2\lr}  \sum_{t=1}^T  \left( \norm{\bw_t- \bu_{t}}^2 - \norm{\bw_{t+1}- \bu_t}^2 + \lr^2  \norm{\bz_t}^2 \right) \label{eq:start_ogdsw}
\end{align}
To transform the right hand side of the above equation into a telescoping sum, we add and subtract the term $A_t =\norm{\bw_{t+1}- \bu_{t}}^2 - \norm{\bw_{t+1}- \bu_{t+1}}^2 $, giving
\begin{align}
\sum_{t=1}^T \norm{\bw_t- \bu_{t}}^2 - \norm{\bw_{t+1}- \bu_t}^2
&= \sum_{t=1}^T \norm{\bw_t- \bu_{t}}^2 - \norm{\bw_{t+1}- \bu_{t+1}}^2- A_t \notag\\
& =  \norm{ \bu_{1}}^2 - \norm{\bw_{T+1}- \bu_{T+1}}^2 -  \sum_{t=1}^T (\norm{\bw_{t+1}- \bu_{t}}^2 - \norm{\bw_{t+1}- \bu_{t+1}}^2 ) \notag \\
&= \norm{ \bu_{1}}^2 - \norm{\bw_{T+1}- \bu_{T}}^2 -  \sum_{t=1}^{T-1} (\norm{\bw_{t+1}- \bu_{t}}^2 - \norm{\bw_{t+1}- \bu_{t+1}}^2 )\label{eq:eval_T} \\
&\leq \norm{ \bu_{1}}^2  -  \sum_{t=1}^{T-1} (\norm{\bw_{t+1}- \bu_{t}}^2 - \norm{\bw_{t+1}- \bu_{t+1}}^2 ) \label{eq:before_subs},
\end{align}
where Equation~\eqref{eq:eval_T} comes from evaluating $t=T$ in the summation.

Computing the sum, we obtain
\begin{align}
 \sum_{t=1}^{T-1} \norm{\bw_{t+1}- \bu_{t}}^2 - \norm{\bw_{t+1}- \bu_{t+1}}^2  & =  \sum_{t=1}^{T-1} \norm{\bu_t}^2  - \norm{\bu_{t+1}}^2 - 2 \langle \bw_{t+1}, (\bu_{t} - \bu_{t+1}) \rangle \notag  \\ 
& \geq \sum_{t=1}^{T-1} \norm{\bu_t}^2 - \norm{\bu_{t+1}}^2 -   2 \norm{\bw_{t+1}} \norm{\bu_{t} - \bu_{t+1}} \notag  \\
&\geq \norm{\bu_1}^2 -\norm{\bu_{T}}^2 -2 \msize \sum_{t=1}^{T-1} \norm{\bu_{t} - \bu_{t+1}} \label{eq:wtpbound}
\end{align}
where Equation~\eqref{eq:wtpbound} comes from $\norm{\bw_{t+1}} \leq \msize$, a consequence of the projection step.
Substituting this back into Equations~\eqref{eq:start_ogdsw} and~\eqref{eq:before_subs}, we then obtain
\begin{align*}
\sum_{t=1}^T \langle \bw_t - \bu_t, \bz_t \rangle & \leq \frac{1}{2\lr}   \left( \norm{\bu_{T}}^2+  2\msize \sum_{t=1}^{T-1} \norm{\bu_{t} - \bu_{t+1}} + \sum_{t=1}^{T}  \lr^2  \norm{\bz_t}^2 \right) \\
  &\leq \frac{1}{2\lr} U^2 + \frac{\lr}{2} X^2 T.\\
 &=   \sqrt{U^2X^2T},
\end{align*}
where the second inequality comes from the definitions of $\bz_t$, $U$ and  $X$, and the equality comes from the definition of $\lr$.
\end{proof}

 \begin{theorem}\label{thm:ogd_switch_fin}
For Algorithm~\ref{alg:ogd}, given $X= \max_t \norm{ \bxx_t}$, $\{\bu_1, \ldots \bu_T \} \subset \{\bu:\norm{\bu} \leq \msize\}$, and $  \sqrt{ \norm{\bu_{T}}^2+  2 \msize \sum_{t=1}^{T-1}  \norm{\bu_{t+1} - \bu_t}}  \leq U$, and $\lr = \frac{U}{X\sqrt{T}}$ 
we have that
 \[ \sum_{t=1}^T \Exp[\lzo(y_t,\yht)] -  \lzo (y_t ,\langle \bu_t, \bxx_t \rangle )\leq  \frac{1}{2} \sqrt{U^2 X^2T}\,,\]
 \end{theorem}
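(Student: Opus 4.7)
\textbf{Proof proposal for Theorem~\ref{thm:ogd_switch_fin}.} The plan is to combine the hinge-loss regret bound of Lemma~\ref{lem:hl_ogd_switch} with the loss-conversion inequality of Lemma~\ref{lem:shinge_bound} in exactly the same way that Theorem~\ref{thm:ogd_static_fin} was derived from Lemma~\ref{lem:hl_bound-ns}. As in the non-switching case, the intended setting (reducing from the RKHS class $\hKbin$) implicitly provides that each comparator satisfies $\langle\bu_t,\bxx_t\rangle\in\{-1,1\}$, so that $\lhi(y_t,\langle \bu_t,\bxx_t\rangle)=2\lzo(y_t,\langle \bu_t,\bxx_t\rangle)$; this will be the one extra ingredient converting the hinge comparator loss into the $0/1$ comparator loss.

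First, I would invoke Lemma~\ref{lem:hl_ogd_switch}, which (under the assumed bound on the drifting comparator sequence and the prescribed learning rate $\lr=U/(X\sqrt{T})$) gives
\[
\sum_{t=1}^T \lhi(y_t,\ybt) - \lhi(y_t,\langle \bu_t,\bxx_t\rangle) \le \sqrt{U^2 X^2 T}.
\]
Next, I would apply Lemma~\ref{lem:shinge_bound} pointwise on the left-hand side: since $\yht=\sign(\ybt-Y_t)$ with $Y_t\sim\mbox{\sc Uniform}(-1,1)$, we have $2\Exp[\lzo(y_t,\yht)]\le \lhi(y_t,\ybt)$. Summing, this yields
\[
2\sum_{t=1}^T \Exp[\lzo(y_t,\yht)] \le \sum_{t=1}^T \lhi(y_t,\langle \bu_t,\bxx_t\rangle) + \sqrt{U^2 X^2 T}.
\]

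Finally, using $\lhi(y_t,\langle \bu_t,\bxx_t\rangle)=2\lzo(y_t,\langle \bu_t,\bxx_t\rangle)$ (valid whenever $|\langle \bu_t,\bxx_t\rangle|=1$, which is the case for the binary-valued comparators used in the intended application), I would rearrange and divide by $2$ to obtain
\[
\sum_{t=1}^T \Exp[\lzo(y_t,\yht)] - \lzo(y_t,\langle \bu_t,\bxx_t\rangle) \le \tfrac{1}{2}\sqrt{U^2 X^2 T},
\]
which is the claim. There is no real obstacle here beyond the bookkeeping: all the hard work (telescoping the drifting comparator terms, handling the projection step via the Pythagorean inequality, and the choice of $\lr$) is already buried in Lemma~\ref{lem:hl_ogd_switch}. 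The only mildly delicate point is the assumption $|\langle \bu_t,\bxx_t\rangle|=1$, which (as in Theorem~\ref{thm:ogd_static_fin}) is implicit in the binary-valued hypothesis class used to instantiate the switching RKHS bound~\eqref{eq:ogd_switch}.
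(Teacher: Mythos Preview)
Your proposal is correct and follows essentially the same approach as the paper: invoke Lemma~\ref{lem:hl_ogd_switch} as the base hinge-loss inequality, apply Lemma~\ref{lem:shinge_bound} to convert the algorithm's loss, use $|\langle \bu_t,\bxx_t\rangle|=1$ to convert the comparator's hinge loss to zero-one loss, and divide by~$2$. The paper's proof is in fact just a one-line remark that the argument mirrors that of Theorem~\ref{thm:ogd_static_fin} with Lemma~\ref{lem:hl_ogd_switch} in place of Lemma~\ref{lem:hl_bound-ns}, which is exactly what you've written out.
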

 for any sequence of vectors $\bu_1, \ldots \bu_T$ such that $|\langle \bu_t, \bxx_t \rangle| = 1$ for $t= 1,\ldots, T$.
 \begin{proof}
The proof follows the structure of the proof of Theorem~\ref{thm:ogd_static_fin} except that Lemma~\ref{lem:hl_ogd_switch}
is the base inequality.
\end{proof}

The bound for the switching case then follows from Theorem~\ref{thm:ogd_switch_fin} by setting $\msize = \max_t ||\bu_t||$, and $ U=\sqrt{(4 k+1) \max_t ||\bu_t||^2} $, noting that
\begin{align*}
\norm{\bu_{T}}^2+  2 \msize \sum_{t=1}^{T-1}  \norm{\bu_{t+1} - \bu_t}  &\leq \norm{\bu_{T}}^2+2 \max_t ||\bu_t|| \,(2 k \, \max_t|| \bu_t||)\\
&= \norm{\bu_{T}}^2+4 \swi \max_t ||\bu_t||^2 \\
&\leq (4 \swi + 1)\max_t ||\bu_t||^2 \\
&= U^2.
\end{align*}
This gives us a regret bound of 
 \[ \sum_{t=1}^T \Exp[\lzo(y_t,\yht)] -  \lzo (y_t ,\langle \bu_t, \bxx_t \rangle )\leq  \frac{1}{2} \sqrt{(4 k+1) \max_t ||\bu_t||^2 X^2T}\,,\]
 as desired.

\end{document}